\def\eqref#1{equation~\ref{#1}}
\def\1{\bm{1}}
\DeclareMathAlphabet{\mathsfit}{\encodingdefault}{\sfdefault}{m}{sl}
\SetMathAlphabet{\mathsfit}{bold}{\encodingdefault}{\sfdefault}{bx}{n}
\DeclareMathOperator*{\argmax}{arg\,max}
\newtheorem{definition}{{Definition}}[section]
\newtheorem{theorem}{Theorem}[section]
\newtheorem{lemma}{Lemma}[section]
\newcommand{\non}{{\textnormal{sub}}}
\newcommand{\opt}{{\textnormal{opt}}}
\newcommand{\oq}{{\overline{Q}}}
\newcommand{\uq}{{\underline{Q}}}
\newcommand{\ov}{{\overline{V}}}
\newcommand{\uv}{{\underline{V}}}
\def\dmin{\Delta_{\textnormal{min}}}
\title{Q-Learning with Fine-Grained Gap-Dependent Regret}
\author{Haochen Zhang, Zhong Zheng \& Lingzhou Xue\thanks{Zhong Zheng and Lingzhou Xue are co-corresponding authors.}  \\
Department of Statistics, The Pennsylvania State University\\
State College, PA, 16802, USA \\
\texttt{\{hqz5340,zvz5337,lzxue\}@psu.edu}
}
\begin{document}

\maketitle

\begin{abstract}
We study fine-grained gap-dependent regret bounds for model-free reinforcement learning in episodic tabular Markov Decision Processes. Existing model-free algorithms achieve minimax worst-case regret, but their gap-dependent bounds remain coarse and fail to fully capture the structure of suboptimality gaps. To address this limitation, we establish fine-grained gap-dependent regret guarantees for both UCB-based and non-UCB-based algorithms. In the UCB-based setting, we develop a novel analytical framework that explicitly separates the analysis of optimal and suboptimal state-action pairs, yielding the first fine-grained regret upper bound for UCB-Hoeffding \citep{jin2018q}. In the non-UCB-based setting, we revisit the only existing algorithm, AMB \citep{xu2021fine}, and identify two issues in its design and analysis: improper truncation in the $Q$-updates and violation of the martingale difference condition in the concentration argument. To resolve these issues, we propose two refinements of AMB: the UCB-based ULCB-Hoeffding and the non-UCB-based Refined AMB. For ULCB-Hoeffding, we establish the same fine-grained regret bound as UCB-Hoeffding by applying our fine-grained framework, highlighting its broad applicability. For Refined AMB, we derive a rigorous fine-grained gap-dependent regret bound in the non-UCB setting and demonstrate consistent empirical improvements over the original AMB.
\end{abstract}

\section{Introduction}
Reinforcement Learning (RL)~\citep{1998Reinforcement} is a sequential decision-making framework where an agent maximizes cumulative rewards through repeated interactions with the environment. RL algorithms are typically categorized as model-based or model-free methods. Model-free approaches directly learn value functions to optimize policies and are widely used in practice due to their simple implementation \citep{jin2018q} and low memory requirements, which scale linearly with the number of states. In contrast, model-based methods require quadratic memory costs.

In this paper, we focus on model-free RL for episodic tabular Markov Decision Processes (MDPs) with inhomogeneous transition kernels. Specifically, we consider an episodic tabular MDP with $S$ states, $A$ actions, and $H$ steps per episode. For such MDPs, the minimax regret lower bound over $K$ episodes is $\Omega(\sqrt{H^2SAT})$, where $T = KH$ is the total number of steps \citep{jin2018q}. 

Many model-free algorithms achieve $\sqrt{T}$-type regret bounds \citep{jin2018q, zhang2020almost, li2021breaking, xu2021fine, zhang2025regret}, with two \citep{zhang2020almost, li2021breaking} matching the minimax bound up to logarithmic factors. Except for AMB \citep{xu2021fine}, which uses a novel multi-step bootstrapping technique, all these methods rely on the Upper Confidence Bound (UCB) approach to drive exploration via optimistic value estimates.

In practice, RL algorithms often outperform their worst-case guarantees when positive suboptimality gaps exist, meaning the best action at each state is better than the others by some margin. In the model-free setting, for UCB-based algorithms, \citet{yang2021q} proved the first gap-dependent regret bound for UCB-Hoeffding \citep{jin2018q}, of order \(\tilde{O}(H^6 SA / \dmin)\), where $\tilde{O}$ hides logarithmic factors and \(\dmin\) is the smallest positive suboptimality gap $\Delta_h(s,a)$ over all state-action-step triples $(s,a,h)$. Later, \citet{zheng2024gap} improved the dependence on \(H\) for UCB-Advantage \citep{zhang2020almost} and Q-EarlySettled-Advantage \citep{li2021breaking}. However, these results rely on a coarse-grained term $SA/\dmin$ instead of the fine-grained $\Delta_h(s,a)$, limiting their tightness.

The only model-free, non-UCB-based algorithm, AMB, achieved the first fine-grained regret upper bound by incorporating two key components: Upper and Lower Confidence Bounds (ULCB) and multi-step bootstrapping. In particular, ULCB leverages both UCB and Lower Confidence Bound (LCB) techniques to select actions by maximizing the width of the confidence interval, and multi-step bootstrapping updates $Q$-estimates with rewards of multiple steps from settled optimal actions. However, as detailed in \Cref{error}, the multi-step bootstrapping procedure encounters two issues in its algorithm design and analysis. Algorithmically, the improper truncation in the multi-step bootstrapping update (see lines 13-14 in Algorithm 1 of \cite{xu2021fine}) breaks the key link between the $Q$-estimates and historical $V$-estimates (see their Equation (A.5)) that is essential for the analysis. Theoretically, the concentration inequalities are incorrectly applied by centering the estimators induced by multi-step bootstrapping on their expectations rather than on their conditional expectations (see their Equation (4.2) and Lemma 4.1), violating the required martingale difference conditions. These issues cast doubt on whether a rigorous fine-grained gap-dependent regret bound can be established for non-UCB-based AMB algorithms.

In contrast, recent model-based works \citep{simchowitz2019non, dann2021beyond, chen2025sharp} have achieved fine-grained gap-dependent regret bounds of the following form:
\begin{equation}
\label{fineg}
    \tilde{O}\Bigg( \Bigg(\sum_{h = 1}^H\sum_{\Delta_{h}(s,a)>0}\frac{1}{\Delta_{h}(s,a)}+ \frac{|Z_{\opt}|}{\Delta_{\textnormal{min}}} + SA\Bigg)\mbox{poly}(H)\Bigg),
\end{equation}
where $|Z_{\opt}|$ denotes the number of optimal $(s,a,h)$ triples. These results incorporate individual suboptimality gaps $\Delta_h(s,a)$ and significantly reduce reliance on the global factor $1/\Delta_{\textnormal{min}}$. This progress naturally leads to the following open question:
\begin{center}
    \textit{Can we establish rigorous fine-grained gap-dependent regret upper bounds for model-free RL with individual suboptimality gaps $\Delta_h(s,a)$ and improved dependence on $1/\Delta_{\textnormal{min}}$?
}
\end{center}
Answering this question is challenging. \textbf{For UCB-based algorithms}, establishing fine-grained gap-dependent regret requires novel techniques, particularly in bounding the cumulative weighted estimation error of $Q$-estimates. Existing works~\citep{yang2021q, zheng2024gap} treat all state-action pairs uniformly in the analysis. However, it is insufficient for deriving fine-grained results, as optimal and suboptimal pairs exhibit significantly different visitation patterns: suboptimal pairs are typically visited only $\hat{O}(\log T)$ times \citep{zhang2025gapdependent}, where $\hat{O}$ captures only the dependence on $T$.  Ignoring this imbalance leads to loose bounds and an overly conservative dependence on $1/\Delta_{\textnormal{min}}$. \textbf{Regarding the non-UCB-based algorithm AMB}, it remains unclear whether the two estimators induced by multi-step bootstrapping jointly form an unbiased estimate of the optimal $Q$-function due to the randomness of the bootstrapping step. This property is crucial for the concentration analysis used to establish the optimism of AMB, yet it was not shown in \citet{xu2021fine}.

In this paper, we give an affirmative answer to the open question above by establishing \textbf{the first fine-grained gap-dependent regret upper bound for UCB-based algorithms}, along with rigorous counterparts for non-UCB-based algorithms. Our main contributions are summarized below:

\textbf{A Novel Fine-Grained Analytical Framework for UCB-Based Algorithms.} We develop a novel framework that explicitly distinguishes the visitation frequencies of optimal and suboptimal state-action pairs. Using this framework, we establish the first fine-grained, gap-dependent regret bound for a popular UCB-based algorithm, namely UCB-Hoeffding \citep{jin2018q}. As shown in \Cref{experiment}, UCB-Hoeffding demonstrates improved empirical performance compared to AMB.

\textbf{Two Refinements of the AMB Algorithm with Rigorous Fine-Grained Analysis.} In \Cref{error}, we revisit the AMB algorithm and identify both algorithmic and analytical issues that undermine its theoretical guarantees. We then propose two refinements of the AMB algorithm:
\begin{itemize}[leftmargin = 10pt]
    \item \textbf{UCB-Based Refinement.} ULCB-Hoeffding in \Cref{introulcb} simplifies AMB by removing its problematic multi-step bootstrapping and retaining only the ULCB mechanism. Using our UCB-based framework, we show that ULCB-Hoeffding achieves a fine-grained regret bound, demonstrating that algorithms relying solely on the ULCB principle can also achieve fine-grained guarantees. This further underscores the generality of our UCB-based fine-grained analytical framework.
    \item \textbf{Non-UCB-Based Refinement.} We also propose Refined AMB, a non-UCB-based refinement that incorporates both ULCB and multi-step bootstrapping techniques. It has the following improvement: (i) removes improper truncations in the $Q$-updates, (ii) rigorously proves that the estimators induced by multi-step bootstrapping form an unbiased estimate of the optimal $Q$-function, (iii) ensures the martingale difference condition holds, which justifies applying concentration inequalities to these estimators, and (iv) establishes tighter confidence bounds. These refinements allow us to rigorously prove the fine-grained regret upper bound for a non-UCB-based algorithm and yield enhanced empirical performance, as shown in \Cref{experiment}.
\end{itemize}
\textbf{Technical Novelty.} Our work introduces the following key technical innovations: 
(a) We analyze each state-action pair separately at every step, enabling a fine-grained upper bound on the cumulative weighted estimation error of the $Q$-estimates (\Cref{singlerecurQmain,recurQmain}). 
(b) We establish a recursive relationship for cumulative weighted visitation counts across steps, supporting an inductive argument to obtain a fine-grained upper bound (\Cref{weightvisitmain}), from which the expected regret upper bound follows (\Cref{expectedrmain}).
(c) We perform a state-specific decomposition of conditional expectations in the concentration analysis of Refined AMB, enabling a recursive argument and induction over steps to show that the sum of two multi-step bootstrapping estimators is unbiased (\Cref{ulqmain} and \Cref{31proof}). The first two innovations, (a) and (b), form the core of our fine-grained analytical framework, extending its applicability to a wider range of model-free RL algorithms, while (c) offers a general technique for analyzing algorithms with multi-step bootstrapping.

\section{Preliminaries}
In this paper, for any $C \in \mathbb{N}_+$, we denote by $[C]$ the set $\{1, 2, \ldots, C\}$. We write $\mathbb{I}[x]$ for the indicator function, which takes the value one if the event $x$ is true, and zero otherwise. We also set $\iota = \log (2SAT/p)$ with failure probability $p\in (0,1)$ throughout this paper.

\textbf{Tabular Episodic Markov Decision Process (MDP).}
A tabular episodic MDP is denoted as $\mathcal{M}:=(\mathcal{S}, \mathcal{A}, H, \mathbb{P}, r)$, where $\mathcal{S}$ is the set of states with $|\mathcal{S}|=S, \mathcal{A}$ is the set of actions with $|\mathcal{A}|=A$, $H$ is the number of steps in each episode, $\mathbb{P}:=\{\mathbb{P}_h\}_{h=1}^H$ is the transition kernel so that $\mathbb{P}_h(\cdot \mid s, a)$ characterizes the distribution over the next state given the state-action pair $(s,a)$ at step $h$, and $r:=\{r_h\}_{h=1}^H$ are the deterministic reward functions with $r_h(s,a)\in [0,1]$.
	
In each episode, an initial state $s_1$ is selected arbitrarily by an adversary. Then, at each step $h \in[H]$, an agent observes a state $s_h \in \mathcal{S}$, picks an action $a_h \in \mathcal{A}$, receives the reward $r_h = r_h(s_h,a_h)$ and then transits to the next state $s_{h+1}$. The episode ends when an absorbing state $s_{H+1}$ is reached.

\textbf{Policies and Value Functions.}
	A policy $\pi$ is a collection of $H$ functions $\left\{\pi_h: \mathcal{S} \rightarrow \Delta^\mathcal{A}\right\}_{h =1}^H$, where $\Delta^\mathcal{A}$ is the set of probability distributions over $\mathcal{A}$. A policy is deterministic if for any $s\in\mathcal{S}$,  $\pi_h(s)$ concentrates all the probability mass on an action $a\in\mathcal{A}$. In this case, we denote $\pi_h(s) = a$. Let $V_h^\pi: \mathcal{S} \rightarrow \mathbb{R}$ denote the state value function at step $h$ under policy $\pi$. Formally, $V_h^\pi(s):=\sum_{h^{\prime}=h}^H \mathbb{E}_{(s_{h^{\prime}},a_{h^{\prime}})\sim(\mathbb{P}, \pi)}\left[r_{h^{\prime}}(s_{h^{\prime}},a_{h^{\prime}}) \left. \right\vert s_h = s\right].$ We also use $Q_h^\pi: \mathcal{S} \times \mathcal{A} \rightarrow \mathbb{R}$ to denote the state-action value function at step $h$ under policy $\pi$, defined as $Q_h^\pi(s, a):=r_h(s,a)+\sum_{h^{\prime}=h+1}^H\mathbb{E}_{(s_{h^\prime},a_{h^\prime})\sim\left(\mathbb{P},\pi\right)}\left[ r_{h^{\prime}}(s_{h^{\prime}},a_{h^{\prime}}) \left. \right\vert s_h=s, a_h=a\right].$
\cite{azar2017minimax} proved that there always exists an optimal policy $\pi^{\star}$ that achieves the optimal value $V_h^{\star}(s)=\sup _\pi V_h^\pi(s)=V_h^{\pi^{\star}}(s)$ and $Q_h^{\star}(s,a)=\sup _\pi Q_h^\pi(s,a)=Q_h^{\pi^{\star}}(s,a)$ for all $(s,h)\in \mathcal{S}\times[H]$. For any $(s,a,h)$, the following Bellman Equation and the Bellman Optimality Equation hold, with $V_{H+1}^{\pi}(s)=0 = V_{H+1}^{\star}(s)=0$:
	\begin{equation}\label{eq_Bellman}
		\left\{\begin{array} { l } 
			{ V _ { h } ^ { \pi } ( s ) = \mathbb{E}_{a'\sim \pi _ { h } ( s )}[Q _ { h } ^ { \pi } ( s , a' ) }] \\
			{ Q _ { h } ^ { \pi } ( s , a ) =  r _ { h }(s,a) + \mathbb { P } _ { s,a,h } V _ { h + 1 } ^ { \pi } }
		\end{array}  \textnormal{ and } \left\{\begin{array}{l}
			V_h^{\star}(s)=\max _{a' \in \mathcal{A}} Q_h^{\star}(s, a') \\
			Q_h^{\star}(s, a)=r_h(s,a)+\mathbb{P}_{s,a,h} V_{h+1}^{\star}.
		\end{array}\right.\right.
	\end{equation}
For any algorithm over $K$ episodes, let $\pi^{k}$ be the policy used in the $k$-th episode, and $s_1^{k}$ be the corresponding initial state. The regret  over $T=HK$ steps is
$\mbox{Regret}(T) := \sum_{k=1}^{K}\big(V_1^\star - V_1^{\pi^{k}}\big)(s_1^{k}).$

\textbf{Suboptimality Gap.} For a given MDP, we define the suboptimality gap as follows.
\begin{definition}\label{def_sub}
    For any $(s,a,h)$, the suboptimality gap is defined as
    $\Delta_h(s,a) := V_h^\star(s) - Q_h^\star(s,a)$.
\end{definition}
\Cref{eq_Bellman} ensures that $\Delta_h(s,a) \geq 0$ for any $(s,a,h) \in \mathcal{S} \times \mathcal{A} \times [H]$. Accordingly, we define the minimum gap at each step $h$ as follows.
\begin{definition}\label{def_minsub}
    Define $\Delta_{\textnormal{min},h} := \inf\{\Delta_h(s,a):\Delta_h(s,a)>0,(s,a) \in\mathcal{S} \times \mathcal{A}\}$ as the \textbf{minimum gap at step $h$}.
    If the set $\{\Delta_h(s,a):\Delta_h(s,a)>0,(s,a) \in \mathcal{S} \times \mathcal{A}\}$ is empty, we set $\Delta_{\textnormal{min},h} = \infty$.
\end{definition}
Most gap-dependent works \citep{simchowitz2019non, xu2020reanalysis, dann2021beyond, yang2021q, zhang2025gapdependent} define a \textbf{minimum gap} as
$
\Delta_{\textnormal{min}} := \inf\{\Delta_h(s,a) : \Delta_h(s,a) > 0,\ (s,a,h)\in \mathcal{S} \times \mathcal{A} \times [H]\}.
$
By definition, it is obvious that $\Delta_{\textnormal{min},h} \geq \Delta_{\textnormal{min}}$ for all $h \in [H]$.

\section{Fine-Grained Regret Upper Bound for UCB-Based Algorithms}
In this section, we present the first fine-grained, gap-dependent regret analysis for a UCB-based algorithm—UCB-Hoeffding \citep{jin2018q}, using our novel framework introduced in \Cref{techniques}. To demonstrate the generality of our approach, we introduce a new UCB-based algorithm, ULCB-Hoeffding, in \Cref{introulcb} and establish a fine-grained regret bound for it with the same framework.
\subsection{Theoretical Guarantees for UCB-Hoeffding}
\label{theucb}
We first review UCB-Hoeffding in \Cref{ucbh}. At the start of any episode \(k\), it keeps an upper bound \(Q_h^k\) on \(Q_h^{\star}\) for each \((s,a,h)\), and selects actions greedily. The update of \(Q_h^k\) uses the standard Bellman update with step size \(\eta_t = (H+1)/(H+t)\) and a Hoeffding bonus \(b_t\). For convenience, for any $N \in \mathbb{N}_+$ and $1\leq i\leq N$, we additionally define $\eta_0^0 = 1$, $\eta^N_0 = 0$  and $\eta^N_i = \eta_i\prod_{i'=i+1}^N(1-\eta_{i'})$.

\vspace{-0.5em}
\begin{algorithm}[ht]
\caption{UCB-Hoeffding}
\label{ucbh}
\begin{algorithmic}[1]
\State Initialize $Q_h^1(s, a) \gets H$ and $N_h^1(s, a) \gets 0$ for all $(s, a, h)$. 
\For{episode $k = 1, \ldots, K$, after receiving $s_1^k$ and setting $V_{H+1}^k = 0$,}
    \For{step $h = 1, \ldots, H$}
        \State Take action $a_h^k = \argmax_{a'} Q_h^k(s_h^k, a')$, and observe $s_{h+1}^k$.
        \State $t = N_h^{k+1}(s_h^k, a_h^k) \gets N_h^k(s_h^k, a_h^k) + 1$; \ $b_t \gets 2 \sqrt{H^3 \iota / t}$.
        \State $Q_h^{k+1}(s_h^k, a_h^k) =  (1-\eta_t)Q_h^k(s_h^k, a_h^k) + \eta_t [ r_h(s_h^k, a_h^k) + V_{h+1}^k(s_{h+1}^k) + b_t ]$.
        \State $V_h^{k+1}(s_h^k) = \min \{ H, \max_{a' \in \mathcal{A}} Q_h^{k+1}(s_h^k, a') \}$.
        \State $Q_h^{k+1}(s, a) \hspace{-0.05em}=\hspace{-0.05em} Q_h^{k}(s, a),V_{h}^{k+1}(s) \hspace{-0.05em}=\hspace{-0.05em} V_{h}^{k}(s),N_h^{k+1}(s,a) \hspace{-0.055em}=\hspace{-0.055em} N_h^{k}(s,a), \forall (s,a)\neq (s_h^k,a_h^k)$.
    \EndFor
\EndFor
\end{algorithmic}
\end{algorithm}

\vspace{-0.4em}
Next, we present the fine-grained gap-dependent regret upper bound for UCB-Hoeffding.
\begin{theorem}\label{regret}
    For UCB-Hoeffding (\Cref{ucbh}), the expected regret $\mathbb{E}[\textnormal{Regret}(T)]$ is bounded by 
    \begin{equation}
        \label{regretresult}
        O\Bigg( \sum_{h = 1}^H\sum_{\Delta_{h}(s,a)>0}\frac{H^5\log(SAT)}{\Delta_{h}(s,a)}+  \sum_{h= 1}^H\frac{H^3\big(\sum_{t = h + 1}^H\sqrt{|Z_{\textnormal{opt},t}|}\big)^2\log(SAT)}{\Delta_{\textnormal{min}, h}} + SAH^3\Bigg).
    \end{equation}
Here for any $h \in [H]$, $Z_{{\textnormal{opt},h}} = \{(s,a) \in \mathcal{S} \times \mathcal{A}|\Delta_h(s,a) = 0\}$ with $S \leq |Z_{\opt,h}| \leq SA$.
\end{theorem}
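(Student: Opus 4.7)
The plan is to combine optimism under the Hoeffding bonus with a gap-aware per-step regret decomposition in the spirit of Simchowitz--Jamieson clipping, and to apply a novel bookkeeping step that treats optimal and suboptimal $(s,a,h)$ triples separately. First, I would establish, via a standard Azuma--Hoeffding argument exploiting the bonus $b_t = 2\sqrt{H^3\iota/t}$ and the step sizes $\eta_t = (H+1)/(H+t)$, that with probability at least $1-p$ one has $Q_h^k(s,a) \geq Q_h^\star(s,a)$ and $V_h^k(s) \geq V_h^\star(s)$ simultaneously for all $(s,a,h,k)$. Writing $\delta_h^k := (V_h^k - V_h^{\pi^k})(s_h^k)$, optimism reduces bounding $\mathrm{Regret}(T)$ to bounding $\sum_{k=1}^K \delta_1^k$.

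Second, I would unroll the $Q$-update. Let $t = N_h^k(s,a)$, let $\alpha_t^0, \alpha_t^1, \ldots, \alpha_t^t$ denote the compounded step-size weights induced by $\eta_t$, and let $k_i$ index the $i$-th previous episode visiting $(s,a)$ at step $h$. Then
$$(Q_h^k - Q_h^\star)(s,a) \leq \alpha_t^0 H + \sum_{i=1}^t \alpha_t^i (V_{h+1}^{k_i} - V_{h+1}^\star)(s_{h+1}^{k_i}) + O\!\left(\sqrt{H^3\iota/t}\right).$$
Because $a_h^k$ is greedy with respect to $Q_h^k$, optimism gives the gap-surplus identity $(Q_h^k - Q_h^\star)(s_h^k, a_h^k) \geq \Delta_h(s_h^k, a_h^k)$, and the standard reindexing $\sum_{t \geq i}\alpha_t^i \leq 1 + 1/H$ yields a recursion of the form
$$\sum_k \delta_h^k \leq (1+1/H)\sum_k (V_{h+1}^k - V_{h+1}^\star)(s_{h+1}^k) + \sum_k \Delta_h(s_h^k, a_h^k) + \sum_k O\!\left(\sqrt{H^3\iota/N_h^k(s_h^k, a_h^k)}\right),$$
modulo martingale deviations controlled by a further Azuma--Hoeffding application.

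The key novelty is in the next step, where I split each cumulative sum at step $h$ according to whether $(s_h^k, a_h^k) \in Z_{\textnormal{opt},h}$ or not. For the \emph{suboptimal} branch, the gap-surplus identity forces the aggregated future learning error plus the Hoeffding bonus to exceed $\Delta_h(s,a)$, so, inverting the bonus inequality $\sqrt{H^3\iota/t} \gtrsim \Delta_h(s,a)/H^2$ (after absorbing the $O(H^2)$-scale cumulative future error from the $(1+1/H)^H = O(1)$ inflation), one obtains the visit-count bound $N_h^K(s,a) = \tilde{O}(H^5/\Delta_h(s,a)^2)$. Multiplying by $\Delta_h(s,a)$ per visit produces the first term $\sum_h \sum_{\Delta_h(s,a)>0}H^5\log(SAT)/\Delta_h(s,a)$ of \eqref{regretresult}. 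For the \emph{optimal} branch, no individual $\Delta_h(s,a)$ is available, so I bound the bonus sum by Cauchy--Schwarz across $Z_{\textnormal{opt},h}$,
$$\sum_{(s,a)\in Z_{\textnormal{opt},h}} \sqrt{H^3\iota\cdot N_h^K(s,a)} \leq \sqrt{H^3\iota\,|Z_{\textnormal{opt},h}|\sum_{(s,a)\in Z_{\textnormal{opt},h}} N_h^K(s,a)},$$
which is then cascaded back through the recursion. Iterating from step $h$ through step $H$ produces the cumulative factor $\sum_{t=h+1}^H\sqrt{|Z_{\textnormal{opt},t}|}$. A final clipping argument at threshold $\Delta_{\textnormal{min},h}$, using $x \leq x^2/\Delta_{\textnormal{min},h}$ whenever $x \geq \Delta_{\textnormal{min},h}$, converts the residual $\sqrt{\sum_{(s,a)} N_h^K(s,a)} \leq \sqrt{K}$ factor into a $1/\Delta_{\textnormal{min},h}$ factor, yielding the second term $H^3(\sum_{t=h+1}^H \sqrt{|Z_{\textnormal{opt},t}|})^2\log(SAT)/\Delta_{\textnormal{min},h}$ after squaring.

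The main obstacle I anticipate is managing this cascaded cross-step dependence. The optimal-branch bonus opened at a late step $t$ must be pulled back through all recursive expansions at earlier steps $h < t$, and the compounded $(1+1/H)^H = O(1)$ factors together with per-step Cauchy--Schwarz applications must be tracked precisely in order to produce the $(\sum_{t=h+1}^H\sqrt{|Z_{\textnormal{opt},t}|})^2$ structure rather than a looser $\sum_t |Z_{\textnormal{opt},t}|$ or $H\sum_t|Z_{\textnormal{opt},t}|$ bound. Applying the surplus lower bound only at step $h$, while keeping the subsequent steps' aggregated error as a $\sqrt{K}$-type sum to be squared at the very end, is the delicate piece that separates the fine-grained analysis from the coarse-grained one. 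The residual $SAH^3$ term absorbs the $\alpha_t^0 H$ initialization error aggregated over all $(s,a,h)$, together with low-order additive constants from Azuma--Hoeffding concentration.
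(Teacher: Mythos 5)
Your high-level plan---optimism, unrolling the weighted update, and then splitting the cumulative estimation error at each step into contributions from $Z_{\opt,h}$ and its complement, handling suboptimal pairs per-pair and optimal pairs via Cauchy--Schwarz plus a self-bounding step---is the same separation of optimal and suboptimal pairs that drives the paper's framework, and the shape of your final bound matches (\ref{regretresult}). However, two of your steps do not go through as stated. First, the suboptimal-branch count bound $N_h^K(s,a)=\tilde{O}(H^5/\Delta_h(s,a)^2)$ cannot be obtained by ``inverting the bonus inequality after absorbing the $O(H^2)$-scale cumulative future error.'' The unrolled surplus contains $\sum_{i=1}^t\alpha_t^i\,(V_{h+1}^{k_i}-V_{h+1}^\star)(s_{h+1}^{k_i})$, which is of order $H$ in early episodes and decays only with the visit counts of the \emph{successor} pairs, not with $t=N_h^k(s,a)$; the reindexing $\sum_{t\ge i}\alpha_t^i\le 1+1/H$ controls how weights propagate across steps but says nothing about the magnitude of the individual future errors. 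Hence the pointwise inversion $\Delta_h(s,a)\lesssim\sqrt{H^5\iota/t}$ is unavailable, and controlling exactly this cross-step term (by recursively re-splitting it into optimal and suboptimal successor contributions) is the content of the fine-grained analysis; your sketch assumes the conclusion of that analysis rather than supplying it.

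Second, your conversion of the optimal-branch term into $1/\Delta_{\textnormal{min},h}$ is broken as written. You bound the optimal-pair bonus cascade by $\sqrt{H^3\iota\,|Z_{\opt,t}|\sum_{(s,a)\in Z_{\opt,t}}N_h^K(s,a)}$ and then claim the residual $\sqrt{\sum_{(s,a)}N_h^K(s,a)}\le\sqrt{K}$ factor is removed by ``$x\le x^2/\Delta_{\textnormal{min},h}$ whenever $x\ge\Delta_{\textnormal{min},h}$.'' That inequality points the wrong way for this purpose (applied to $x=\sqrt{CK}$ it yields $CK/\Delta_{\textnormal{min},h}$, i.e., linear regret), and no clipping can eliminate a genuine $\sqrt{K}$. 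The self-bounding step closes only if the count under the square root is $M_h$, the number of episodes in which a \emph{suboptimal} action is taken at step $h$: then $M_h\Delta_{\textnormal{min},h}$ lower-bounds the same regret contribution that is upper-bounded by $C\sqrt{M_h}+D$, giving $\sqrt{M_h}\le O(C/\Delta_{\textnormal{min},h}+\sqrt{D/\Delta_{\textnormal{min},h}})$ and hence the $C^2/\Delta_{\textnormal{min},h}$ term. This in turn requires that the entire cascade from step $h+1$ through $H$ be seeded only by those $M_h$ episodes, with the propagated weights kept at $O(M_h)$ via the $(1+1/H)$ inflation. Until the counted quantity is pinned to $M_h$ rather than $K$, the second term of (\ref{regretresult}) is not established.
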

In the ideal case where the MDP contains only a single suboptimal state-action-step triple $(s,a,h)$ with $h = H$, our result exhibits a significantly improved dependence on the minimum gap, namely $\tilde{O}(H^5/\Delta_{\textnormal{min}})$, compared to the $\tilde{O}(H^6SA / \Delta_{\textnormal{min}})$ dependence in \cite{yang2021q}. Even in the worst scenario where all suboptimality gaps satisfy $\Delta_h(s,a) = \Delta_{\textnormal{min}}$, our result degrades gracefully to match the result in \cite{yang2021q}. These findings demonstrate that our result outperforms that of \cite{yang2021q} in all cases for the UCB-Hoeffding algorithm.

By applying the Cauchy–Schwarz inequality and noting that $\Delta_{\textnormal{min}, h} \geq \Delta_{\textnormal{min}}$ for all $h \in [H]$, we can derive the following weaker but simpler upper bound on the expected regret from \Cref{regretresult}:
$$O\Bigg( \sum_{h = 1}^H\sum_{\Delta_{h}(s,a)>0}\frac{H^5\log(SAT)}{\Delta_{h}(s,a)}+  \frac{H^5|Z_{\textnormal{opt}}|\log(SAT)}{\Delta_{\textnormal{min}}} + SAH^3\Bigg),$$
where $Z_{{\textnormal{opt}}} = \{(s,a,h) \in \mathcal{S} \times \mathcal{A} \times[H]|\Delta_h(s,a) = 0\}$ is the set of optimal state-action-step triples.

\textbf{Remark:} The lower bound established in \citet{simchowitz2019non} shows that any UCB-based algorithm, such as UCB-Hoeffding, must incur a gap-dependent expected regret of at least  
$$\tilde{\Omega}\Bigg(\sum_{h = 1}^H\sum_{\Delta_{h}(s,a)>0}\frac{1}{\Delta_{h}(s,a)}+  \frac{S}{\Delta_{\textnormal{min}}}\Bigg).$$
Our result matches this lower bound up to polynomial factors in \(H\) in the ideal scenario where \(|Z_{\opt}|\) is independent of \(A\), such as in MDPs with a constant number of optimal actions per state.

\cite{xu2021fine} also provides a lower bound \(\tilde{\Omega}(|Z_{\textnormal{mul}}|/\Delta_{\textnormal{min}})\) for all types of algorithms when \(HS \leq |Z_{\textnormal{mul}}| \leq \frac{HSA}{2}\). Here, for any \(h \in [H]\), $$Z_{\textnormal{mul}} = \{(s,a,h) \in \mathcal{S} \times \mathcal{A} \times [H] \mid \Delta_h(s,a) = 0, \ |Z_{\textnormal{opt},h}(s)| > 1 \},$$ where \(Z_{\textnormal{opt},h}(s) = \{a \in \mathcal{A} \mid \Delta_h(s,a) = 0\}\). When \(HS \leq |Z_{\textnormal{mul}}| \leq \frac{HSA}{2}\), it holds that \(|Z_{\textnormal{opt}}| \leq 2|Z_{\textnormal{mul}}|\), and therefore the lower bound can be expressed as \(\tilde{\Omega}(|Z_{\textnormal{opt}}|/\Delta_{\textnormal{min}})\). This demonstrates the tightness of the dependence on \(|Z_{\textnormal{opt}}|/\Delta_{\textnormal{min}}\) in the second term of our result.

\subsection{A Novel Fine-Grained Analytical Framework}
\label{techniques}
In this subsection, we introduce the novel analytical framework used to derive fine-grained, gap-dependent regret upper bounds. Full proofs are deferred to \Cref{ucbproof}. The key ideas of our fine-grained analytical framework are summarized below:

(1) We first establish \Cref{expectedrmain}, which upper-bounds the regret by the expectation of the cumulative weighted visitation counts
$\sum_{h=1}^H \sum_{s,a} \Delta_h(s,a)N_h^{K+1}(s,a)$
and further relates this term to the cumulative weighted estimation errors
$\sum_{k=1}^K \omega_{h}^{k}(Q_h^k - Q_h^{\star})(s_h^k, a_h^k)$.

(2) We then bound the cumulative weighted estimation errors by establishing a recursive relationship between consecutive steps (\Cref{singlerecurQmain}) and propagating it to the final step $H$ (\Cref{recurQmain}).

(3) Using \Cref{singlerecurQmain,recurQmain}, we derive a recursive relation for the cumulative weighted visitation counts $\sum_{s,a} \Delta_h(s,a)N_h^{K+1}(s,a)$ across steps, which enables an inductive argument to derive a fine-grained upper bound and subsequently bound the expected regret via \Cref{expectedrmain}.

\subsubsection{Bounding Expected Regret with Cumulative Weighted Estimation Error}
We begin with \Cref{expectedrmain} that connects expected regret to suboptimality gaps:
\begin{lemma}
\label{expectedrmain}
For the UCB-Hoeffding algorithm with $K$ episodes and total $T = HK$ steps, we have:
$$\mathbb{E} \left[ \textnormal{Regret}(T) \right] 
= \mathbb{E}\left(\sum_{h= 1}^H\sum_{s,a}\Delta_h(s,a)N_h^{K+1}(s,a)\right).$$
\end{lemma}
\Cref{expectedrmain} holds universally for any learning algorithm, as shown in \Cref{expectedr}. Therefore, bounding the expected regret reduces to controlling $\sum_{h=1}^H \sum_{s,a} \Delta_h(s,a) N_h^{K+1}(s,a)$, which can further be bounded by the cumulative estimation error $\sum_{h=1}^H \sum_{k=1}^K \left(Q_h^k - Q_h^{\star}\right)(s_h^k, a_h^k)$. In particular, for any step $h$ and episode $k$, with high probability, we have 
\begin{equation}
\label{diffgap}
    \left(Q_h^k - Q_h^{\star}\right)(s_h^{k}, a_h^{k}) \geq V_h^k(s_h^k) - Q_h^\star(s_h^k,a_h^k) \geq V_h^{\star}(s_h^k) - Q_h^\star(s_h^k,a_h^k) = \Delta_h(s_h^k,a_h^k).
\end{equation}
Here, the first inequality follows from line 7 of \Cref{ucbh}, and the second holds due to the optimism property \( V_h^k \geq V_h^\star \) and $Q_h^k \geq Q_h^\star$ of UCB-Hoeffding (see \Cref{event}). With \Cref{diffgap}, prior works \citep{yang2021q, zheng2024gap} focused on bounding the cumulative weighted estimation error $\sum_{k=1}^K \omega_{h}^{k} (Q_h^k - Q_h^{\star})(s_h^{k}, a_h^{k})$ and established the following type of upper bound:
\begin{equation}
\label{originalweighted}
    \sum_{k=1}^K \omega_{h}^{k} \big(Q_h^k - Q_h^{\star}\big)(s_h^{k}, a_h^{k}) 
\leq O\left(\sum_{h=h'}^H \sqrt{H^3SA\|\omega_h(\cdot,h')\|_{\infty}\|\omega_h(\cdot,h')\|_{1}\iota} 
+ \sum_{h=h'}^H C(h')\right),
\end{equation}
where $\{\omega_h^{k}\}_{k=1}^K$ is a non-negative weight sequence and $C(h')$ collects the remaining terms at step $h'$. The norms $\|\omega_h(\cdot,h')\|_{\infty}$ and $\|\omega_h(\cdot,h')\|_{1}$ at step $h'$ are defined in \Cref{weightnorm} later.

\Cref{originalweighted} is obtained by applying the Cauchy–Schwarz inequality to the cumulative weighted bonus \(\sum_k \omega_h^k b_{N_h^k}\) over all state-action pairs at any step \(h\). However, as shown in Lemma 4.1 of \cite{zhang2025gapdependent}, in the gap-dependent setting, suboptimal state-action pairs \((s,a)\) at any step $h$ with \(Q_h^{\star}(s,a) < V_h^{\star}(s)\) are visited at most \(\hat{O}(\log T)\) times, whereas optimal pairs can be visited infinitely often. Thus, uniform analysis of all state-action pairs leads to loose bounds.

\subsubsection{Separate Analysis for Each State-Action Pair}
To address the looseness of uniform analysis, we analyze the cumulative weighted estimation error for \textbf{each state-action pair at every step}, enabling tighter control.

For any given step $h$ and non-negative weight sequence $\{\omega_h^k\}_{k=1}^K$, we define the following weights for any $k' \in [K],\ h \leq h' < H$:
\begin{align}
\label{recurweight}
\omega_h(k',h) := \omega_h^{k'};\omega_{h}(k',h'+1) := \sum_{i=N_{h'}^{k'+1}(s_{h'}^{k'},a_{h'}^{k'})}^{N_{h'}^{K+1}(s_{h'}^{k'},a_{h'}^{k'})-1}  \omega_{h}(k^{i+1}(s_{h'}^{k'},a_{h'}^{k'},h'),h')\eta_{N_{h'}^{{k'}+1}(s_{h'}^{k'},a_{h'}^{k'})}^{i}
\end{align}
with the norms
\begin{equation}
\label{weightnorm}
\|\omega_{h}(\cdot,h')\|_{\infty} := \max_{k' \in [K]} \omega_{h}(k',h'),\ \|\omega_{h}(\cdot,h')\|_{1} := \sum_{k'=1}^K \omega_{h}(k',h').\end{equation}
Here, $k^i(s,a,h)$ denotes the episode index of the $i$-th visit to $(s,a,h)$ and $N_{h}^{k}(s,a)$ denotes the number of visits to $(s,a,h)$ before episode $k$. The weight $\omega_h(k',h'+1)$ captures the contribution of the term $(Q_{h'+1}^{k'} - Q_{h'+1}^\star)(s_{h'+1}^{k'}, a_{h'+1}^{k'})$ when recursively bounding the cumulative weighted estimation error from step $h'$ to $h'+1$ as shown in the second conclusion of Lemma 3.2 later.

For each state-action pair $(s, a)$, we define the state-action specific weight at any step $h \leq h'\leq H$ as
$\omega_{h}(k',h', s, a) := \omega_{h}(k',h') \cdot \mathbb{I}[(s_{h'}^{k'}, a_{h'}^{k'}) = (s, a)]$
with the corresponding norms given by 
$$
    \|\omega_h(\cdot,h', s, a)\|_{\infty} := \max_{{k'} \in [K]} \omega_{h}({k'},h', s, a),\ 
\|\omega_h(\cdot,h', s, a)\|_{1} := \sum_{{k'}=1}^K \omega_{h}({k'},h', s, a).$$
Additionally, for any state-action pair $(s, a)$, we also define $$\tilde{\omega}_{h}(k',h'+1, s, a) = \omega_{h}(k',h'+1)\mathbb{I}[(s_{h'}^{k'}, a_{h'}^{k'}) = (s, a)].$$ The weight $\tilde{\omega}_h(k',h'+1,s,a)$ characterizes the weight of the term $(Q_{h'+1}^{k'} - Q_{h'+1}^\star)(s_{h'+1}^{k'}, a_{h'+1}^{k'})$ when bounding the cumulative weighted estimation error for each state-action pair $(s, a)$ at step $h'$ as shown in the first conclusion of Lemma 3.2 later. 

We are now ready to present \Cref{singlerecurQmain}, which bounds the cumulative weighted estimation error for each state-action pair $(s,a)$ at any subsequent steps $h' \in [h,H]$. It is derived by recursively using the $Q$-update (line 6 of \Cref{ucbh}). The detailed statement is given in \Cref{singlerecurQ}, followed by its proof. Here, we use the shorthand $(s,a)_{{h}}^{k} = (s_{h}^{k}, a_{h}^{k})$.
\begin{lemma}
\label{singlerecurQmain}
For the UCB-Hoeffding algorithm, with probability at least $1-p$, for any non-negative weight sequence $\{\omega_{h}^{k}\}_{k}$ at step $h$, the following two conclusions hold simultaneously for any $(s,a) \in \mathcal{S} \times \mathcal{A} $ and subsequent step $h' \in [h,H]$:
\begin{align*}
         \sum_{k = 1}^K \omega_{h}(k,h',s,a) &(Q_{h'}^k - Q_{h'}^{\star})(s_{h'}^{k},a_{h'}^{k}) \leq \sum_{k' = 1}^K \tilde{\omega}_{h}(k',h'+1, s, a) (Q_{{h'}+1}^{k'} - Q_{{h'}+1}^{\star})(s,a)_{{h'}+1}^{k'}    \nonumber\\
         & \quad + \|\omega_h(\cdot,h')\|_{\infty}H + 16\sqrt{H^3\|\omega_h(\cdot,{h'})\|_{\infty}\|\omega_h(\cdot,{h'},s,a)\|_{1}\iota},
\end{align*}
and
\begin{align*}
         \sum_{k = 1}^K \omega_{h}(k,h') &\left(Q_{h'}^k - Q_{h'}^{\star}\right)(s_{h'}^{k},a_{h'}^{k}) \leq \sum_{k' = 1}^K \omega_{h}(k',h'+1) (Q_{{h'}+1}^{k'} - Q_{{h'}+1}^{\star})(s_{{h'}+1}^{k'},a_{{h'}+1}^{k'})    \nonumber\\
         & \quad + \|\omega_h(\cdot,h')\|_{\infty}SAH + 16\sum_{s,a}\sqrt{H^3\|\omega_h(\cdot,{h'})\|_{\infty}\|\omega_h(\cdot,{h'},s,a)\|_{1}\iota}.
\end{align*}
\end{lemma}
Iteratively applying recursions over steps \(h' = h, \ldots, H\) in the second conclusion of \Cref{singlerecurQmain}, and using the recursively defined weights \(\omega_{h}(k,h')\), we obtain the following \Cref{recurQmain}:
\begin{lemma}
\label{recurQmain}
For UCB-Hoeffding, with probability at least $1-p$, for any non-negative weights $\{\omega_{h}^{k}\}_{k}$ at step $h$, it holds simultaneously for any $(s,a) \in \mathcal{S} \times \mathcal{A} $ and subsequent step $h' \in [h,H]$:
\begin{align*}
         &\sum_{k = 1}^K \omega_{h}(k,h') \left(Q_{h'}^k - Q_{h'}^{\star}\right)(s_{h'}^{k},a_{h'}^{k}) \\
         &\leq \sum_{h_1 = h'}^H\|\omega_h(\cdot,h_1)\|_{\infty}SAH + 16\sum_{h_1 = h'}^H\sum_{s,a}\sqrt{H^3\|\omega_h(\cdot,{h_1})\|_{\infty}\|\omega_h(\cdot,{h_1},s,a)\|_{1}\iota}.
\end{align*}
\end{lemma}
The formal statement is presented in \Cref{recurQ}, followed by its proof. Unlike the upper bound derived from the uniform analysis in \Cref{originalweighted}, \Cref{recurQmain} retains the individual contributions \(\sqrt{H^3\|\omega_h(\cdot,{h_1})\|_{\infty}\|\omega_h(\cdot,{h_1},s,a)\|_{1}\iota}\). This allows a tighter upper bound under the uneven visitations across different triples in the gap-dependent analysis.

\subsubsection{Inductive Analysis for Cumulative Weighted Visitation Counts}
We partition the state-action pairs at each step \(h'\) into two subsets: $Z_{\textnormal{opt}, h'}$ containing optimal state-action pairs, where $\Delta_{h'}(s,a) = 0$, and $Z_{\non, h'} = \{(s,a)|\Delta_{h'}(s,a) > 0\}$ containing suboptimal state-action pairs. Then for any given step $h$, when \Cref{diffgap} holds, we set the weight as:
\begin{align*}
    \omega_h^k &:= \mathbb{I}\left[(Q_h^k- Q_h^\star)(s_h^k,a_h^k) \geq \Delta_h(s_h^k,a_h^k), (s_h^k,a_h^k) \in Z_{\non,h}\right]  = \mathbb{I}\left[(s_h^k,a_h^k) \in Z_{\non,h}\right]\leq 1.
\end{align*}
The second equality follows directly from \Cref{diffgap}. Using this choice, applying the first conclusion in \Cref{singlerecurQmain} with $h' = h$, the bound $\|\omega_{h}(\cdot,h)\|_{\infty} \leq 1$, and the fact that $\|\omega_{h}(\cdot,h,s,a)\|_{1} \leq N_h^{K+1}(s,a)$, we obtain the following inequalities for any state-action pair $(s,a) \in Z_{\non, h}$:
\begin{align}
&\Delta_h(s,a)N_{h}^{K+1}(s,a) \leq\sum\nolimits_{k = 1}^K \omega_{h}^{k} (Q_h^k - Q_h^{\star})(s_h^{k}, a_h^{k}) \mathbb{I}[(s_h^{k}, a_h^{k}) = (s,a)] \nonumber \\
&\leq \sum\nolimits_{k' = 1}^K \tilde{\omega}_{h}(k',h+1, s, a) (Q_{h+1}^{k'} - Q_{h+1}^{\star})(s_{h+1}^{k'},a_{h+1}^{k'}) + H + 16\sqrt{H^3N_h^{K+1}(s,a)\iota}.  \label{recurtotal2}
\end{align}
Solving this inequality for $\Delta_h(s,a)N_{h}^{K+1}(s,a)$ with $(s,a) \in Z_{\non, h}$ and $\Delta_h(s,a) > 0$, we reach:
$$\Delta_h(s,a)N_{h}^{K+1}(s,a) \leq \frac{256H^3\iota}{\Delta_h(s,a)} + 2H + 2\sum_{k' = 1}^K \tilde{\omega}_{h}(k',h+1, s, a) (Q_{h+1}^{k'} - Q_{h+1}^{\star})(s_{h+1}^{k'},a_{h+1}^{k'}).$$
Define $\sum_{\non}$ as the summation over all suboptimal state-action pairs $(s,a) \in Z_{\non,h}$. Summing the inequality above over all $(s,a) \in Z_{\non,h}$, and noting that $\Delta_h(s,a) = 0$ for $(s,a) \notin Z_{\non,h}$, $$\sum\nolimits_{\non}\tilde{\omega}_{h}(k',h+1, s, a) \leq \sum\nolimits_{s,a}\tilde{\omega}_{h}(k',h+1, s, a)= \omega_{h}(k',h+1),$$ together with the optimism property $Q_{h+1}^{k} \geq Q_{h+1}^{\star}$ by \Cref{event}, we obtain:
$$\sum_{s,a}\Delta_h(s,a)N_{h}^{K+1}(s,a) \leq \sum_{\non}\frac{256H^3\iota}{\Delta_h(s,a)}  +2SAH+ 2\sum_{k' = 1}^K \omega_{h}(k',h+1) (Q_{h+1}^{k'} - Q_{h+1}^{\star})(s,a)_{h+1}^{k'}.$$
Applying \Cref{recurQmain} with $h' = h+1$ to the last term in the equation above  and defining \(C'(h) = O(H^2SA+\sum_{\non}H^3\iota/\Delta_h(s,a))\) to collect the remaining terms, we have
\begin{equation}
\label{targetmain}
    \sum_{s,a}\Delta_h(s,a)N_{h}^{K+1}(s,a) \leq C'(h)+ 32\sum_{h' = h + 1}^H\sum_{s,a}\sqrt{H^3\|\omega_h(\cdot,{h'})\|_{\infty}\|\omega_h(\cdot,{h'},s,a)\|_{1}\iota}.
\end{equation}
To bound the last term in \Cref{targetmain}, we apply the Cauchy--Schwarz inequality by distinguishing between optimal and suboptimal state-action pairs. Specifically, we apply the inequality \textbf{separately} to the optimal state-action pairs in \(Z_{\textnormal{opt}, h'}\) for each step \(h'\), and \textbf{collectively} to all suboptimal state-action pairs across steps \(h < h' \leq H\). This separation enables a sharper bound of
\begin{equation}
\label{second}
   O\Bigg( \sum_{h' =h+1}^H\sqrt{ H^3|Z_{\opt,h'}|\|\omega_h(\cdot,{h})\|_{1}\iota} +\sqrt{H^3\iota\sum_{\non,h'}\frac{1}{\Delta_{h'}(s,a)}\sum_{\non,h'}\Delta_{h'}(s,a)N_{h'}^{K+1}(s,a)}\Bigg)
\end{equation}
where the shorthand $\sum_{\non,h'}$ denotes the summation over all $(s,a) \in Z_{\non,h'}$ for $h < h' \leq H$. This result also relies on the following three properties proved in \Cref{total0,total1,single1} of \Cref{weightvisit}:
$$\|\omega_h(\cdot,{h'})\|_{\infty} \leq 3,\ \sum_{s,a} \|\omega_h(\cdot,{h'},s,a)\|_{1}\leq \|\omega_h(\cdot,{h})\|_{1},\ \|\omega_h(\cdot,{h'},s,a)\|_{1}\leq O\left(N_{h'}^{K+1}(s,a)\right).$$ 
Plugging the bound from \Cref{second} into \Cref{targetmain} yields a recursive relation between $\sum_{s,a}\Delta_h(s,a)N_h^{K+1}(s,a)$ at step $h$ and future steps. Applying induction from $H$ down to $1$, we obtain a fine-grained upper bound on the cumulative weighted visitation $\sum_{s,a}\Delta_h(s,a)N_h^{K+1}(s,a)$.
\begin{lemma}
\label{weightvisitmain}
For UCB-Hoeffding algorithm and a sufficiently large constant $c_1 >0$, with probability at least $1-p$, it holds simultaneously for any $h \in [H]$ that:
\begin{align*}
     \sum_{s,a} \frac{\Delta_h(s,a)N_h^{K+1}(s,a)}{c_1} &\leq SAH^2 + \sum_{h' = h}^H\sum_{\Delta_{h'}(s,a)>0}\frac{H^4\iota}{\Delta_{h'}(s,a)}+  \frac{H^3\left(\sum_{t = h + 1}^H\sqrt{|Z_{\textnormal{opt},t}|}\right)^2\iota}{\Delta_{\textnormal{min}, h}} \\
     &\quad + \sum_{h' = h + 1}^H  \frac{H^2\left(\sum_{t = h' + 1}^H\sqrt{|Z_{\textnormal{opt},t}|}\right)^2\iota}{\Delta_{\textnormal{min}, h'}}. 
\end{align*}
\end{lemma}
The full proof is provided in \Cref{weightvisit}. By combining this result with \Cref{expectedrmain}, we complete the proof of \Cref{regret}, establishing the desired fine-grained, gap-dependent regret upper bound.
\section{Fine-Grained Gap-Dependent Regret Upper Bound for AMB}
\label{error}
The AMB algorithm \citep{xu2021fine} was proposed to establish a fine-grained, gap-dependent regret bound. However, we identify issues in both its algorithmic design and theoretical analysis that prevent it from achieving valid fine-grained guarantees. We first summarize these issues below.

\textbf{Improper Truncation of $Q$-Estimates in Algorithm Design.} AMB maintains upper and lower estimates on the optimal $Q$-value functions, denoted by $\overline{Q}$ and $\underline{Q}$, respectively. However, during multi-step bootstrapping updates of these estimates, it applies truncations at $H$ and $0$ (see lines 13-14 in \Cref{oamb}). This design breaks the recursive structure linking $Q$-estimates to historical $V$-estimates. In particular, it invalidates their Equation (A.5), which is essential for establishing the theoretical guarantee on the optimism and pessimism of $Q$-estimates $\overline{Q}$ and $\underline{Q}$, respectively.

\textbf{Violation of Martingale Difference Conditions in Concentration Analysis.} AMB uses multi-step bootstrapping and constructs $Q$-estimates by decomposing the $Q$-function into two parts: rewards accumulated along states with determined optimal actions, and those collected from the first state with undetermined optimal actions. When proving optimism and pessimism of the $Q$-estimates (see their Lemma 4.2), \citet{xu2021fine} attempt to bound the deviation between the $Q$-estimates and $Q^{\star}$ using Azuma--Hoeffding inequalities. However, when analyzing the two estimators arising from the $Q$-function decomposition (see their Equation (4.2) and Lemma 4.1), each term is improperly centered around an incorrect expectation because the randomness of the bootstrapping step is ignored, thereby violating the martingale-difference condition required for the Azuma–Hoeffding inequality.

These issues compromise the claimed optimism and pessimism guarantees for the $Q$-estimates and invalidate the stated fine-grained gap-dependent regret upper bound in \cite{xu2021fine}. A detailed analysis is provided in \Cref{errordetail}. To address these issues, we propose two refinements: a UCB-based refinement, ULCB-Hoeffding, and a non-UCB-based refinement, Refined AMB.

\subsection{UCB-based Refinement: ULCB-Hoeffding}
\label{introulcb}
In this subsection, we introduce the UCB-based refinement of AMB, ULCB-Hoeffding. It also achieves a fine-grained regret upper bound and demonstrates improved empirical performance over AMB. Importantly, our fine-grained analytical framework presented in \Cref{techniques} naturally extends to this variant, demonstrating the framework’s flexibility and generality.

The ULCB-Hoeffding algorithm is presented in \Cref{ambul}. At the start of each episode $k$, ULCB-Hoeffding maintains upper and lower bounds, $\overline{Q}_h^k(s,a)$ and $\underline{Q}_h^k(s,a)$, of the optimal value function $Q_h^\star(s,a)$  for any $(s,a,h)$. It then constructs a candidate action set $A_h^k(s)$ by eliminating actions that are considered suboptimal (see line 14 in \Cref{ambul}). Specifically, if action $a$ satisfies $\oq_h^{k+1}(s,a) < \uv_h^{k+1}(s)$, then by line 9 in \Cref{ambul}, there exists another action $a'$ such that $Q_h^\star(s,a)\leq \overline{Q}_h^{k+1}(s,a)< \uv_h^{k+1}(s) \leq  \underline{Q}_h^{k+1}(s,a')\leq Q_h^\star(s,a')$, which confirms that the action $a$ is suboptimal. At the end of episode $k$, the new policy $\pi_h^{k+1}(s)$ is chosen to maximize the width of the confidence interval $(\overline{Q}_h^{k+1}- \underline{Q}_h^{k+1})(s,a)$, which measures the uncertainty in the $Q$-estimates. 
\begin{algorithm}[H]
\caption{ULCB-Hoeffding}
\label{ambul}
\begin{algorithmic}[1]
\State \textbf{Initialize:} Set the failure probability $p \in (0,1)$, $\oq_h^1(s,a) = \ov_h^1(s) \gets H$, $\uq_h^1(s,a) = \uv_h^1(s) = N_h^1(s,a) \gets 0$ and $A_h^1(s) = \mathcal{A}$ for any $(s,a,h) \in \mathcal{S} \times \mathcal{A} \times [H]$.
\For{episode $k = 1, \ldots, K$, after receiving $s_1^k$ and setting $\overline{V}_{H+1}^{k} = \uv_{H+1}^{k}(s) = 0,$}
    \For{step $h = 1, \ldots, H$}
        \State  
Choose $a_h^k \triangleq 
\begin{cases}
    \argmax_{a \in A_h^k(s)} (\overline{Q}_h^{k} - \underline{Q}_h^{k})(s_h^k,a),& \text{if } |A_h^k(s_h^k)| > 1 \\
    \text{the only element in } A_h^k(s_h^k), & \text{if } |A_h^k(s_h^k)| = 1
\end{cases} \text{ and get }s_{h+1}^k.
$
        \State Set $t = N_h^{k+1}(s_h^k, a_h^k) \gets N_h^k(s_h^k, a_h^k) + 1$ and the bonus $b_t = 2\sqrt{H^3\iota/t}$, and update:
        \State $\oq_h^{k+1}(s_h^k, a_h^k) =(1-\eta_t) \oq_h^{k}(s_h^k, a_h^k) + \eta_t \left[r_h(s_h^k, a_h^k) + \overline{V}_{h+1}^{k}(s_{h+1}^{k}) + b_t \right] .$
        \State $\uq_h^{k+1}(s_h^k, a_h^k) = (1-\eta_t) \uq_h^{k}(s_h^k, a_h^k) +  \eta_t \left[r_h(s_h^k, a_h^k) + \uv_{h+1}^{k}(s_{h+1}^{k}) - b_t \right].$
        \State $\overline{V}_h^{k+1}(s_h^k) =  \min\left\{ H, \max_{a \in A_h^k(s_h^k)} \oq_h^{k+1}(s_h^k, a)\right\}.$
        \State $\uv_h^{k+1}(s_h^k) = \max\left\{ 0, \max_{a \in A_h^k(s_h^k)} \uq_h^{k+1}(s_h^k, a)\right\}.$
    \EndFor
    \For{$(s,a,h) \in \mathcal{S} \times A \times[H] \setminus \{(s_h^k, a_h^k)\}_{h=1}^H$}
        \State $\big(\oq_h^{k+1},\uq_h^{k+1},N_h^{k+1}\big)(s,a) = \big(\oq_h^{k},\uq_h^{k},N_h^k\big)(s,a)$,  $\big(\overline{V}_h^{k+1},\overline{V}_h^{k+1}\big)(s) = \big(\overline{V}_h^{k},\uv_h^{k}\big)(s)$.
    \EndFor
    \State $\forall (s,h) \in \mathcal{S} \times [H]$, update $A_h^{k+1}(s) = \{ a \in A_h^k(s) : \oq_h^{k+1}(s,a) \geq \uv_h^{k+1}(s) \}$.
\EndFor
\end{algorithmic}
\end{algorithm}
The main difference between ULCB-Hoeffding and AMB lies in the updates of \( Q \)-estimates. ULCB-Hoeffding uses the standard Bellman update (lines 6--7 of \Cref{ambul}), similar to UCB-Hoeffding (line 6 of \Cref{ucbh}), which is essential to prove a fine-grained regret upper bound. In contrast, AMB uses a multi-step bootstrapping update, which will be detailed in \Cref{error} and \Cref{errordetail}.

We now present both worst-case and gap-dependent regret upper bounds for ULCB-Hoeffding.
\begin{theorem}\label{regretulworstcase}
For any $p \in (0,1)$, let $\iota = \log(2SAT/p)$. Then with probability at least $1-p$, ULCB-Hoeffding (\Cref{ambul}) satisfies $\textnormal{Regret}(T) \leq O(\sqrt{H^4SAT\iota}).$
\end{theorem}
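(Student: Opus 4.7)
The plan is to follow the worst-case regret analysis template of UCB-Hoeffding in \citet{jin2018q}, modified for ULCB-Hoeffding's double confidence bounds and max-confidence-width action selection. I would first establish high-probability optimism and pessimism: $\uq_h^k(s,a) \leq Q_h^\star(s,a) \leq \oq_h^k(s,a)$ for all $(s,a,h,k)$, with probability at least $1-p$. The argument is by induction on $h$ (backward from $H+1$) and on $k$: expanding the recursive update for $\oq_h^{k+1}$ as a convex combination of past targets and subtracting the corresponding expansion of $Q_h^\star$, one applies Azuma--Hoeffding to the martingale $\sum_i \alpha_t^i[V_{h+1}^\star(s_{h+1}^{k_i}) - \mathbb{P}V_{h+1}^\star]$. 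The bonus $b_t = 2\sqrt{H^3\iota/t}$ is designed to dominate the resulting fluctuation, mirroring Jin et al.'s Lemma 4.3; a symmetric argument handles $\uq$. The immediate consequences are that the optimal action never leaves $A_h^k$ and that $\uv_h^k \leq V_h^\star \leq \overline{V}_h^k$.

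The key new ingredient tailored to the max-width selection is a width identity at each visited state:
\[
\overline{V}_h^k(s_h^k) - \uv_h^k(s_h^k) \leq (\oq_h^k - \uq_h^k)(s_h^k, a_h^k).
\]
Taking $a^o \in \argmax_{a \in A_h^{k-1}(s_h^k)} \oq_h^k(s_h^k, a)$ so that $\overline{V}_h^k(s_h^k) = \oq_h^k(s_h^k, a^o)$, optimism gives $\oq_h^k(s_h^k, a^o) \geq V_h^\star(s_h^k) \geq \uv_h^k(s_h^k)$, so $a^o$ survives the elimination rule and lies in $A_h^k(s_h^k)$. Since $a_h^k$ is the max-width action inside $A_h^k(s_h^k)$, $(\oq_h^k - \uq_h^k)(s_h^k, a_h^k) \geq (\oq_h^k - \uq_h^k)(s_h^k, a^o)$, and combining with $\uv_h^k(s_h^k) \geq \uq_h^k(s_h^k, a^o)$ proves the inequality. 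The elimination rule $\oq_h^k(s_h^k, a_h^k)\geq\uv_h^k(s_h^k)$ additionally yields $\overline{V}_h^k(s_h^k) \leq \oq_h^k(s_h^k, a_h^k) + w_h^k$ where $w_h^k := (\oq_h^k - \uq_h^k)(s_h^k, a_h^k)$.

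With these ingredients, I would close the proof as follows. By optimism, $\mathrm{Regret}(T) \leq \sum_k \phi_1^k$ with $\phi_h^k := (\overline{V}_h^k - V_h^{\pi^k})(s_h^k)$, and the preceding bound gives $\phi_h^k \leq (\oq_h^k - Q_h^{\pi^k})(s_h^k, a_h^k) + w_h^k$. Expanding $\oq_h^k - Q_h^{\pi^k}$ through the recursive update for $\oq$, decomposing $\overline{V}_{h+1}^{k_i}(s_{h+1}^{k_i}) - \mathbb{P}V_{h+1}^{\pi^k}$ through $V_{h+1}^\star$ as in Jin et al., and applying Azuma--Hoeffding to the martingale differences yields a recursion with propagation factor $(1+1/H)$ via the weight identity $\sum_{t\geq i}\alpha_t^i \leq 1+1/H$. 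The companion sum $\sum_k w_h^k$ satisfies an analogous recursion, driven by next-step widths through the width identity and the shared update structure of $\oq$ and $\uq$. Iterating from $h = H$ down to $h = 1$, bounding the cumulative bonus contribution by Cauchy--Schwarz over visit counts to obtain $O(\sqrt{H^4 SAT\iota})$, and absorbing the lower-order $SAH^2$ term give the claim.

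The main obstacle will be the width identity above, which is the only step where the max-width rule interacts with the optimism guarantees; once it is in hand, the rest structurally mirrors the UCB-Hoeffding analysis. Secondary care is needed for the $\min\{H,\cdot\}$ and $\max\{0,\cdot\}$ truncations in the $V$-updates (which only tighten the estimates), for verifying the correct martingale filtration when decomposing through $V^\star$ rather than $V^{\pi^k}$, and for tracking the companion width recursion alongside the $\phi$ recursion so that neither inflates beyond the target bound.
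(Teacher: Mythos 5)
Your optimism/pessimism step and the width identity are both sound: under the concentration event any optimal action survives elimination, $\ov_h^k(s_h^k)\leq\oq_h^k(s_h^k,a^o)$ for the $\oq$-maximizer $a^o$, which itself survives elimination, and the max-width rule then gives $\ov_h^k(s_h^k)-\uv_h^k(s_h^k)\leq(\oq_h^k-\uq_h^k)(s_h^k,a_h^k)=:w_h^k$ and $\ov_h^k(s_h^k)\leq\oq_h^k(s_h^k,a_h^k)+w_h^k$. The genuine gap is in the final accounting, which you defer to ``tracking the companion width recursion \ldots so that neither inflates beyond the target bound.'' If you work it out, your decomposition picks up an additive $\sum_k w_h^k$ at \emph{every} level $h$ of the main recursion: this is forced, because $\ov_h^k(s_h^k)$ is attained by the $\oq$-maximizer while the played action is the width-maximizer, and the $(1+1/H)$ regrouping argument only works if you recurse through the history of the \emph{played} action, so you must pay $\ov_h^k(s_h^k)-\oq_h^k(s_h^k,a_h^k)\leq w_h^k$ at each step. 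Meanwhile the companion recursion $\sum_k w_h^k\leq(1+1/H)\sum_k w_{h+1}^k+O(SAH+\sqrt{H^3SAK\iota})$ unrolls over the remaining $H-h$ levels, giving $\sum_k w_h^k=O\bigl((H-h+1)(SAH+\sqrt{H^3SAK\iota})\bigr)$ and hence $\sum_{h=1}^H\sum_k w_h^k=O(SAH^3+H^2\sqrt{H^3SAK\iota})=O(SAH^3+\sqrt{H^6SAT\iota})$.

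So the route as described delivers $O(\sqrt{H^6SAT\iota})$, a factor of $H$ worse than the stated $O(\sqrt{H^4SAT\iota})$; only the single sum $\sum_k w_1^k$ is of order $\sqrt{H^4SAT\iota}$. The alternative decomposition through per-step gaps, $\textnormal{Regret}(T)=\sum_k\sum_h\Delta_h(s_h^k,a_h^k)+\textnormal{martingale}$ with $\Delta_h(s_h^k,a_h^k)\leq 2w_h^k$, runs into exactly the same $\sum_h\sum_k w_h^k$ bottleneck. To prove the theorem as stated you need a mechanism that charges the widths only once rather than once per level of the recursion (or some other way to avoid paying $\ov_h^k(s_h^k)-\oq_h^k(s_h^k,a_h^k)$ at every step), and the proposal does not supply one. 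This is the crux of the argument, not a secondary bookkeeping matter, and as written the plan does not establish the claimed bound.
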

The proof of \Cref{regretulworstcase} is provided in \Cref{ulwcproof}. The resulting bound matches the $\sqrt{T}$-type worst-case regret guarantee of UCB-Hoeffding \citep{jin2018q}.
\begin{theorem}\label{regretul}
    For ULCB-Hoeffding (\Cref{ambul}), the expected regret is upper bounded by (\ref{regretresult}).
\end{theorem}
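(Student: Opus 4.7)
The plan is to adapt the fine-grained analytical framework developed for UCB-Hoeffding in \Cref{regret} to the two-sided confidence structure of ULCB-Hoeffding. Most components of that proof transfer directly, since the $Q$-updates in lines~6--7 of \Cref{ambul} use the same Bellman step, step size, and Hoeffding bonus as \Cref{ucbh}. The key modifications handle (i) the additional lower bound $\underline{Q}$, (ii) the candidate-set elimination rule, and (iii) the confidence-width-maximizing action selection.

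First, I would establish the good event $\mathcal{E}$ on which $\underline{Q}_h^k(s,a) \le Q_h^\star(s,a) \le \overline{Q}_h^k(s,a)$ for every $(s,a,h,k)$. Both directions follow from the same Hoeffding martingale concentration used in the proof of \Cref{regretulworstcase}, since the symmetric bonus $b_t = 2\sqrt{H^3\iota/t}$ controls the upper and lower updates simultaneously. Under $\mathcal{E}$, an easy induction shows that every optimal action $a^\star$ (with $\Delta_h(s,a^\star)=0$) remains in the candidate set $A_h^k(s)$ for all $k$: by optimism and pessimism, $\overline{Q}_h^{k+1}(s,a^\star) \ge Q_h^\star(s,a^\star) = V_h^\star(s) \ge \underline{V}_h^{k+1}(s)$, so the elimination rule in line~15 never removes $a^\star$. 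This gives $V_h^\star(s) \le \overline{V}_h^k(s)$ throughout and, combined with the standard per-episode identity $V_1^\star(s_1^k) - V_1^{\pi^k}(s_1^k) = \sum_{h=1}^H \mathbb{E}^{\pi^k}[\Delta_h(s_h,a_h) \mid s_1^k]$, reduces the expected regret to bounding $\mathbb{E}[\sum_{k,h}\Delta_h(s_h^k,a_h^k)]$.

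Next, I would prove the pivotal visit-count bound for suboptimal pairs. Whenever a suboptimal action $a$ is still in $A_h^k(s)$, we have $\overline{Q}_h^k(s,a) \ge \underline{V}_h^k(s) \ge \underline{Q}_h^k(s,a^\star)$ (using $a^\star \in A_h^k(s)$ from Step~2), hence under $\mathcal{E}$
\[
(\overline{Q}_h^k - \underline{Q}_h^k)(s,a) \;\ge\; Q_h^\star(s,a^\star) - Q_h^\star(s,a) - \text{(confidence slack)} \;=\; \Delta_h(s,a) - \text{(slack)}.
\]
Controlling the slack via the Hoeffding-type width expansion yields $N_h^{K+1}(s,a) \le \tilde{O}(H^3/\Delta_h(s,a)^2)$, exactly as in the UCB-Hoeffding analysis. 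The recursion then expands $(\overline{V}^k - \underline{V}^k)(s_{h+1}^k)$ backward in $h$, splitting the contribution at step $h+1$ between optimal pairs (bounded using $|Z_{\textnormal{opt},t}|$ and Cauchy--Schwarz) and suboptimal pairs (bounded by the visit-count argument above). Summing over $h$ reproduces the two terms of (\ref{regretresult}): the $\sum_h\sum_{\Delta_h(s,a)>0} H^5\log(SAT)/\Delta_h(s,a)$ term from suboptimal contributions and the $\sum_h H^3(\sum_{t>h}\sqrt{|Z_{\textnormal{opt},t}|})^2\log(SAT)/\Delta_{\textnormal{min},h}$ term from accumulated optimal widths.

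The main obstacle is reconciling the action-selection rule $a_h^k = \argmax_{a\in A_h^k(s_h^k)}(\overline{Q}_h^k - \underline{Q}_h^k)(s_h^k,a)$ with the recursive decomposition: ULCB-Hoeffding selects the widest interval rather than the largest upper bound, so the usual identity $\overline{V}_h^k(s_h^k) = \overline{Q}_h^k(s_h^k,a_h^k)$ need not hold. I would resolve this by working with the cleaner inequality $V_h^\star(s_h^k) - V_h^{\pi^k}(s_h^k) \le \Delta_h(s_h^k,a_h^k) + \mathbb{P}_{s_h^k,a_h^k,h}(V_{h+1}^\star - V_{h+1}^{\pi^k})$ (which holds pointwise without invoking $\overline{V}^k$ on the left), then separately controlling $\Delta_h(s_h^k,a_h^k)$ via the width bound at the chosen action, since the elimination guarantee forces $\Delta_h(s_h^k,a_h^k) \le (\overline{Q}_h^k - \underline{Q}_h^k)(s_h^k,a_h^k)$ under $\mathcal{E}$. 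Once this substitution is in place, the backward-propagation machinery of \Cref{theucb} applies essentially verbatim to yield (\ref{regretresult}).
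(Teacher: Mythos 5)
Your outline follows essentially the same route as the paper's framework for \Cref{regret} and \Cref{regretul}: a two-sided concentration event, non-elimination of optimal actions, reduction of regret to $\sum_{k,h}\Delta_h(s_h^k,a_h^k)$, and a backward recursion that treats optimal and suboptimal pairs at later steps separately (Cauchy--Schwarz over $|Z_{\opt,t}|$ for the former, visit-count bounds for the latter). Two steps need repair, though. First, the ordering ``establish $\mathcal{E}$, then induct on elimination under $\mathcal{E}$'' hides a circularity: optimism of $\oq_h^k(s,a)$ requires $\ov_{h+1}^{k'}\ge V_{h+1}^\star$ in earlier episodes, and since $\ov_{h+1}^{k'}$ is a maximum over the \emph{candidate set} $A_{h+1}^{k'}$ only, this in turn requires that no optimal action at step $h+1$ has been eliminated. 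The event that can be fixed a priori by Azuma--Hoeffding concerns only the martingale deviations; optimism, pessimism, and non-elimination must then be derived by a single joint induction over $(k,h)$, as in the proof of \Cref{ulqamb}. Second, the pointwise claim $\Delta_h(s_h^k,a_h^k)\le(\oq_h^k-\uq_h^k)(s_h^k,a_h^k)$ is not implied by the elimination rule alone: survival of $a$ only gives $\oq_h^k(s,a)\ge\uv_h^k(s)\ge\uq_h^k(s,a^\star)$, which yields $\Delta_h(s,a)\le(\oq_h^k-\uq_h^k)(s,a^\star)+(\oq_h^k-\uq_h^k)(s,a)$; one then needs the width-maximizing selection rule in line 4 of \Cref{ambul} to absorb the first term, giving $\Delta_h(s_h^k,a_h^k)\le 2(\oq_h^k-\uq_h^k)(s_h^k,a_h^k)$ for the \emph{chosen} action only. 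The factor of $2$ is harmless, but the argument as written would not apply to an arbitrary surviving suboptimal action, so the visit-count bound for non-selected pairs must be routed through the episodes in which they are actually selected. With these two corrections your plan matches the paper's analysis.
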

The proof of \Cref{regretul} is given in \Cref{ulgapproof} using the fine-grained analytical framework developed in \Cref{techniques}. ULCB-Hoeffding attains the same fine-grained gap-dependent regret upper bound as UCB-Hoeffding. As discussed in \Cref{theucb}, the guarantee in \Cref{regretresult} matches the lower bound established by \citet{simchowitz2019non} for UCB-based algorithms, with a tight dependence on \(|Z_{\textnormal{opt}}|/\Delta_{\textnormal{min}}\) that also aligns with the lower bound in \citet{xu2021fine}, up to polynomial factors in \(H\).

\subsection{Non-UCB-Based Refinement: Refined AMB}
\label{refineambmain}
Since multi-step bootstrapping can improve the regret’s dependence on $1/\Delta_{\textnormal{min}}$, we propose a non-UCB-based refinement, Refined AMB, which incorporates both ULCB and multi-step bootstrapping. The following key modifications are introduced to address the two issues of AMB:

\textbf{(a) Revising Update Rules.} We remove the truncations in the updates of $Q$-estimates and instead apply them to the corresponding $V$-estimates. This preserves the crucial recursive structure linking $Q$-estimates to historical $V$-estimates used in the theoretical analysis.

\textbf{(b) Establishing Unbiasedness of Multi-Step Bootstrapping.} We rigorously prove that the estimators from multi-step bootstrapping form an unbiased estimate of the optimal value function $Q^{\star}$.

\textbf{(c) Ensuring Martingale Difference Condition.} We ensure the validity of Azuma–Hoeffding inequalities by centering the multi-step bootstrapping estimators around true conditional expectations.

\textbf{(d) Tightening Confidence Bounds.} By jointly analyzing the concentration of both estimators, we tighten the confidence interval and halve the bonus, leading to improved empirical performance.

These modifications not only ensure theoretical validity but also yield improved empirical performance. The refined algorithm is presented in \Cref{amb,alg:update} of \Cref{refineamb}. We further establish the following optimism and pessimism properties for its $Q$-estimates.
\begin{theorem}[Informal]
\label{ulqmain}
For the Refined AMB algorithm, with high probability, $\overline{Q}_h^k(s, a) \geq Q_h^{\star}(s, a) \geq \underline{Q}_h^k(s, a)$ holds simultaneously for all $(s, a, h, k) \in \mathcal{S} \times \mathcal{A} \times [H] \times [K]$.
\end{theorem}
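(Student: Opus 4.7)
The plan is to prove the two inequalities simultaneously by backward induction on $h$, coupled with an induction on the episode index $k$, under a single high-probability event obtained by union bound. Fix $(s,a,h)$ and let $k_1 < k_2 < \cdots < k_t$ enumerate the episodes in which $(s_h^k, a_h^k) = (s,a)$, so that $N_h^{k_t+1}(s,a) = t$. Unfolding the revised recursion (refinement (a)) yields the standard weighted representation
$$\oq_h^{k_t+1}(s,a) \;=\; \eta_0^t H + \sum_{i=1}^t \eta_i^t\bigl(R_i + \ov_{\tau_i}^{k_i}(s_{\tau_i}^{k_i}) + b_i\bigr),$$
where $R_i$ is the reward accumulated along the multi-step bootstrap rollout triggered at the $i$-th visit, $\tau_i$ is the random stopping step (the first $h' \ge h$ for which $|A_{h'}^{k_i}(s_{h'}^{k_i})| > 1$, or $H+1$), and the $\eta_i^t$ are the usual learning-rate weights. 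Because the truncation is moved off the $Q$-updates, this unfolding is clean; an analogous identity holds for $\uq_h^{k_t+1}$ with $\uv$ and $-b_i$.

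First I would establish conditional unbiasedness (refinement (b)). Let $\mathcal{F}_i$ denote the $\sigma$-algebra generated by the history up to and including the trigger $(s_h^{k_i}, a_h^{k_i}) = (s,a)$. By the definition of $\tau_i$, between steps $h+1$ and $\tau_i - 1$ the agent plays the unique element of a singleton action set; conditional on the side invariant that no optimal action has ever been eliminated from any $A_{h'}^{k_i}(\cdot)$, that unique element must be optimal. Iterating the Bellman optimality equation along the rollout and invoking the tower property then gives
$$\mathbb{E}\bigl[R_i + V_{\tau_i}^{\star}(s_{\tau_i}^{k_i}) \,\big|\, \mathcal{F}_i\bigr] \;=\; Q_h^{\star}(s,a).$$
This conditional identity is what replaces Equation~(A.5) of \cite{xu2021fine}; critically, the expectation is conditional, which is exactly what makes the next concentration step legitimate.

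Next I would apply Azuma--Hoeffding to the martingale difference sequence
$$D_i \;=\; \bigl[R_i + \ov_{\tau_i}^{k_i}(s_{\tau_i}^{k_i})\bigr] - \mathbb{E}\bigl[R_i + \ov_{\tau_i}^{k_i}(s_{\tau_i}^{k_i}) \,\big|\, \mathcal{F}_i\bigr],$$
adapted to $\{\mathcal{F}_i\}$ and bounded by $H$ (refinement (c); this is the centering AMB mis-applied by using the unconditional expectation). Standard computations on the weights $\eta_i^t$ yield $\bigl|\sum_{i=1}^t \eta_i^t D_i\bigr| \le O(\sqrt{H^3\iota / t})$ with probability at least $1 - p/(SAT)$. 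Treating the $\oq$ and $\uq$ martingales jointly shrinks the leading constant in the bonus $b_t$ (refinement (d)). A union bound over all $(s,a,h,k)$ produces a single high-probability event on which every weighted deviation is dominated by $\sum_{i=1}^t \eta_i^t b_i$.

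On that event, substituting the concentration bound and the induction hypothesis $\ov_{\tau_i}^{k_i}(\cdot) \ge V_{\tau_i}^{\star}(\cdot)$ into the unfolding gives
$$\oq_h^{k_t+1}(s,a) \;\ge\; \eta_0^t H + \sum_{i=1}^t \eta_i^t Q_h^{\star}(s,a) \;\ge\; Q_h^{\star}(s,a),$$
and the symmetric computation using $\uv_{\tau_i}^{k_i}(\cdot) \le V_{\tau_i}^{\star}(\cdot)$ delivers pessimism of $\uq$. Passing from $\oq$ back to $\ov$ (and from $\uq$ to $\uv$) requires that $\pi^{\star}(s)\in A_h^k(s)$ for every $(s,h,k)$, since only then can one assert $\ov_h^{k+1}(s) \ge \oq_h^{k+1}(s,\pi^{\star}(s)) \ge V_h^{\star}(s)$. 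The hard part will be exactly this circular coupling: unbiasedness of the bootstrap needs $\tau_i$ to stop at a state whose played action is optimal, which in turn requires non-elimination of $\pi^{\star}$, which in turn is a consequence of the very optimism/pessimism we are trying to prove. Resolving this calls for carrying a three-part invariant, namely optimism of $\oq$, pessimism of $\uq$, and $\pi^{\star}(s)\in A_h^k(s)$, jointly through the $(h,k)$ induction, verifying after each update that the elimination rule $\oq_h^{k+1}(s,a)\ge \uv_h^{k+1}(s)$ is automatically satisfied for optimal $a$ because $\oq \ge Q^{\star} = V^{\star} \ge \uv$ on the good event.
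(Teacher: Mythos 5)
Your proposal follows essentially the same route as the paper's proof of \Cref{ulqamb}: unroll the un-truncated recursion, prove the conditional unbiasedness identity $Q_h^{k,d}+Q_h^{k,ud}=Q_h^*$ by exploiting that decided states play their unique (hence optimal) surviving action, apply Azuma--Hoeffding with conditional centering, union bound, and run a joint induction on $(h,k)$ that carries optimism, pessimism, and non-elimination of the optimal action simultaneously. The ``circular coupling'' you flag at the end is exactly how the paper organizes its induction: the decomposition at episodes $\le k$ feeds the concentration step for episode $k+1$, and the resulting optimism/pessimism at $k+1$ feeds the decomposition at $k+1$.

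One step needs adjustment. You define the martingale difference as $D_i=[R_i+\ov_{\tau_i}^{k_i}(s_{\tau_i}^{k_i})]-\mathbb{E}[R_i+\ov_{\tau_i}^{k_i}(s_{\tau_i}^{k_i})\mid\mathcal{F}_i]$ and claim $|D_i|\le H$; but $\ov$ is truncated only at $H$ (not at $H-\tau_i+1$), so $R_i+\ov_{\tau_i}^{k_i}$ can reach $2H$, and Azuma--Hoeffding then only gives a deviation of order $4\sqrt{H^3\iota/t}$, which is not dominated by the guaranteed accumulated bonus $\sum_{i}\eta_i^t b_i\ge 2\sqrt{H^3\iota/t}$. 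The paper instead centers the single combined sequence $(\hat{Q}_h^{k^i,d}-Q_h^{k^i,d})(s,a)+V_{h'}^*(s_{h'}^{k^i})-Q_h^{k^i,ud}(s,a)$, i.e.\ it uses the optimal value $V^*$ at the stopped state---precisely the quantity appearing in your unbiasedness identity---whose increments are bounded by $H$ since the rewards plus $V_{h'}^*$ along one rollout total at most $H$; the gaps $\ov-V^*\ge 0$ and $\uv-V^*\le 0$ are then absorbed by the induction hypothesis, and one martingale serves both the optimism and the pessimism directions. (Relatedly, the halved bonus comes from combining the reward-part and value-part deviations into one martingale rather than two, not from treating the $\oq$ and $\uq$ sequences jointly.) With that substitution your argument coincides with the paper's.
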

The formal statement is given in \Cref{ulqamb}, with its proof in \Cref{31proof}. Based on this result, we can follow the remaining analysis of \citet{xu2021fine} to prove the following regret upper bound:
\begin{equation}
\label{ambresult}
    O\Bigg( \sum_{h = 1}^H\sum_{\Delta_{h}(s,a)>0}\frac{H^5\log(SAT)}{\Delta_{h}(s,a)}+ \frac{H^5|Z_{\textnormal{mul}}|\log(SAT)}{\Delta_{\textnormal{min}}}\Bigg).
\end{equation}
\section{Numerical Experiments}
\label{experiment}
In this section, we present numerical experiments\footnote{All experiments were conducted on a desktop equipped with an Intel Core i7-14700F processor and completed within 12 hours. The code is included in the supplementary materials.} conducted in synthetic environments, evaluating four algorithms: AMB, Refined AMB, UCB-Hoeffding, and ULCB-Hoeffding. We consider four \textbf{experiment scales} with $(H,S,A,K) = (2,3,3,10^5), (5,5,5,6 \times 10^5), (7,8,6,5 \times 10^6)$, and $(10,15,10,2 \times 10^7)$. For each $(s, a, h)$, rewards $r_h(s, a)$ are sampled independently from the uniform distribution over $[0,1]$, and transition kernels $\mathbb{P}_h(\cdot \mid s, a)$ are drawn uniformly from the $S$-dimensional probability simplex. The initial state of each episode is sampled uniformly at random from the state space.

We also set \(\iota = 1\) and the bonus coefficient \(c = 1\) for UCB-Hoeffding, ULCB-Hoeffding, and Refined AMB, and \(c = 2\) for AMB. This is because AMB applies concentration inequalities separately to the two estimators induced by multi-step bootstrapping. In contrast, all other algorithms, including the Refined AMB that combines the concentration analysis for multi-step bootstrapping, apply the concentration inequality only once, resulting in a bonus term with half the constant.

To report uncertainty, we collect 10 sample trajectories per algorithm under the same MDP instance. In \Cref{fig:regret_comparison} of \Cref{result}, we plot $\textnormal{Regret}(T)/\log(K+1)$ versus the number of episodes $K$. Solid lines indicate the median regret, and shaded regions represent the 10th-90th percentile intervals. 

The results show that ULCB-Hoeffding and Refined AMB achieve comparable performance, both outperforming the original AMB, while UCB-Hoeffding performs the best overall. In all settings, the regret curves for all algorithms except AMB flatten as $K$ increases, indicating logarithmic growth in regret, which is consistent with the fine-grained theoretical guarantees.

\section{Conclusion}  
This work establishes rigorous fine-grained gap-dependent regret bounds for model-free RL in episodic tabular MDPs. In the UCB-based setting, we introduce a novel analytical framework that enables the first fine-grained regret analysis of UCB-Hoeffding. For the non-UCB-based AMB algorithm, we propose two refinements, ULCB-Hoeffding and Refined AMB, that address its algorithmic and analytical issues. In particular, for Refined AMB, we establish a rigorous fine-grained regret bound in the non-UCB setting, while also demonstrating improved empirical performance.

\section*{Acknowledgment}
The work of Haochen Zhang, Zhong Zheng, and Lingzhou Xue was supported by the U.S. National Science Foundation under the grants DMS-1953189 and CCF-2007823 and by the U.S. National Institutes of Health under the grant 1R01GM152812.

\section*{Ethics Statement}
This work is primarily theoretical and does not involve human subjects, personal data, or any experiments requiring ethical approval. We have followed all guidelines outlined in the ICLR Code of Ethics, ensuring transparency, integrity, and fairness throughout the research process. There are no foreseeable ethical concerns or potential harms related to this study.

\section*{Reproducibility Statement}
To ensure reproducibility, we provide detailed theoretical analyses, including a clearly defined tabular MDP framework and assumptions in Section 2, as well as proof sketch outlines in Sections 3 and 4. Full proofs are included in the appendix. For the empirical results included in Section 5, all experiments were conducted on a desktop equipped with an Intel Core i7-14700F processor and completed within 12 hours. The complete source code is provided in the supplementary materials to support independent verification.

\section*{Use of Large Language Models}
In this work, the use of large language models was strictly limited to text polishing and language refinement. All core scientific ideas, problem formulation, methodology design, and experimental planning were independently developed by the authors.

\bibliography{iclr2026/main}
\bibliographystyle{iclr2026/iclr2026_conference}

\newpage
\appendix
In the appendix, \Cref{related} reviews related work, and \Cref{result} presents the experimental results. \Cref{modelbased} presents a comparison of results and techniques with model-based works. \Cref{keylemma} introduces several lemmas that support our proofs. \Cref{ucbproof} establishes the fine-grained gap-dependent regret upper bound for the UCB-Hoeffding algorithm, marking the first such result for a UCB-based method. \Cref{ulproof} provides proofs for both the worst-case and fine-grained gap-dependent regret bounds of the ULCB-Hoeffding algorithm. Finally, \Cref{ambproof} offers a detailed analysis of both algorithmic and technical issues of the original AMB algorithm and presents a proof of the fine-grained regret bound for our Refined AMB algorithm.
\section{Related work}
\label{related}
\textbf{Online RL for Tabular Episodic MDPs with Worst-Case Regret.} There are mainly two types of algorithms for reinforcement learning: model-based and model-free algorithms. Model-based algorithms learn a model from past experience and make decisions based on this model, while model-free algorithms only maintain a group of value functions and take the induced optimal actions. Due to these differences, model-free algorithms are usually more space-efficient and time-efficient compared to model-based algorithms. However, model-based algorithms may achieve better learning performance by leveraging the learned model.

Next, we discuss the literature on model-based and model-free algorithms for finite-horizon tabular MDPs with worst-case regret. \cite{auer2008near}, \cite{agrawal2017optimistic}, \cite{azar2017minimax}, \cite{kakade2018variance}, \cite{agarwal2020model}, \cite{dann2019policy}, \cite{zanette2019tighter},\cite{zhang2021reinforcement}, \cite{zhou2023sharp} and \cite{zhang2024settling} worked on model-based algorithms. Notably, \cite{zhang2024settling} provided an algorithm that achieves a regret of $\Tilde{O}(\min \{\sqrt{SAH^2T},T\})$, which matches the information lower bound. \cite{jin2018q}, \cite{zhang2025regret}, \cite{zhang2020almost}, \cite{li2021breaking} and \cite{menard2021ucb} work on model-free algorithms. The latter three have introduced algorithms that achieve minimax regret of $\Tilde{O}(\sqrt{SAH^2T})$. There are also several works focusing on online federated RL settings, such as \cite{zheng2023federated}, \cite{labbi2024federated}, \cite{zheng2024federated}, and \cite{zhang2025regret}. Notably, the last three works all achieve minimax regret bounds up to logarithmic factors.

\textbf{Suboptimality Gap.} When there exists a strictly positive suboptimality gap, logarithmic regret becomes achievable. Early studies established asymptotic logarithmic regret bounds \citep{auer2007logarithmic, tewari2008optimistic}. More recently, non-asymptotic bounds have been developed 
\citep{jaksch2010near, ok2018exploration, simchowitz2019non, he2021logarithmic}. Specifically, \citet{jaksch2010near} designed a model-based algorithm whose regret bound depends on the policy gap instead of the action gap studied in this paper. \cite{ok2018exploration} derived problem-specific logarithmic-type lower bounds for both structured and unstructured MDPs. \cite{simchowitz2019non} extended the model-based algorithm proposed by \cite{zanette2019tighter} and obtained logarithmic regret bounds. More recently, \citet{chen2025sharp} further improved model-based gap-dependent results. Logarithmic regret bounds have also been established in the linear function approximation setting \citep{he2021logarithmic,papini2021reinforcement,zhang2024achieving, zhang2026gap}. \citet{nguyen2023instance} provided gap-dependent guarantees for offline RL with linear function approximation.

Specifically, for model-free algorithms, \citet{yang2021q} demonstrated that the UCB-Hoeffding algorithm proposed in \citet{jin2018q} achieves a gap-dependent regret bound of $\tilde{O}(H^6SA/\Delta_{\textnormal{min}})$. This result was later improved by \citet{xu2021fine}, who introduced the Adaptive Multi-step Bootstrap (AMB) algorithm to achieve tighter bounds. Furthermore, \citet{zheng2024gap} provided gap-dependent analyses for algorithms with reference-advantage decomposition \citep{zhang2022near, li2021breaking, zheng2024federated}. More recently, \citet{zhang2025gapdependent} and \citet{zhang2025regret} extended gap-dependent analysis to federated $Q$-learning settings.

There are also some other works focusing on gap-dependent sample complexity bounds
\citep{jonsson2020planning, marjani2020best, al2021navigating, tirinzoni2022near, wagenmaker2022beyond, wagenmaker2022instance, wang2022gap, tirinzoni2023optimistic}.

\textbf{Other Problem-Dependent Performance.}
In practice, RL algorithms often outperform what their worst-case performance
guarantees would suggest. This motivates a recent line of works that investigate optimal performance in
various problem-dependent settings \citep{fruit2018efficient, jin2020reward, talebi2018variance, wagenmaker2022first, zhao2023variance, zhou2023sharp}.
\newpage
\section{Experimental Results}
\label{result}
This section provides the four numerical plots for four experiment scales with $(H,S,A,K) = (2,3,3,10^5), (5,5,5,6 \times 10^5), (7,8,6,5 \times 10^6)$, and $(10,15,10,2 \times 10^7)$ in \Cref{experiment}. The algorithms evaluated are AMB, represented by the blue curve; ULCB-Hoeffding, shown in purple; Refined AMB, depicted in green; and UCB-Hoeffding, indicated by the red curve. 
\begin{figure}[ht]
    \centering
    \begin{subfigure}[b]{0.49\textwidth}
        \centering
        \includegraphics[width=\linewidth]{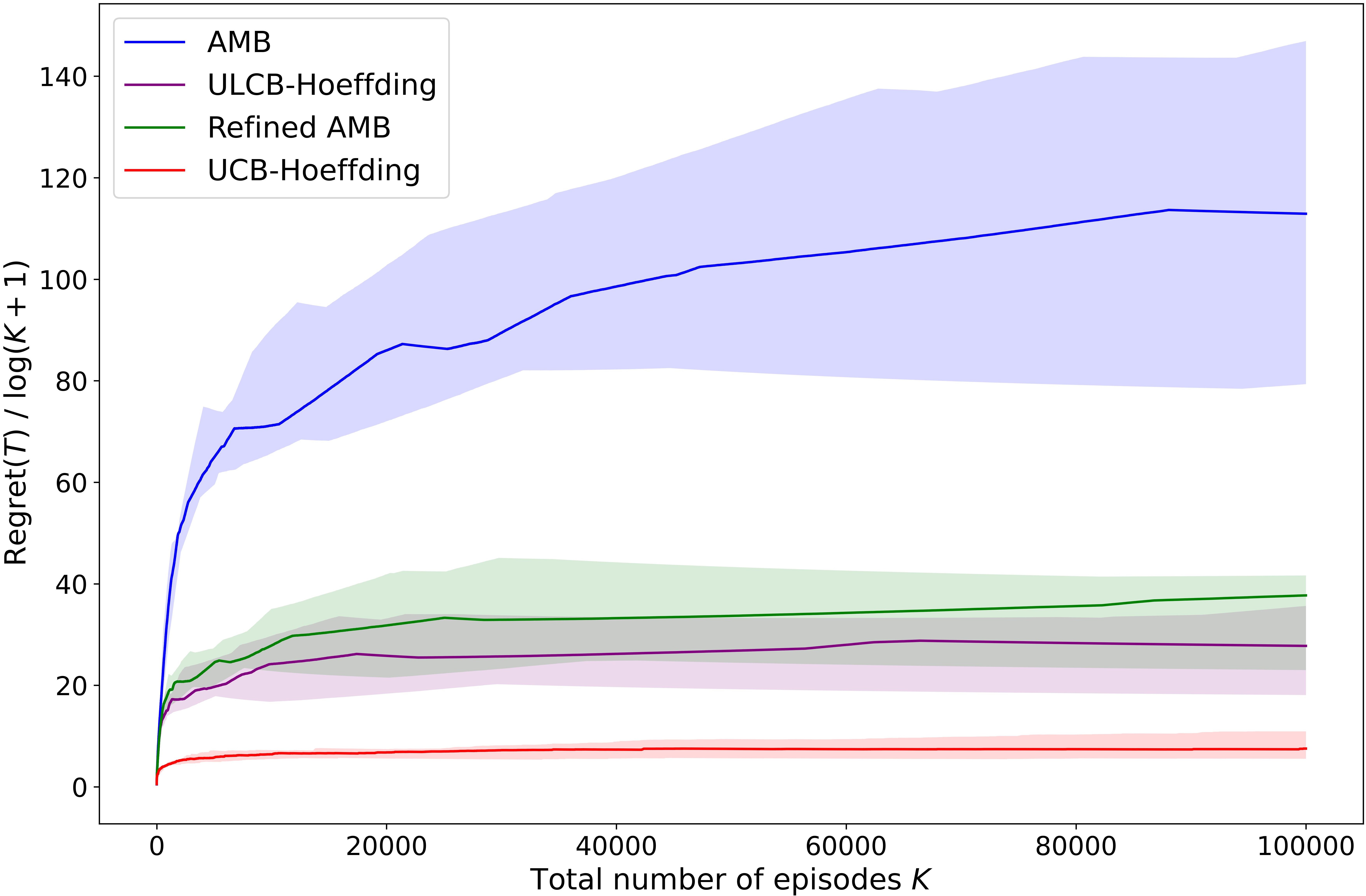}
        \caption{Regret for $(H,S,A) = (2,3,3)$}
    \end{subfigure}
    \hfill
    \begin{subfigure}[b]{0.49\textwidth}
        \centering
        \includegraphics[width=\linewidth]{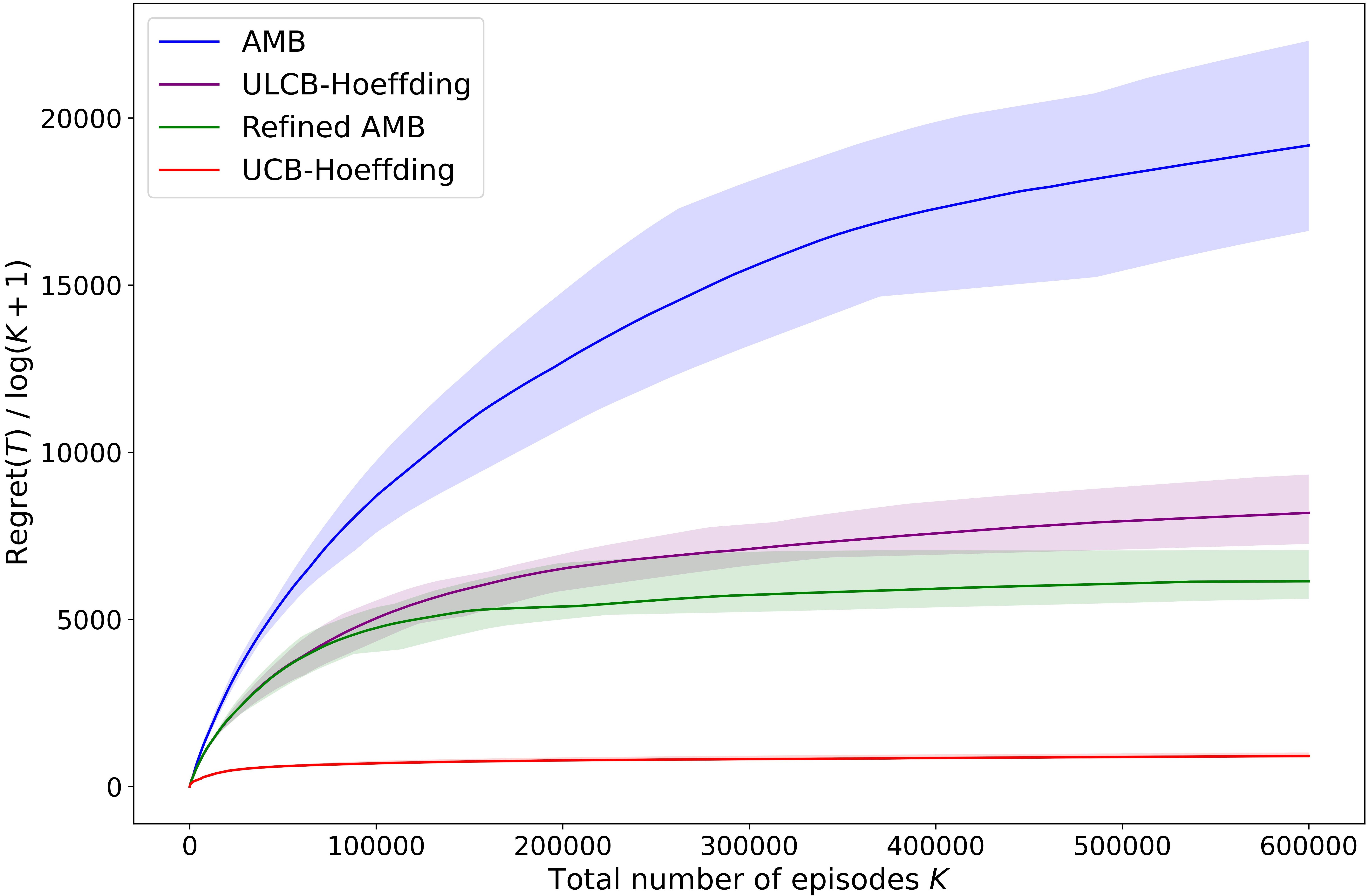}
        \caption{Regret for $(H,S,A) = (5,5,5)$}
    \end{subfigure}
    
    \vspace{1em}
    \begin{subfigure}[b]{0.49\textwidth}
        \centering
        \includegraphics[width=\linewidth]{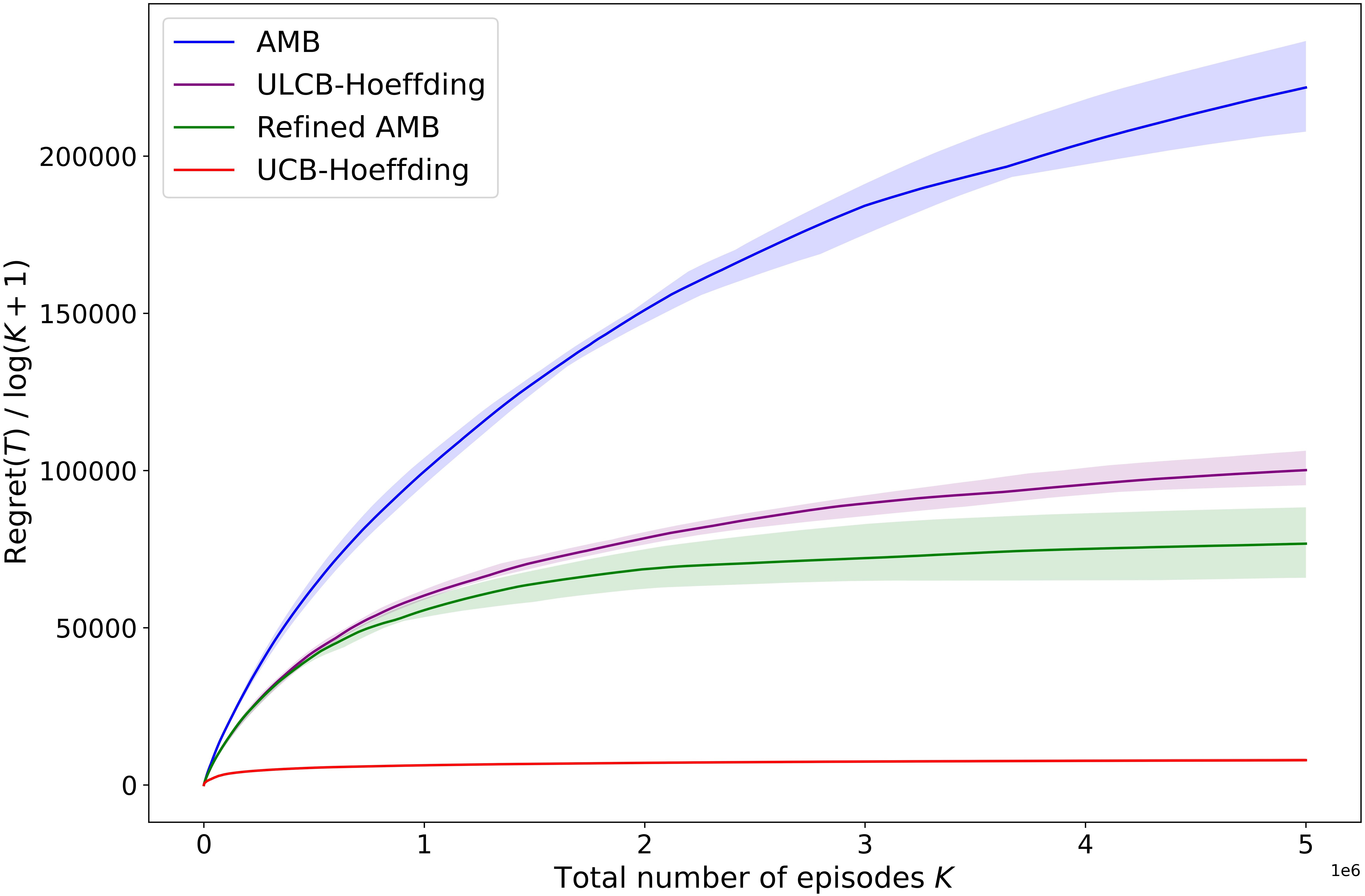}
        \caption{Regret for $(H,S,A) = (7,8,6)$}
    \end{subfigure}
    \hfill
    \begin{subfigure}[b]{0.49\textwidth}
        \centering
        \includegraphics[width=\linewidth]{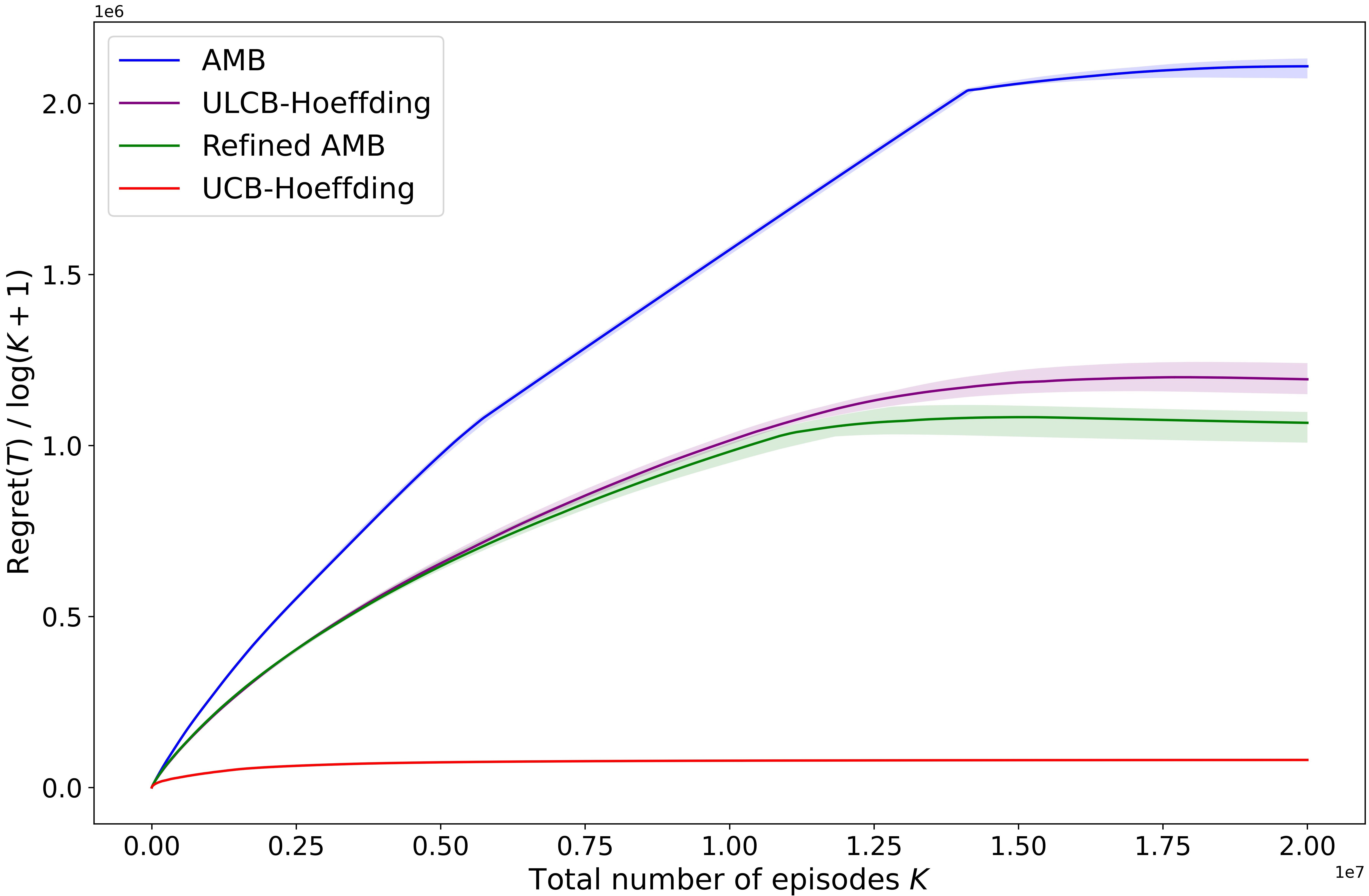}
        \caption{Regret for $(H,S,A) = (10,15,10)$}
    \end{subfigure}
    \caption{Regret Comparison of Different Algorithms.}
    \label{fig:regret_comparison}
\end{figure}

Each plot displays the comparative performance of four distinct algorithms. In each plot, we collect 10 sample trajectories per algorithm under the same MDP instance and plot the results of $\textnormal{Regret}(T)/\log(K+1)$ versus the number of episodes $K$. Solid lines represent the median regret, while shaded regions show the range between the 10th and 90th percentiles.

\section{Comparison with Model-Based Works}
\label{modelbased}
\subsection{Regret Upper Bound Comparison}
We first compare our fine-grained gap-dependent regret upper bound with those obtained in model-based works. Among them, the state-of-the-art result in \cite{chen2025sharp} establishes the following regret upper bound:
$$\bar{O}\left(\sum_{h=1}^H \sum_{\Delta_h(s,a)>0} \frac{H^2}{\Delta_h(s,a)}+\frac{H^2 |Z_{\textnormal{opt}}|}{\Delta_{\min}}+SAH^4 \max\{S,H\}\right),$$
where $\bar{O}$ hides variance-dependent and logarithmic factors. 

Compared with our results in \Cref{regretresult} for UCB-Hoeffding and ULCB-Hoeffding, as well as in \Cref{ambresult} for Refined AMB, this result achieves an improved $H$-dependence in the gap-dependent terms, while maintaining a comparable $H$-dependence in the last term. However, the final term scales as $SAH^4\max\{S,H\}$, which introduces an additional $S^2$ dependence when $S \ge H$. Such a quadratic dependence on the state space dimensions can be prohibitive in large-scale MDPs.

This improved $H$-dependence stems from the strong performance of the model-based algorithm MVP \citep{zhang2024settling}. MVP achieves sharper regret guarantees by incorporating variance-based bonuses and explicitly estimating the transition kernel, thereby enabling more accurate value function estimation. However, as is typical for model-based methods, maintaining transition-kernel estimates incurs a memory cost of order $O(S^2AH)$, which is larger than the $O(SAH)$ memory requirement of model-free methods.

Our fine-grained analytical framework developed in \Cref{techniques} can potentially further improve the $H$-dependence in the fine-grained regret upper bound when applied to more advanced model-free RL algorithms, such as UCB-Bernstein \citep{jin2018q}, UCB-Advantage \citep{zhang2020almost}, and Q-EarlySettled-Advantage \citep{li2021breaking}.

\subsection{Technical Comparison}
All model-based works \citep{simchowitz2019non,dann2021beyond,chen2025sharp} establish fine-grained gap-dependent regret upper bounds whose forms are similar to that in \Cref{fineg}. However, their analytical techniques cannot be directly applied to the model-free setting, particularly to the UCB-Hoeffding algorithm. To clarify this distinction, we next describe the common analytical approach underlying these model-based works.

All three works begin by using the following relationship to upper bound the expected regret:
$$\mathbb{E}(\textnormal{Regret}(T)) \leq \mathbb E\left[\sum_{k=1}^K\sum_{h=1}^H E_h^k(s,a)\right],$$
where
\begin{equation}
\label{defe}
    E_h^k(s,a)=Q_h^k(s,a)-\left(r_h(s,a)+\mathbb E_{s'\sim \mathbb{P}_h(\cdot \mid s, a)}[V_{h+1}^k(s')]\right)
\end{equation}
denotes the surplus at $(s,a,h,k)\in \mathcal{S} \times\mathcal{A}\times[H]\times[K]$. The surplus is then bounded in terms of the suboptimality gap $\Delta_h(s,a)$ to derive a fine-grained regret bound.

For all these model-based works, the $Q$-estimates are updated as
$$Q_h^k(s,a) = r_h(s,a) + \mathbb{E}_{s'\sim \hat{\mathbb{P}}_h^k(\cdot \mid s, a)}[V_{h+1}^k(s')] + b_h^k(s,a),$$
where $\hat{\mathbb{P}}_h^k(\cdot \mid s, a)$ is the empirical estimate of the transition kernel $\mathbb{P}_h(\cdot \mid s, a)$ at the beginning of episode $k$. Substituting this update into \Cref{defe} gives
$$E_h^k(s,a)=b_h^k(s,a)+\mathbb{E}_{s'\sim \hat{\mathbb{P}}_h^k(\cdot \mid s, a)}[V_{h+1}^k(s')]-\mathbb E_{s'\sim \mathbb{P}_h(\cdot \mid s, a)}[V_{h+1}^k(s')]).$$
The two conditional expectation terms share the same input $V_{h+1}^k$, and $\hat{\mathbb{P}}_h^k(\cdot \mid s, a)$ is an unbiased estimator of the transition kernel $\mathbb{P}_h(\cdot \mid s, a)$. Therefore, their difference can be controlled using standard concentration inequalities and empirical process techniques.

The situation is fundamentally different for model-free methods, which do not maintain an empirical estimate of the transition kernel. In particular, for UCB-Hoeffding, the $Q$-estimates are updated as
$$Q_h^k(s,a) = r_h(s,a) + \sum_{i = 1}^{N_h^k(s_h^{k}, a_h^{k})}\eta_i^{N_h^k(s_h^{k}, a_h^{k})}V_{h+1}^{k^i(s_h^{k}, a_h^{k},h)} (s_{h+1}^{k^i}) + \beta_{N_h^k(s_h^{k}, a_h^{k})},$$
where $\beta_n$ denotes the cumulative bonus for all $n \in \mathbb{N}_+$ defined in \Cref{event} later. Accordingly,
$$E_h^k(s,a)=b_h^k(s,a)+\sum_{i = 1}^{N_h^k(s_h^{k}, a_h^{k})}\eta_i^{N_h^k(s_h^{k}, a_h^{k})}\left(V_{h+1}^{k^i(s_h^{k}, a_h^{k},h)} (s_{h+1}^{k^i})-\mathbb E_{s'\sim \mathbb{P}_h(\cdot \mid s, a)}[V_{h+1}^k(s')]\right).$$
A key distinction is that the two terms inside the difference now involve different value function estimates: the empirical term uses historical estimates $V_{h+1}^{k^i}$, whereas the conditional expectation term uses the current estimate $V_{h+1}^k$. This temporal mismatch introduces additional bias, which prevents the analytical framework developed for model-based methods from being directly extended to the model-free setting.

\newpage
\section{General Lemmas}
\label{keylemma}
\begin{lemma}
\label{Hoeffding}
\textnormal{(Azuma-Hoeffding Inequality).} Suppose {$\left\{X_k\right\}_{k=0}^\infty$} is a martingale and $|X_k-X_{k-1}|\leq c_k$, $\forall k\in\mathbb{N}_+$, almost surely. Then for any $N\in\mathbb{N}_+$ and $\epsilon>0$, it holds that:
$$\mathbb{P}\left(|X_N-X_{0}|\geq \epsilon\right) \leq 2\exp \left(-\frac{\epsilon^2}{2\sum_{k=1}^{N}c_k^2}\right).$$
\end{lemma}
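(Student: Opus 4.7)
The plan is to follow the standard exponential-moment / Chernoff approach for martingales with bounded differences. Write $D_k := X_k - X_{k-1}$ and let $\{\mathcal{F}_k\}_{k\geq 0}$ be the filtration with respect to which $\{X_k\}$ is a martingale, so that $\mathbb{E}[D_k \mid \mathcal{F}_{k-1}] = 0$ while the hypothesis gives $|D_k| \leq c_k$ almost surely. For any $\lambda > 0$, Markov's inequality applied to the nonnegative random variable $e^{\lambda(X_N - X_0)}$ yields
$$\mathbb{P}(X_N - X_0 \geq \epsilon) \;\leq\; e^{-\lambda \epsilon}\,\mathbb{E}\!\left[\prod_{k=1}^{N} e^{\lambda D_k}\right].$$

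The key step is to control each conditional moment generating factor by Hoeffding's lemma: for any mean-zero random variable $D$ almost surely contained in $[-c,c]$, one has $\mathbb{E}[e^{\lambda D}] \leq e^{\lambda^2 c^2/2}$, a consequence of convexity of $x\mapsto e^{\lambda x}$ combined with the zero-mean constraint. Applied conditionally on $\mathcal{F}_{k-1}$ this gives $\mathbb{E}[e^{\lambda D_k} \mid \mathcal{F}_{k-1}] \leq e^{\lambda^2 c_k^2/2}$. Iterating via the tower property---conditioning first on $\mathcal{F}_{N-1}$, pulling out the factor $e^{\lambda D_N}$, and proceeding inductively---the joint expectation factorizes into
$$\mathbb{E}\!\left[\prod_{k=1}^{N} e^{\lambda D_k}\right] \;\leq\; \exp\!\left(\tfrac{\lambda^2}{2}\sum_{k=1}^{N} c_k^2\right).$$
Substituting back and then optimizing $\lambda = \epsilon / \sum_{k=1}^N c_k^2$ produces the one-sided bound
$$\mathbb{P}(X_N - X_0 \geq \epsilon) \;\leq\; \exp\!\left(-\frac{\epsilon^2}{2\sum_{k=1}^N c_k^2}\right).$$

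Finally I would apply the identical argument to the martingale $\{-X_k\}$, which has the same difference bounds $c_k$, to obtain the matching tail bound on $\mathbb{P}(X_N - X_0 \leq -\epsilon)$, and union-bound the two one-sided events to pick up the factor of $2$ in the statement. Since this is a classical inequality, there is no real obstacle; the only nontrivial ingredient is Hoeffding's lemma, and the conceptual care lies in applying it \emph{conditionally} on $\mathcal{F}_{k-1}$ so that the resulting MGF bounds compose correctly through the tower property---exactly the kind of conditioning the main text emphasizes is required to justify concentration for multi-step bootstrapping estimators.
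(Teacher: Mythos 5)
Your proposal is the standard and correct proof of the Azuma--Hoeffding inequality: Markov's inequality on the exponential moment, Hoeffding's lemma applied \emph{conditionally} on $\mathcal{F}_{k-1}$, the tower property to factorize the moment generating function, optimization at $\lambda = \epsilon/\sum_{k=1}^N c_k^2$, and a union bound over the two one-sided tails. The paper states this lemma as a classical result without proof, so there is no authorial argument to compare against; your write-up is complete and would serve as a valid proof if one were required.
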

Based on the definition of $\eta_n^N$, it can be easily verified that $\sum_{n=1}^{N} \eta_n^N = \mathbb{I}[{N>0]}$.
We also have the following properties proved in Lemma 1 of \cite{li2021breaking}.
\begin{lemma}
\label{etasum}
For any integer $N > 0$, the following properties hold:

(a) For any $n \in \mathbb{N}_+$,
$$\sum_{N=n}^{\infty} \eta_n^N \leq 1 + \frac{1}{H}.$$
(b) For any $N \in \mathbb{N}_+$,
$$\sum_{n=1}^{N} (\eta_n^N)^2 \leq \frac{2H}{N}.$$
(c) For any $t \in \mathbb{N}_+$ and $\alpha \in (0,1)$,
$$\frac{1}{t^{\alpha}} \leq \sum_{i=1}^t \frac{\eta_i^t}{i^{\alpha}} \leq \frac{2}{t^{\alpha}}.$$
\end{lemma}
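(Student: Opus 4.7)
The plan is to start from an explicit closed form for $\eta_n^N$ and then derive all three claims by short combinatorial manipulations. Since $\eta_t = (H+1)/(H+t)$, we have $1-\eta_t = (t-1)/(H+t)$, and the product in the definition of $\eta_n^N$ telescopes to
$$\eta_n^N \;=\; \frac{H+1}{H+n}\prod_{i=n+1}^{N}\frac{i-1}{H+i} \;=\; \frac{(H+1)\,(H+n-1)!\,(N-1)!}{(n-1)!\,(H+N)!}, \qquad 1 \leq n \leq N,$$
together with $\eta_n^N = 0$ for $n > N$. All three parts then reduce to sums or bounds on this expression.

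For part (a), I would use the one-step identity
$$\frac{(N-1)!}{(H+N)!} \;=\; \frac{1}{H}\!\left(\frac{(N-1)!}{(H+N-1)!} \;-\; \frac{N!}{(H+N)!}\right),$$
which follows by direct subtraction. Summing from $N=n$ to $\infty$ collapses the tail to $\frac{(n-1)!}{H\,(H+n-1)!}$, and multiplying by the prefactor $(H+1)(H+n-1)!/(n-1)!$ yields exactly $(H+1)/H = 1 + 1/H$, giving the claimed inequality with equality. For part (b), the closed form shows $\eta_n^N$ is non-decreasing in $n$ for fixed $N$, so $\max_{1 \leq n \leq N}\eta_n^N = \eta_N^N = \eta_N = (H+1)/(H+N) \leq 2H/N$; combined with the identity $\sum_{n=1}^N \eta_n^N = 1$ stated in the excerpt, this gives $\sum_{n=1}^N (\eta_n^N)^2 \leq \bigl(\max_n \eta_n^N\bigr)\sum_n \eta_n^N \leq 2H/N$.

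For part (c), the lower bound is immediate: $1/i^\alpha \geq 1/t^\alpha$ for $i \leq t$ and $\sum_i \eta_i^t = 1$ together yield $\sum_i \eta_i^t/i^\alpha \geq 1/t^\alpha$. For the upper bound, I would proceed by induction on $t$ using the recursion $\eta_i^t = (1-\eta_t)\eta_i^{t-1}$ for $i < t$ and $\eta_t^t = \eta_t$, which gives
$$\sum_{i=1}^{t}\frac{\eta_i^t}{i^\alpha} \;=\; (1-\eta_t)\sum_{i=1}^{t-1}\frac{\eta_i^{t-1}}{i^\alpha} + \frac{\eta_t}{t^\alpha}.$$
Assuming inductively $\sum_{i=1}^{t-1}\eta_i^{t-1}/i^\alpha \leq 2/(t-1)^\alpha$ and substituting $1-\eta_t = (t-1)/(H+t)$, $\eta_t = (H+1)/(H+t)$, the inductive step reduces to the elementary inequality $2(t-1)^{1-\alpha}t^\alpha + H + 1 \leq 2(H+t)$, which holds because $(t-1)^{1-\alpha}t^\alpha = t\bigl((t-1)/t\bigr)^{1-\alpha} \leq t$ for $\alpha \in (0,1)$.

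The main obstacle is the upper bound in part (c): naive estimates such as bounding each $\eta_i^t$ by $\max_i \eta_i^t$ or applying Cauchy--Schwarz would lose a factor logarithmic in $t$, so the tight constant $2$ depends on the inductive/telescoping argument above, which is exactly how the bound is established in Lemma~1 of \cite{li2021breaking}. Once part (c) is in hand, parts (a) and (b) are routine one-line consequences of the closed form.
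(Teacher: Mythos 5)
Your proposal is correct. Note that the paper does not prove this lemma at all; it simply cites Lemma~1 of \citet{li2021breaking}, so there is no in-paper argument to compare against. Your derivation is a valid self-contained proof and essentially reproduces the standard argument: the closed form $\eta_n^N = \frac{(H+1)(N-1)!\,(H+n-1)!}{(n-1)!\,(H+N)!}$ is right (since $1-\eta_{i'} = \frac{i'-1}{H+i'}$ the product telescopes as you claim), the partial-fraction identity in part (a) checks out and gives the sum exactly equal to $1+1/H$, the monotonicity of $\eta_n^N$ in $n$ together with $\sum_n \eta_n^N = 1$ gives part (b) (in fact the sharper bound $\frac{H+1}{H+N}$), and the induction for the upper bound in part (c) closes because $2(t-1)^{1-\alpha}t^{\alpha} + H + 1 \leq 2t + 2H$ for $H\geq 1$. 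The only cosmetic gap is that you should state the base case $t=1$ of the induction in part (c) explicitly ($\eta_1^1 = \eta_1 = 1 \leq 2$); the inductive hypothesis at $t-1=0$ is an empty sum and cannot serve as the starting point. Your closing remark is also apt: a crude bound via $\max_i \eta_i^t$ would not recover the constant $2$ in part (c), so the inductive step is genuinely needed there.
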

The following lemma summarizes some basic but useful properties of the defined weights. When $(s,a,h)$ is clear from context, we also write $k^i := k^i(s,a,h)$ and $N_h^k:= N_h^k(s,a)$ for simplicity.
\begin{lemma}
\label{weight}
For any given non-negative weight sequence $\{\omega_h^k\}_{k \in [K]}$ at step $h$, the following relationships hold for any $k' \in [K]$ and $h \leq h' <H$:
\begin{itemize}
    \item[(a)] $\sum_{s,a} \tilde{\omega}_{h}(k',h'+1, s, a) = \omega_{h}(k',h'+1)$.
    \item[(b)] $\|\omega_h(\cdot,h', s, a)\|_{\infty} \leq \|\omega_h(\cdot,h')\|_{\infty}$.
    \item[(c)]$\|\omega_h(\cdot,h',s,a)\|_{1}  \leq \|\omega_h(\cdot,h')\|_{\infty}N_{h'}^{K+1}(s,a)$.
    \item[(d)] $\|\omega_h(\cdot,h')\|_{1} = \sum_{s,a} \|\omega_h(\cdot,h',s,a)\|_{1}$.
    \item[(e)]
$\|\omega_h(\cdot,h'+1)\|_{\infty}\leq \left(1+\frac{1}{H}\right)\|\omega_h(\cdot,h')\|_{\infty},\ \|\omega_h(\cdot,h'+1)\|_{1} \leq \|\omega_h(\cdot,h')\|_{1}.$
\end{itemize}
\end{lemma}
\begin{proof}
(a) is because $$\sum_{s,a} \tilde{\omega}_{h}(k',h'+1, s, a) = \sum_{s,a} \omega_{h}(k',h'+1)\mathbb{I}[(s_{h'}^{k'}, a_{h'}^{k'}) = (s, a)] = \omega_{h}(k',h'+1).$$
(b) is because for any $k' \in [K]$
$$\omega_{h}(k',h', s, a) = \omega_{h}(k',h') \cdot \mathbb{I}\left[(s_{h'}^{k'}, a_{h'}^{k'}) = (s, a)\right]\leq \omega_{h}(k',h') \leq \|\omega_h(\cdot,h')\|_{\infty}.$$
(c) is because 
$$\|\omega_h(\cdot,h',s,a)\|_{1} \leq \|\omega_h(\cdot,h')\|_{\infty}\sum_{k'=1}^K  \mathbb{I}\left[(s_{h'}^{k'}, a_{h'}^{k'}) = (s, a)\right]  = \|\omega_h(\cdot,h')\|_{\infty}N_{h'}^{K+1}(s,a).$$
(d) is because
$$\sum_{s,a} \|\omega_h(\cdot,h',s,a)\|_{1} = \sum_{s,a} \sum_{k'=1}^K \omega_{h}(k',h') \cdot \mathbb{I}[(s_{h'}^{k'}, a_{h'}^{k'}) = (s, a)] = \sum_{k'=1}^K \omega_{h}(k',h') = \|\omega_h(\cdot,h')\|_{1}.$$
For (e), we first prove that
\begin{align}
\label{linkdef}
    \omega_{h}(k',h'+1) &= \sum_{i=N_{h'}^{k'+1}(s_{h'}^{k'},a_{h'}^{k'})}^{N_{h'}^{K+1}(s_{h'}^{k'},a_{h'}^{k'})-1}  \omega_{h}(k^{i+1}(s_{h'}^{k'},a_{h'}^{k'},h'),h')\eta_{N_{h'}^{{k'}+1}(s_{h'}^{k'},a_{h'}^{k'})}^{i} \nonumber\\
    &= \sum_{k= 1}^K\omega_{h}(k,h')\sum_{j =1}^{N_{h'}^{k}(s_{h'}^{k},a_{h'}^{k})}\eta_j^{N_{h'}^{k}(s_{h'}^{k},a_{h'}^{k})}\mathbb{I}\left[k^j(s_{h'}^{k},a_{h'}^{k},h') = k'\right].
\end{align}
This is because according to the definition of $k^j$, $\mathbb{I}\left[k^j(s_{h'}^{k},a_{h'}^{k},h') = k'\right] = 1$ if and only if $(s_{h'}^{k'},a_{h'}^{k'}) = (s_{h'}^k,a_{h'}^k)$, $k' \leq k-1$ and $j = N_{h'}^{k'+1}(s_{h'}^{k'},a_{h'}^{k'})$ and then we have:
\begin{align*}
    &\sum_{k= 1}^K\omega_{h}(k,h')\sum_{j =1}^{N_{h'}^{k}(s_{h'}^{k},a_{h'}^{k})}\eta_j^{N_{h'}^{k}(s_{h'}^{k},a_{h'}^{k})}\mathbb{I}\left[k^j(s_{h'}^{k},a_{h'}^{k},h') = k'\right]\\
    &=\sum_{k= k'+1}^K\omega_{h}(k,h')\eta_{N_{h'}^{k'+1}(s_{h'}^{k'},a_{h'}^{k'})}^{N_{h'}^{k}(s_{h'}^{k},a_{h'}^{k})}\mathbb{I}\left[(s_{h'}^{k'},a_{h'}^{k'}) = (s_{h'}^k,a_{h'}^k)\right] \\
    &= \sum_{i=N_{h'}^{k'+1}(s_{h'}^{k'},a_{h'}^{k'})}^{N_{h'}^{K+1}(s_{h'}^{k'},a_{h'}^{k'})-1}  \omega_{h}(k^{i+1}(s_{h'}^{k'},a_{h'}^{k'},h'),h')\eta_{N_{h'}^{{k'}+1}(s_{h'}^{k'},a_{h'}^{k'})}^{i} 
\end{align*}
The last equation is because for $i = N_{h'}^k(s_{h'}^{k'},a_{h'}^{k'})$ and $(s_{h'}^{k'},a_{h'}^{k'}) = (s_{h'}^k,a_{h'}^k)$, we have $k = k^{i+1}(s_{h'}^{k'},a_{h'}^{k'},h')$. Moreover, due to the indicator 
$\mathbb{I}[(s_{h'}^{k'},a_{h'}^{k'}) = (s_{h'}^k,a_{h'}^k)]$, the summation in the second equation above only includes episodes in which $(s_{h'}^{k'},a_{h'}^{k'})$ is visited. Therefore, it terminates at the episode of the last visit to $(s_{h'}^{k'},a_{h'}^{k'})$ with $i =N_{h'}^{K+1}(s_{h'}^{k'},a_{h'}^{k'}) - 1$. Then for any $k' \in [K]$,
\begin{align*}
    \omega_{h}(k',h'+1) \leq \|\omega_h(\cdot,h')\|_{\infty}\sum_{i=N_{h'}^{k'+1}(s_{h'}^{k'},a_{h'}^{k'})}^{N_{h'}^{K+1}(s_{h'}^{k'},a_{h'}^{k'})-1}  \eta_{N_{h'}^{{k'}+1}(s_{h'}^{k'},a_{h'}^{k'})}^{i} \leq \left(1+\frac{1}{H}\right)\|\omega_h(\cdot,h')\|_{\infty}.
\end{align*}
This proves the first conclusion. The second conclusion is proved by \Cref{linkdef} and
\begin{equation*}
   \sum_{k' = 1}^K\omega_{h}(k',h'+1) = \sum_{k= 1}^K\omega_{h}(k,h')\left(\sum_{i = 1}^{N_{h'}^{k}}\eta_i^{N_{h'}^{k}}\right)\leq \sum_{k = 1}^K\omega_{h}(k,h') = \|\omega_h(\cdot,h')\|_{1}.
\end{equation*}
\end{proof}

\begin{lemma}
\label{wN} For any non-negative weight sequence $\left\{\omega_{h}^{k}\right\}_{k}$ at step $h \in [H]$, any subsequent step $h' \in [h,H]$, any state-action pair $(s,a) \in \mathcal{S} \times \mathcal{A}$, and any $\alpha \in (0,1)$, it holds that:
\begin{equation*}
    \sum_{k=1,N_{h'}^k>0}^K\frac{\omega_{h}(k,h',s,a)}{N_{h'}^k(s_{h'}^k,a_{h'}^k)^{\alpha}} \leq \frac{1}{1-\alpha}\|\omega_h(\cdot,{h'})\|_{\infty}^{\alpha}\|\omega_h(\cdot,{h'},s,a)\|_{1}^{1-\alpha},
\end{equation*}
and
\begin{equation*}
    \sum_{k=1,N_{h'}^k>0}^K\frac{\omega_{h}(k,h')}{N_{h'}^k(s_{h'}^k,a_{h'}^k)^{\alpha}} \leq \frac{1}{1-\alpha}(SA\|\omega_h(\cdot,h')\|_{\infty})^{\alpha}\|\omega_h(\cdot,h')\|_{1}^{1-\alpha},
\end{equation*}
\end{lemma}

\begin{proof}
We first note that
\begin{align}
\label{wNproof}
        \sum_{k=1,N_{h'}^k>0}^K\frac{\omega_{h}(k,h',s,a)}{N_{h'}^k(s_{h'}^k,a_{h'}^k)^{\alpha}} &= \sum_{k=1,N_{h'}^k>0}^K\frac{\omega_{h}(k,h')\mathbb{I}[(s_{h'}^k,a_{h'}^k) = (s,a)]}{N_{h'}^k(s_{h'}^k,a_{h'}^k)^{\alpha}} \nonumber\\
        &= \sum_{i=1}^{N_{h'}^{K}(s,a)}\frac{\omega_{h}(k^{i+1}(s,a,h'),h')}{i^{\alpha}}.
\end{align} 
 Then we have 
$$\sum_{i=1}^{N_{h'}^{K}(s,a)}\omega_{h}(k^{i+1}(s,a,h'),h') \leq \|\omega_h(\cdot,h',s,a)\|_{1}.$$
Given the term on RHS of \Cref{wNproof}, when the weights $\omega_{h}(k^{i+1}(s,a,h'),h')$ concentrate on the former terms with smaller index $i \geq 1$, we can obtain the largest value. 
Let 
$$c_{s,a,h'} = \left\lceil \frac{\|\omega_h(\cdot,h',s,a)\|_{1}}{\|\omega_h(\cdot,h')\|_{\infty}} \right\rceil\ \textnormal{ and }\ d_{s,a,h'} = \|\omega_h(\cdot,h',s,a)\|_{1} - (c_{s,a,h'}-1)\|\omega_h(\cdot,h')\|_{\infty}.$$
Then we have:
\begin{align}
    &\sum_{k=1,N_{h'}^k>0}^K\frac{\omega_{h}(k,h',s,a)}{N_{h'}^k(s_{h'}^k,a_{h'}^k)^{\alpha}} \nonumber\\
    &\leq \sum_{i=1}^{c_{s,a,h'}-1}\frac{\|\omega_h(\cdot,h')\|_{\infty}}{i^{\alpha}} +\frac{d_{s,a,h'}}{c_{s,a,h'}^{\alpha}} \nonumber\\
    &\leq \|\omega_h(\cdot,h')\|_{\infty}\sum_{i=1}^{c_{s,a,h'}-1}\frac{i^{1-\alpha}-(i-1)^{1-\alpha}}{1-\alpha} +\frac{d_{s,a,h'}}{c_{s,a,h'}^{\alpha}} \label{split1}\\
    &= \frac{\|\omega_h(\cdot,h')\|_{\infty}(c_{s,a,h'}-1)^{1-\alpha}}{1-\alpha}+\frac{d_{s,a,h'}}{c_{s,a,h'}^{\alpha}} \nonumber\\
    &= \|\omega_h(\cdot,h')\|_{\infty}^{\alpha}\left(\frac{\left[(c_{s,a,h'}-1)\|\omega_h(\cdot,h')\|_{\infty}\right]^{1-\alpha}}{1-\alpha} + \frac{d_{s,a,h'}}{(c_{s,a,h'}\|\omega_h(\cdot,h')\|_{\infty})^{\alpha}}\right)\nonumber\\
    &\leq \|\omega_h(\cdot,h')\|_{\infty}^{\alpha}\left(\frac{\left[(c_{s,a,h'}-1)\|\omega_h(\cdot,h')\|_{\infty}\right]^{1-\alpha}}{1-\alpha} + \frac{d_{s,a,h'}}{\|\omega_h(\cdot,h',s,a)\|_{1}^{\alpha}}\right). \label{Nsplit}
\end{align}
Here the last inequality is because $c_{s,a,h'}\|\omega_h(\cdot,h')\|_{\infty} \geq \|\omega_h(\cdot,h',s,a)\|_{1}$. \Cref{split1} is because for any $0< y <x$ and $\alpha \in (0,1)$, we have:
$$\frac{x-y}{x^{\alpha}} \leq \frac{1}{1-\alpha}(x^{1-\alpha}-y^{1-\alpha}).$$
Then, let $x = i$ and $y = i-1$, it holds that:
\begin{align*}
    \frac{1}{i^{\alpha}} \leq \frac{1}{1-\alpha}(i^{1-\alpha}-(i-1)^{1-\alpha}).
\end{align*}
Also let $x = \|\omega_h(\cdot,h',s,a)\|_{1}$ and $y = (c_{s,a,h'}-1)\|\omega_h(\cdot,h')\|_{\infty}$, we have:
$$ \frac{d_{s,a,h'}}{\|\omega_h(\cdot,h',s,a)\|_{1}^{\alpha}} + \frac{\left[(c_{s,a,h'}-1)\|\omega_h(\cdot,h')\|_{\infty}\right]^{1-\alpha}}{1-\alpha} \leq \frac{\|\omega_h(\cdot,h',s,a)\|_{1}^{1-\alpha}}{1-\alpha}.$$
Applying this inequality to \Cref{Nsplit}, we have:
\begin{align*}
    \sum_{k=1,N_{h'}^k>0}^K\frac{\omega_{h}(k,h')\mathbb{I}[(s_{h'}^k,a_{h'}^k) = (s,a)]}{N_{h'}^k(s_{h'}^k,a_{h'}^k)^{\alpha}} \leq \frac{1}{1-\alpha}\|\omega_h(\cdot,h')\|_{\infty}^{\alpha}\|\omega_h(\cdot,h',s,a)\|_{1}^{1-\alpha}. 
\end{align*}
Therefore, we have proved the first conclusion. By summing this conclusion for all state-action pairs $(s,a)$, we reach:
\begin{align*}
    \sum_{k=1,N_{h'}^k>0}^K\frac{\omega_{h}(k,h')}{N_{h'}^k(s_{h'}^k,a_{h'}^k)^{\alpha}} &\leq \sum_{s,a} \frac{1}{1-\alpha}\|\omega_h(\cdot,h')\|_{\infty}^{\alpha}\|\omega_h(\cdot,h',s,a)\|_{1}^{1-\alpha} \\
    &\leq \frac{1}{1-\alpha}(SA\|\omega_h(\cdot,h')\|_{\infty})^{\alpha}\|\omega_h(\cdot,h')\|_{1}^{1-\alpha}.
\end{align*}
The last inequality is by Hölder's inequality, as $$\sum_{s,a}\|\omega_h(\cdot,h',s,a)\|_{1}^{1-\alpha} \leq (SA)^{\alpha}\|\omega_h(\cdot,h')\|_{1}^{1-\alpha}.$$\\
\end{proof}

\newpage

\section{\texorpdfstring{Proof of \Cref{regret}}{Proof of Theorem 3.1}}
\label{ucbproof}
\subsection{\texorpdfstring{Proof of Lemmas in \Cref{techniques}}{Proof of Lemmas in Section 3.3}}
Before proceeding to the proof, we will provide several key lemmas. By Lemma 4.3 of \cite{jin2018q}, we have the following conclusion.
\begin{lemma}
\label{event}
 Using $\forall (s,a,h,k)$ as the simplified notation for $\forall (s,a,h,k)\in \mathcal{S} \times \mathcal{A} \times [H]\times [K]$. With probability at least $1-p$, and $\beta_0 = 0$ and $\beta_t = 8\sqrt{\frac{H^3\iota}{t}}$ for $t \in \mathbb{N}_+$, the following event holds:
    \begin{align*}
        \mathcal{E} = \left\{0 \leq (Q_h^k-Q_h^{\star})(s,a) \leq \eta_0^{N_h^k}H + \sum_{i = 1}^{N_h^k}\eta_i^{N_h^k}(V_{h+1}^{k^i} - V_{h+1}^{\star})(s_{h+1}^{k^i}) + \beta_{N_h^k},\forall (s,a,h,k)\right\}.
    \end{align*}
\end{lemma}
We now proceed to prove the lemmas used in \Cref{techniques}. We begin with the proof of \Cref{expectedrmain}. In fact, this result holds for any learning algorithm.
\begin{lemma}[Formal statement of \Cref{expectedrmain}]
\label{expectedr}
For any learning algorithm with $K$ episodes and $T = HK$ steps, the expected regret is bounded as
\begin{align*}
\mathbb{E} \left[ \textnormal{Regret}(T) \right] 
\leq \mathbb{E}\left(\sum_{h= 1}^H\sum_{s,a}\Delta_h(s,a)N_h^{K+1}(s,a)\right).
\end{align*}
\end{lemma}

\begin{proof}
\begin{align*}
\left( V_1^{\star} - V_1^{\pi^k} \right) ( s_1^k ) 
&= V_1^{\star}( s_1^k ) - Q_1^{\star}( s_1^k, a_1^k ) + \left( Q_1^{\star} - Q_1^{\pi^k} \right)( s_1^k, a_1^k ) \\
&= \Delta_1( s_1^k, a_1^k ) + \mathbb{E}\left[ \left( V_2^{\star} - V_2^{\pi^k} \right)( s_2^k ) \mid s_2^k \sim P_1(\cdot \mid s_1^k, a_1^k) \right] \\
& = \mathbb{E}\left[ \Delta_1( s_1^k, a_1^k ) + \Delta_2( s_2^k, a_2^k )  \mid s_2^k \sim P_1(\cdot \mid s_1^k, a_1^k) \right] \\
&\quad + \mathbb{E}\left[\left( Q_2^{\star} - Q_2^{\pi^k} \right)( s_2^k, a_2^k ) \mid s_2^k \sim P_1(\cdot \mid s_1^k, a_1^k) \right]\\
&= \cdots = \mathbb{E} \left[ \sum_{h=1}^H \Delta_h\left( s_h^k, a_h^k \right) \Bigg| s_{h+1}^k \sim P_h(\cdot \mid s_h^k, a_h^k), \ h \in [H-1] \right].
\end{align*}
Here, the second equation is from the Bellman Equation and the Bellman Optimality Equation in \Cref{eq_Bellman}. Therefore, we can get another expression of expected regret:
\[
\mathbb{E}\left(\textnormal{Regret}(T)\right) = \mathbb{E}\left[\sum_{k=1}^K\left( V_1^{\star} - V_1^{\pi^k} \right) ( s_1^k )\right] = \mathbb{E} \left[\sum_{k=1}^K\sum_{h=1}^{H} \Delta_h(s_h^k, a_h^k)\right].
\]
Note that
\begin{align*}
    \mathbb{E}(\textnormal{Regret}(T))
    &= \mathbb{E}\left(\sum_{h= 1}^H\sum_{k=1}^K\Delta_h(s_h^k,a_h^k)\right) \\
    & = \mathbb{E}\left(\sum_{h= 1}^H\sum_{k=1}^K\sum_{s,a}\Delta_h(s,a)\mathbb{I}[(s_h^k,a_h^k) = (s,a)]\right) \\
    & = \mathbb{E}\left(\sum_{h= 1}^H\sum_{s,a}\Delta_h(s,a)N_h^{K+1}(s,a)\right).
\end{align*}
We finish the proof of the lemma.
\end{proof}
We then prove \Cref{singlerecurQmain} by bounding the cumulative weighted estimation error $$\sum_{k = 1}^K \omega_{h}^{k} \left(Q_h^k - Q_h^{\star}\right)(s_h^{k}, a_h^{k})$$ for each state-action pair $(s,a)$.

\begin{lemma}[Formal statement of \Cref{singlerecurQmain}]
\label{singlerecurQ}
For UCB-Hoeffding, under event $\mathcal{E}$ in \Cref{event}, for any non-negative weight sequence $\{\omega_{h}^{k}\}_{k}$ at step $h$, it holds simultaneously for any $(s,a) \in \mathcal{S} \times \mathcal{A} $ and subsequent step $h' \in [h,H]$ that:
\begin{align}
\label{seprecur}
         \sum_{k = 1}^K \omega_{h}(k,h',s,a) &(Q_{h'}^k - Q_{h'}^{\star})(s_{h'}^{k},a_{h'}^{k}) \leq \sum_{k' = 1}^K \tilde{\omega}_{h}(k',h'+1, s, a) (Q_{{h'}+1}^{k'} - Q_{{h'}+1}^{\star})(s,a)_{{h'}+1}^{k'}    \nonumber\\
         & \quad + \|\omega_h(\cdot,h')\|_{\infty}H + 16\sqrt{H^3\|\omega_h(\cdot,{h'})\|_{\infty}\|\omega_h(\cdot,{h'},s,a)\|_{1}\iota}.
\end{align}
and
\begin{align}
\label{sumrecur}
         \sum_{k = 1}^K \omega_{h}(k,h') &\left(Q_{h'}^k - Q_{h'}^{\star}\right)(s_{h'}^{k},a_{h'}^{k}) \leq \sum_{k' = 1}^K \omega_{h}(k',h'+1) (Q_{{h'}+1}^{k'} - Q_{{h'}+1}^{\star})(s_{{h'}+1}^{k'},a_{{h'}+1}^{k'})    \nonumber\\
         & \quad + \|\omega_h(\cdot,h')\|_{\infty}SAH + 16\sum_{s,a}\sqrt{H^3\|\omega_h(\cdot,{h'})\|_{\infty}\|\omega_h(\cdot,{h'},s,a)\|_{1}\iota}.
\end{align}
\end{lemma}
\begin{proof}
    Under the event $\mathcal{E}$ in \Cref{event}, we have the following relationship
    \begin{align}
    \label{weightbegin}
        &\sum_{k = 1}^K \omega_{h}(k,h',s,a) \left(Q_{h'}^k - Q_{h'}^{\star}\right)(s_{h'}^{k}, a_{h'}^{k}) \leq \sum_{k = 1}^K \omega_{h}(k,h',s,a)\eta_0^{N_{h'}^k}H
        \nonumber\\
        &\quad +  \sum_{k = 1}^K \omega_{h}(k,h',s,a) \sum_{i = 1}^{N_{h'}^k}\eta_i^{N_{h'}^k}(V_{{h'}+1}^{k^i} - V_{{h'}+1}^{\star})(s_{{h'}+1}^{k^i})  +  \sum_{k = 1}^K \omega_{h}(k,h',s,a)\beta_{N_{h'}^k}.
    \end{align}
Here $k^i = k^i(s_{h'}^k,a_{h'}^k,h')$. For the first term in \Cref{weightbegin}, we have
\begin{align}
\label{recurfirst}
\sum_{k = 1}^K \omega_{h}(k,h',s,a)\eta_0^{N_{h'}^k}H &\leq \|\omega_{h}(\cdot,h',s,a)\|_{\infty} H\sum_{k = 1}^K\mathbb{I}\left[(s_{h'}^{k}, a_{h'}^{k}) = (s,a), N_{h'}^k(s,a) = 0\right] \nonumber\\
&\leq \|\omega_h(\cdot,h')\|_{\infty}H.
\end{align}
The last inequality is because $\|\omega_{h}(\cdot,h',s,a)\|_{\infty} \leq \|\omega_h(\cdot,h')\|_{\infty}$ by (b) of \Cref{weight}.

For the second term in \Cref{weightbegin}, we have
\begin{align}
    &\sum_{k = 1}^K \omega_{h}(k,h',s,a) \sum_{i = 1}^{N_{h'}^k}\eta_i^{N_{h'}^k}(V_{{h'}+1}^{k^i} - V_{{h'}+1}^{\star})(s_{{h'}+1}^{k^i})\nonumber\\
    &= \sum_{k = 1}^K \omega_{h}(k,h')\mathbb{I}\left[(s_{h'}^{k}, a_{h'}^{k}) = (s,a)\right] \sum_{i = 1}^{N_{h'}^k}\eta_i^{N_{h'}^k}(V_{{h'}+1}^{k^i} - V_{{h'}+1}^{\star})(s_{{h'}+1}^{k^i}) \left(\sum_{k' = 1}^K \mathbb{I}[k^i = k']\right) \nonumber\\
    & = \sum_{k' = 1}^K (V_{{h'}+1}^{k'} - V_{{h'}+1}^{\star})(s_{{h'}+1}^{k'})\left(\sum_{k = 1}^K\sum_{i = 1}^{N_{h'}^k}\omega_{h}(k,h')\mathbb{I}\left[(s_{h'}^{k}, a_{h'}^{k}) = (s,a)\right]\eta_i^{N_{h'}^k}\mathbb{I}[k^i = k'] \right) \nonumber\\
    &\leq \sum_{k' = 1}^K (Q_{{h'}+1}^{k'} - Q_{{h'}+1}^{\star})(s_{{h'}+1}^{k'},a_{{h'}+1}^{k'})\left(\sum_{k = 1}^K\sum_{i = 1}^{N_{h'}^k}\omega_{h}(k,h')\eta_i^{N_{h'}^k}\mathbb{I}\left[k^i = k',(s_{h'}^k,a_{h'}^k) = (s,a)\right] \right) \nonumber\\
    & = \sum_{k' = 1}^K \tilde{\omega}_{h}(k',h'+1, s, a) (Q_{{h'}+1}^{k'} - Q_{{h'}+1}^{\star})(s_{{h'}+1}^{k'},a_{{h'}+1}^{k'}) \label{recursecond}.
\end{align}
The inequality is by $Q_{{h'}+1}^{k'}(s_{{h'}+1}^{k'},a_{{h'}+1}^{k'}) \geq V_{{h'}+1}^{k'}(s_{{h'}+1}^{k'})$, $Q_{{h'}+1}^{\star}(s_{{h'}+1}^{k'},a_{{h'}+1}^{k'}) \leq V_{{h'}+1}^{\star}(s_{{h'}+1}^{k'})$. For \Cref{recursecond}, by the definition of $k^i$, $\mathbb{I}\left[k^i(s_{h'}^{k},a_{h'}^{k},h') = k'\right] = 1$ holds only when $(s_{h'}^{k'},a_{h'}^{k'}) = (s_{h'}^k,a_{h'}^k)$ and then we have:
\begin{align*}
    &\sum_{k = 1}^K\sum_{i = 1}^{N_{h'}^k}\omega_{h}(k,h')\eta_i^{N_{h'}^k}\mathbb{I}\left[k^i = k',(s_{h'}^k,a_{h'}^k) = (s,a)\right] \\
    &= \mathbb{I}\left[(s_{h'}^{k'},a_{h'}^{k'}) = (s,a)\right]\sum_{k = 1}^K\sum_{i = 1}^{N_{h'}^k}\omega_{h}(k,h')\eta_i^{N_{h'}^k}\mathbb{I}\left[k^i = k'\right] =  \tilde{\omega}_{h}(k',h'+1, s, a).
\end{align*}
The last equation is because of \Cref{linkdef} and the definition of $\tilde{\omega}_{h}(k',h'+1, s, a)$.

For the last term of \Cref{weightbegin}, by \Cref{wN}, it holds that
\begin{align}
    \label{recur3}
    \sum_{k = 1}^K \omega_{h}(k,h',s,a)\beta_{N_{h'}^k} &\leq 8\sqrt{H^3\iota}\sum_{k = 1,N_{h'}^k>0}^K  \omega_{h}(k,h',s,a)\sqrt{\frac{1}{N_{h'}^k(s_{h'}^{k},a_{h'}^{k})}} \nonumber\\
    &\leq  16\sqrt{H^3\|\omega_h(\cdot,{h'})\|_{\infty}\|\omega_h(\cdot,{h'},s,a)\|_{1}\iota}.
\end{align}
Combining the results of \Cref{recurfirst}, \Cref{recursecond} and \Cref{recur3}, we finish the proof of \Cref{seprecur}. Summing this conclusion over all state-action pairs $(s,a)$, and noting that $\sum_{s,a}\tilde{\omega}_{h}(k',h'+1, s, a) = \omega_{h}(k',h'+1)$, we prove \Cref{sumrecur}.
\end{proof}
\Cref{recurQmain} then follows immediately from a recursive application of the results established above.
\begin{lemma}[Formal statement of \Cref{recurQmain}]
\label{recurQ}
For UCB-Hoeffding, under event $\mathcal{E}$ in \Cref{event}, for any non-negative weight sequence $\{\omega_{h}^{k}\}_{k}$ at step $h$, it holds simultaneously for any $(s,a) \in \mathcal{S} \times \mathcal{A} $ and subsequent step $h' \in [h,H]$ that:
\begin{align*}
         &\sum_{k = 1}^K \omega_{h}(k,h') \left(Q_{h'}^k - Q_{h'}^{\star}\right)(s_{h'}^{k},a_{h'}^{k}) \\
         &\leq \sum_{h_1 = h'}^H\|\omega_h(\cdot,h_1)\|_{\infty}SAH + 16\sum_{h_1 = h'}^H\sum_{s,a}\sqrt{H^3\|\omega_h(\cdot,{h_1})\|_{\infty}\|\omega_h(\cdot,{h_1},s,a)\|_{1}\iota}.
\end{align*}
\end{lemma}
\begin{proof}
By applying recursion on steps $h', h'+1, ..., H$ in \Cref{sumrecur}, since $Q_{H+1}^k(s,a) = Q_{H+1}^{\star}(s,a) = 0$ for any $(s,a,k) \in \mathcal{S} \times\mathcal{A} \times [K]$, the proof is complete.
\end{proof}
Building on the previous lemma, we now establish a novel upper bound on cumulative weighted visitation counts $$\sum_{s,a} \Delta_h(s,a)N_h^{K+1}(s,a),$$ which then enables the final bound on expected regret through \Cref{expectedr}.
\begin{lemma}[Formal statement of \Cref{weightvisitmain}]
\label{weightvisit}
For UCB-Hoeffding algorithm and $c_1 = 20736$, under the event $\mathcal{E}$ in \Cref{event}, it holds simultaneously for any $h \in [H]$ that:
\begin{align*}
     \sum_{s,a} \frac{\Delta_h(s,a)N_h^{K+1}(s,a)}{c_1} &\leq SAH^2 + \sum_{h' = h}^H\sum_{\Delta_{h'}(s,a)>0}\frac{H^4\iota}{\Delta_{h'}(s,a)}+  \frac{H^3\left(\sum_{t = h + 1}^H\sqrt{|Z_{\textnormal{opt},t}|}\right)^2\iota}{\Delta_{\textnormal{min}, h}} \\
     &\quad + \sum_{h' = h + 1}^H  \frac{H^2\left(\sum_{t = h' + 1}^H\sqrt{|Z_{\textnormal{opt},t}|}\right)^2\iota}{\Delta_{\textnormal{min}, h'}}. 
\end{align*}
\end{lemma}
\begin{proof}
We use mathematical induction to prove this conclusion. For step $h $, let
\begin{align*}
    \omega_h^k &= \mathbb{I}\left[Q_h^k(s_h^k,a_h^k) - Q_h^\star(s_h^k,a_h^k) \geq \Delta_h(s_h^k,a_h^k), (s_h^k,a_h^k) \in Z_{\non,h}\right] \\
    & = \mathbb{I}\left[(s_h^k,a_h^k) \in Z_{\non,h}\right]\leq 1.
\end{align*}
The second equation is because for any given $(h,k) \in [H] \times [k]$, if $(s_h^k,a_h^k) \in Z_{\non,h}$, we have 
    $$Q_h^k(s_h^k,a_h^k) - Q_h^\star(s_h^k,a_h^k) \geq V_h^k(s_h^k) - Q_h^\star(s_h^k,a_h^k) \geq V_h^{\star}(s_h^k) - Q_h^\star(s_h^k,a_h^k) = \Delta_h(s_h^k,a_h^k) > 0.$$
The first inequality holds because $Q_h^k(s_h^k,a_h^k) \geq V_h^k(s_h^k)$, as guaranteed by the update rule in line 8 of \Cref{ucbh}. The second inequality follows directly from the $\mathcal{E}$ in \Cref{event}, which ensures that $Q_h^k(s_h^k, a) \geq Q_h^{\star}(s_h^k, a)$ for all $(a,h,k) \in \mathcal{A} \times[H] \times[K]$ and thus
$$V_h^k(s_h^k) = \min\left\{H, \max_a Q_h^k(s_h^k,a)\right\} \geq \min\left\{H, \max_a Q_h^{\star}(s_h^k,a)\right\} = \max_a Q_h^{\star}(s_h^k,a) = V_h^{\star}(s_h^k).$$
Based on the definition of $\omega_h^k$, for any $(s,a) \in Z_{\non,h}$, we have 
$$\|\omega_h(\cdot,h, s, a)\|_{1} = \sum_{k=1}^K\mathbb{I}\left[(s_h^k,a_h^k) =(s,a)\right] = N_h^{K+1}(s,a)$$ 
and $\|\omega_h(\cdot,h, s, a)\|_{1} = 0$ for $(s,a) \in Z_{\opt,h}$.
By \Cref{singlerecurQ}, for any $(s,a) \in Z_{\non,h}$, it holds that,
\begin{align}
\label{upperdiff}
         &\sum_{k = 1}^K \omega_{h}^{k} \left(Q_h^k - Q_h^{\star}\right)(s_h^{k}, a_h^{k}) \mathbb{I}[(s_h^{k}, a_h^{k}) = (s,a)] = \sum_{k = 1}^K \omega_h(k,h, s, a) \left(Q_h^k - Q_h^{\star}\right)(s_h^{k}, a_h^{k}) \nonumber\\
         &\leq  H + 16\sqrt{H^3N_{h}^{K+1}(s,a)\iota} + \sum_{k' = 1}^K \tilde{\omega}_{h}(k',h+1, s, a) (Q_{h+1}^{k'} - Q_{h+1}^{\star})(s_{h+1}^{k'},a_{h+1}^{k'}).
\end{align}
Also note that for any $(s,a) \in Z_{\non,h}$, we have
\begin{align}
\label{lowerdiff}
    \sum_{k = 1}^K \omega_{h}^{k} \left(Q_h^k - Q_h^{\star}\right)(s_h^{k}, a_h^{k}) \mathbb{I}[(s_h^{k}, a_h^{k}) = (s,a)] &\geq \Delta_h(s,a)\sum_{k = 1}^K \omega_{h}^{k}\mathbb{I}[(s_h^{k}, a_h^{k}) = (s,a)] \nonumber\\
    & = \Delta_h(s,a)N_{h}^{K+1}(s,a).
\end{align}

Combining the results of \Cref{upperdiff} and \Cref{lowerdiff}, it holds for any $(s,a) \in Z_{\non,h}$ that,
\begin{align*}
         &\Delta_h(s,a)N_{h}^{K+1}(s,a) \nonumber\\
         &\leq  H + 16\sqrt{H^3N_{h}^{K+1}(s,a)\iota} + \sum_{k' = 1}^K \tilde{\omega}_{h}(k',h+1, s, a) (Q_{h+1}^{k'} - Q_{h+1}^{\star})(s_{h+1}^{k'},a_{h+1}^{k'}).
\end{align*}
Solving this inequality, we can derive the following conclusion for any $(s,a) \in Z_{\non,h}$:
$$\Delta_h(s,a)N_{h}^{K+1}(s,a) \leq \frac{256H^3\iota}{\Delta_h(s,a)} + 2H + 2\sum_{k' = 1}^K \tilde{\omega}_{h}(k',h+1, s, a) (Q_{h+1}^{k'} - Q_{h+1}^{\star})(s_{h+1}^{k'},a_{h+1}^{k'}).$$
Since $\Delta_h(s,a) = 0$ for \((s, a) \notin Z_{\non,h}\) and $Q_{h+1}^k(s,a) \geq Q_{h+1}^{\star}(s,a)$ for any $ (s,a,h,k)\in \mathcal{S} \times \mathcal{A} \times[H] \times[K]$, by summing the inequality above over all state-action pairs \((s, a) \in Z_{\non,h}\), we reach:
\begin{align}
\label{target}
    &\sum_{s,a}\Delta_h(s,a)N_{h}^{K+1}(s,a) \nonumber \\
    &\leq \sum_{\Delta_h(s,a) > 0}\frac{256H^3\iota}{\Delta_h(s,a)} + 2SAH + 2\sum_{k' = 1}^K \omega_{h}(k',h+1) (Q_{h+1}^{k'} - Q_{h+1}^{\star})(s_{h+1}^{k'},a_{h+1}^{k'}).
\end{align}
Here we use $$\sum_{(s, a) \in Z_{\non,h}}\tilde{\omega}_{h}(k',h+1, s, a) \leq \omega_{h}(k',h+1)$$ by (a) of \Cref{weight}.

Let $h =H$, since $Q_{H+1}^k(s,a) = Q_{H+1}^{\star}(s,a) = 0$ for any $ (s,a,k)\in \mathcal{S} \times \mathcal{A}  \times[K]$, we prove the lemma for $h = H$ with \Cref{target}. Assuming the conclusion holds for steps \(h+1, \ldots, H\), we now prove it for step \(h\). 

By \Cref{recurQ}, we have
\begin{align}
\label{totalrecur}
    &\sum_{k' = 1}^K \omega_{h}(k',h+1) (Q_{h+1}^{k'} - Q_{h+1}^{\star})(s_{h+1}^{k'},a_{h+1}^{k'})\nonumber\\
    &\leq \sum_{h' = h + 1}^H \|\omega_h(\cdot,h')\|_{\infty}SAH + 16\sum_{h' = h+1}^H\sum_{s,a}\sqrt{H^3\|\omega_h(\cdot,{h'})\|_{\infty}\|\omega_h(\cdot,{h'},s,a)\|_{1}\iota}
\end{align}
with 
\begin{equation}
\label{total0}
    \|\omega_h(\cdot,h')\|_{\infty} \leq \left(1+\frac{1}{H}\right)^{h'-h}\|\omega_h(\cdot,h)\|_{\infty} \leq 3,
\end{equation}
and
\begin{equation}
    \label{total1}
    \|\omega_h(\cdot,h')\|_{1} \leq \|\omega_h(\cdot,h)\|_{1} = \sum_{s,a}\|\omega_h(\cdot,h,s,a)\|_{1} = \sum_{\Delta_h(s,a) > 0}N_h^{K+1}(s,a).
\end{equation}
by part (e) of \Cref{weight}. In this case, by \Cref{total0} and part (c) of \Cref{weight}, we further obtain the following bound:
\begin{equation}
\label{single1}
    \|\omega_h(\cdot,h',s,a)\|_{1}  \leq \|\omega_h(\cdot,h')\|_{\infty}N_{h'}^{K+1}(s,a) \leq 3N_{h'}^{K+1}(s,a).
\end{equation}
Furthermore, by \Cref{total0}, for the first term in \Cref{totalrecur}, we have:
$$\sum_{h' = h + 1}^H \|\omega_h(\cdot,h')\|_{\infty}SAH \leq 3SAH^2.$$
For the second term in \Cref{totalrecur}, we divide the state-action pairs $(s,a)$ at each step $h'$ into two categories: $Z_{\textnormal{opt}, h'}$, where $\Delta_{h'}(s,a) = 0$, and $Z_{\non, h'}$, where $\Delta_{h'}(s,a) > 0$. We apply the Cauchy–Schwarz inequality to all sub-optimal state-action pairs \textbf{jointly across all steps}, and to optimal state-action pairs \textbf{individually at each step $h'$}.
\begin{align}
\label{individual}
    &16\sum_{h' = h + 1}^H\sum_{s,a}\sqrt{H^3\|\omega_h(\cdot,{h'})\|_{\infty}\|\omega_h(\cdot,{h'},s,a)\|_{1}\iota} \nonumber\\
    &\leq 16\sqrt{3}\sqrt{H^3\iota\left(\sum_{h' = h + 1}^H\sum_{\Delta_{h'}(s,a)>0}\frac{1}{\Delta_{h'}(s,a)}\right)\left(\sum_{h' = h + 1}^H\sum_{\Delta_{h'}(s,a)>0}\Delta_{h'}(s,a)\|\omega_h(\cdot,{h'},s,a)\|_{1}\right)} \nonumber\\
    & \quad+ 16\sqrt{3}\sum_{h' = h + 1}^H\sqrt{H^3|Z_{\opt, h'}|\iota\sum_{(s,a) \in Z_{\opt, h'} }\|\omega_h(\cdot,{h'},s,a)\|_{1}} \nonumber\\
    &\leq 48\sqrt{H^3\iota\left(\sum_{h' = h + 1}^H\sum_{\Delta_{h'}(s,a)>0}\frac{1}{\Delta_{h'}(s,a)}\right)\left(\sum_{h' = h + 1}^H\sum_{\Delta_{h'}(s,a)>0}\Delta_{h'}(s,a)N_{h'}^{K+1}(s,a)\right)} \nonumber\\
    & \quad+ 16\sqrt{3}\left(\sum_{h' = h + 1}^H\sqrt{H^3|Z_{\opt, h'}|\iota}\right)\sqrt{\sum_{\Delta_h(s,a) > 0}N_h^{K+1}(s,a)}.
\end{align}
The last inequality is because $\|\omega_h(\cdot,{h'},s,a)\|_{1} \leq 3N_{h'}^{K+1}(s,a)$ by \Cref{single1} and 
$$\sum_{(s,a) \in Z_{\opt, h'} }\|\omega_h(\cdot,{h'},s,a)\|_{1} \leq \|\omega_h(\cdot,{h'})\|_{1} \leq \sum_{\Delta_h(s,a) > 0}N_h^{K+1}(s,a),$$
where the first inequality follows from part (d) of \Cref{weight}, and the second from \Cref{total1}.

For the first term in \Cref{individual}, by AM-GM inequality, we have:
\begin{align}
\label{nonoptimal}
    &48\sqrt{H^3\iota\left(\sum_{h' = h + 1}^H\sum_{\Delta_{h'}(s,a)>0}\frac{1}{\Delta_{h'}(s,a)}\right)\left(\sum_{h' = h + 1}^H\sum_{\Delta_{h'}(s,a)>0}\Delta_{h'}(s,a)N_{h'}^{K+1}(s,a)\right)} \nonumber\\
   &\leq 24\sqrt{c_1}\sum_{h' = h + 1}^H\sum_{\Delta_{h'}(s,a)>0}\frac{H^4\iota}{\Delta_{h'}(s,a)} + 24\sqrt{c_1}\sum_{h' = h + 1}^H\sum_{\Delta_{h'}(s,a)>0}\frac{\Delta_{h'}(s,a)N_{h'}^{K+1}(s,a)}{Hc_1}.
\end{align}
By the induction hypothesis, the lemma holds for all steps $h+1 \leq h' \leq H$. Therefore, we obtain:
\begin{align*}
     \sum_{s,a} \frac{\Delta_{h'}(s,a)N_{h'}^{K+1}(s,a)}{c_1} &\leq SAH^2 + \sum_{i = h'}^H\sum_{\Delta_{i}(s,a)>0}\frac{H^4\iota}{\Delta_{i}(s,a)}\\
     &\ \ \ +  \frac{H^3\left(\sum_{t = h' + 1}^H\sqrt{|Z_{\textnormal{opt},t}|}\right)^2\iota}{\Delta_{\textnormal{min}, h'}} + \sum_{i = h' + 1}^H  \frac{H^2\left(\sum_{t = i + 1}^H\sqrt{|Z_{\textnormal{opt},t}|}\right)^2\iota}{\Delta_{\textnormal{min}, i}}. 
\end{align*}
By summing this inequality for $h+1 \leq h' \leq H$, it holds that:
\begin{align*}
    &\sum_{h' = h + 1}^H\sum_{\Delta_{h'}(s,a)>0}\frac{\Delta_{h'}(s,a)N_{h'}^{K+1}(s,a)}{Hc_1} \leq \sum_{h' = h + 1}^HSAH + \sum_{h' = h + 1}^H\sum_{i = h'}^H\sum_{\Delta_{i}(s,a)>0}\frac{H^3\iota}{\Delta_{i}(s,a)}\nonumber\\
     &\quad +  \sum_{h' = h + 1}^H\frac{H^2\left(\sum_{t = h' + 1}^H\sqrt{|Z_{\textnormal{opt},t}|}\right)^2\iota}{\Delta_{\textnormal{min}, h'}} + \sum_{h' = h + 1}^H\sum_{i = h' + 1}^H  \frac{H\left(\sum_{t = i + 1}^H\sqrt{|Z_{\textnormal{opt},t}|}\right)^2\iota}{\Delta_{\textnormal{min}, i}} \nonumber\\
     & \leq SAH^2 + \sum_{h' = h+1}^H\sum_{\Delta_{h'}(s,a)>0}\frac{H^4\iota}{\Delta_{h'}(s,a)} +  2\sum_{h' = h + 1}^H\frac{H^2\left(\sum_{t = h' + 1}^H\sqrt{|Z_{\textnormal{opt},t}|}\right)^2\iota}{\Delta_{\textnormal{min}, h'}}.
\end{align*}
Applying the above inequality to \Cref{nonoptimal} and substituting it into \Cref{individual}, we obtain:
\begin{align}
\label{individualbound}
    &16\sum_{h' = h + 1}^H\sum_{s,a}\sqrt{H^3\|\omega_h(\cdot,{h'})\|_{\infty}\|\omega_h(\cdot,{h'},s,a)\|_{1}\iota} \nonumber\\
    &\leq 24\sqrt{c_1}\left(SAH^2 + 2\sum_{h' = h+1}^H\sum_{\Delta_{h'}(s,a)>0}\frac{H^4\iota}{\Delta_{h'}(s,a)} +  2\sum_{h' = h + 1}^H\frac{H^2\left(\sum_{t = h' + 1}^H\sqrt{|Z_{\textnormal{opt},t}|}\right)^2\iota}{\Delta_{\textnormal{min}, h'}}\right)\nonumber\\
    & \quad+ 16\sqrt{3}\left(\sum_{h' = h + 1}^H\sqrt{H^3|Z_{\opt, h'}|\iota}\right)\sqrt{\sum_{\Delta_h(s,a) > 0}N_h^{K+1}(s,a)}.
\end{align}
By applying this inequality to \Cref{totalrecur} and substituting the result into \Cref{target}, and using the bound $\|\omega_h(\cdot,h')\|_{\infty} \leq 3$ from \Cref{total0}, we conclude that the following inequality holds:
\begin{align}
\label{targetin}
    &\sum_{s,a}\Delta_h(s,a)N_{h}^{K+1}(s,a) \nonumber \\
    &\leq 96\sqrt{c_1}\left(SAH^2 +\sum_{h' = h}^H\sum_{\Delta_{h'}(s,a) > 0}\frac{H^4\iota}{\Delta_{h'}(s,a)} +  \sum_{h' = h + 1}^H\frac{H^2\left(\sum_{t = h' + 1}^H\sqrt{|Z_{\textnormal{opt},t}|}\right)^2\iota}{\Delta_{\textnormal{min}, h'}}\right) \nonumber\\
    &\quad  + 32\sqrt{3}\left(\sum_{h' = h + 1}^H\sqrt{H^3|Z_{\opt, h'}|\iota}\right)\sqrt{\sum_{\Delta_h(s,a) > 0}N_h^{K+1}(s,a)}.
\end{align}
Note that if $|Z_{\non,h}| > 0$, which means $\Delta_{\textnormal{min},h} > 0$, we have
$$\sum_{s,a}\Delta_h(s,a)N_{h}^{K+1}(s,a) = \sum_{\Delta_h(s,a) > 0}\Delta_h(s,a)N_{h}^{K+1}(s,a)  \geq\Delta_{\textnormal{min},h}\sum_{\Delta_h(s,a) > 0}N_h^{K+1}(s,a).$$
Define 
$$b = \Delta_{\textnormal{min},h},\ c = 32\sqrt{3}\left(\sum_{h' = h + 1}^H\sqrt{H^3|Z_{\opt, h'}|\iota}\right),\ x = \sqrt{\sum_{\Delta_h(s,a) > 0}N_h^{K+1}(s,a)}$$
and let the first term on the right-hand side of \Cref{targetin} be denoted by $d$. Then \Cref{targetin} can be rewritten as:
$$bx^2 - cx -d \leq 0.$$
When $b > 0$, solving the inequality yields:
\begin{align*}
    x \leq \frac{c + \sqrt{c^2 + 4bd}}{2b}. 
\end{align*}
Applying this upper bound to \Cref{targetin}, by AM-GM inequality, we obtain
\begin{align*}
    &\sum_{s,a}\Delta_h(s,a)N_{h}^{K+1}(s,a)\leq cx + d \leq \frac{c^2 + c\sqrt{c^2+4bd}}{2b} + d 
    \leq \frac{3c^2}{2b} +\frac{3d}{2} \nonumber\\
    &\leq 144\sqrt{c_1}\left(SAH^2 +\sum_{h' = h}^H\sum_{\Delta_{h'}(s,a) > 0}\frac{H^4\iota}{\Delta_{h'}(s,a)} +  \sum_{h' = h + 1}^H\frac{H^2\left(\sum_{t = h' + 1}^H\sqrt{|Z_{\textnormal{opt},t}|}\right)^2\iota}{\Delta_{\textnormal{min}, h'}}\right) \nonumber\\
    &\quad  + 4608\frac{H^3\iota\left(\sum_{t = h + 1}^H\sqrt{|Z_{\opt,t}|}\right)^2}{\Delta_{\textnormal{min},h}}.
\end{align*}
If $|Z_{\non,h}| = 0$, then $\Delta_{\textnormal{min},h} = \infty$ and $\sum_{s,a} \Delta_h(s,a) N_{h}^{K+1}(s,a) = 0$. In this case, the conclusion holds trivially. Therefore, the result is established for step $h$, completing the proof.
\end{proof}

\subsection{Bounding the Expected Regret}
Now we bound the gap-dependent expected regret. Let  $p = \frac{1}{T}$, then $\mathcal{E}$ holds with probability at least $1-\frac{1}{T}$ and $\iota \leq O(\log(SAT))$. Therefore, by \Cref{expectedr}, we have
\begin{align*}
    &\mathbb{E}(\textnormal{Regret}(T))
     = \mathbb{E}\left(\sum_{h= 1}^H\sum_{s,a}\Delta_h(s,a)N_h^{K+1}(s,a)\right)\\
 & = \mathbb{E} \left(\sum_{h= 1}^H\sum_{s,a}\Delta_h(s,a)N_h^{K+1}(s,a) \bigg| \mathcal{E}\right)\mathbb{P}(\mathcal{E}) + \mathbb{E} \left(\sum_{h= 1}^H\sum_{s,a}\Delta_h(s,a)N_h^{K+1}(s,a) \bigg|  \mathcal{E}^c\right)\mathbb{P}(\mathcal{E}^c)\\
    & \leq O\Bigg( \sum_{h = 1}^H\sum_{\Delta_{h}(s,a)>0}\frac{H^5\log(SAT)}{\Delta_{h}(s,a)}+  \sum_{h= 1}^H\frac{H^3\left(\sum_{t = h + 1}^H\sqrt{|Z_{\textnormal{opt},t}|}\right)^2\log(SAT)}{\Delta_{\textnormal{min}, h}} + SAH^3\Bigg)\\
& \leq O\left( \sum_{h = 1}^H\sum_{\Delta_{h}(s,a)>0}\frac{H^5\log(SAT)}{\Delta_{h}(s,a)}+  \frac{H^5|Z_{\textnormal{opt}}|\log(SAT)}{\Delta_{\textnormal{min}}} + SAH^3\right).
\end{align*}
The first inequality is because under the event $\mathcal{E}$, by \Cref{weightvisit}, we have
\begin{align*}
 &\sum_{h= 1}^H\sum_{s,a}\Delta_h(s,a)N_h^{K+1}(s,a)\\
 &\leq \sum_{h= 1}^HSAH^2 + \sum_{h= 1}^H\sum_{h' = h}^H\sum_{\Delta_{h'}(s,a)>0}\frac{H^4\iota}{\Delta_{h'}(s,a)}+  \sum_{h= 1}^H\frac{H^3\left(\sum_{t = h + 1}^H\sqrt{|Z_{\textnormal{opt},t}|}\right)^2\iota}{\Delta_{\textnormal{min}, h}}\\
&\quad  + \sum_{h= 1}^H\sum_{h' = h + 1}^H  \frac{H^2\left(\sum_{t = h' + 1}^H\sqrt{|Z_{\textnormal{opt},t}|}\right)^2\iota}{\Delta_{\textnormal{min}, h'}}\\
& \leq O\left( \sum_{h = 1}^H\sum_{\Delta_{h}(s,a)>0}\frac{H^5\log(SAT)}{\Delta_{h}(s,a)}+  \sum_{h= 1}^H\frac{H^3\left(\sum_{t = h + 1}^H\sqrt{|Z_{\textnormal{opt},t}|}\right)^2\log(SAT)}{\Delta_{\textnormal{min}, h}} + SAH^3\right)
\end{align*}
by \Cref{weightvisit} and under the event $\mathcal{E}^c$, 
$$\sum_{h= 1}^H\sum_{s,a}\Delta_h(s,a)N_h^{K+1}(s,a) \leq HT.$$
The last inequality uses Cauchy-Schwarz inequality and $\Delta_{\textnormal{min}, h} \geq \Delta_{\textnormal{min}}$ for any $h \in [H]$.

\newpage
\section{Proof of Regret Upper Bounds for ULCB-Hoeffding}
\label{ulproof}
\subsection{Auxiliary Lemmas}
We first validate the upper bounds $\overline{Q}$ and lower bounds $\underline{Q}$ introduced in \Cref{ambul}. For simplicity, we denote
$\mathbb { P }_{s,a,h}f = \mathbb{E}_{s_{h+1}\sim \mathbb{P}_h(\cdot|s,a)}(f(s_{h+1})|s_h=s,a_h=a)$ and $\mathbbm { 1 }_s f = f(s)$ for any $(s,a,h) \in \mathcal{S}\times \mathcal{A} \times[H]$ and function $f: \mathcal{S} \rightarrow \mathbb{R}$. We first prove some probability events to facilitate our proof.
\begin{lemma}
\label{eventul}
For the ULCB-Hoeffding algorithm (\Cref{ambul}), we have the following conclusions:
 
 (a) With probability at least $1-p$, the following event holds:
    \begin{align*}
            &\mathcal{G}_1 = \left\{\left|\sum_{i=1}^{N_h^k}\eta_i^{N_h^k}\left(\mathbbm{1}_{s_{h+1}^{k^i}}-\mathbb{P}_{s,a,h}\right)V_{h+1}^{\star}\right| \leq 2\sqrt{\frac{H^3\iota}{N_h^k(s,a)}},\ \forall (s,a,h,k)\right\}.
    \end{align*}
 (b) With probability at least $1-p$, the following event holds:
        $$\mathcal{G}_2 = \left\{\sum_{h=1}^H\sum_{k=1}^K \left(1+\frac{1}{H}\right)^{h-1}\left(\mathbb{P}_{s_h^{k}, a_h^{k},h}-\mathbbm{1}_{s_{h+1}^{k}}\right)\left(V_{h+1}^{\star} - V_{h+1}^{\pi^k}\right) \leq 27\sqrt{2H^2T\iota}\right\}.$$
\end{lemma}
\begin{proof}
   (a) The sequence 
        $$\left\{\sum_{i=1}^N \eta_i^N\left(\mathbbm{1}_{s_{h+1}^{k^i}}- \mathbb{P}_{s,a,h}\right)V_{h+1}^{\star}\right\}_{N\in \mathbb{N}^+}$$
        is a martingale sequence with 
        $$\left|\sum_{i=1}^N \eta_i^N\left(\mathbbm{1}_{s_{h+1}^{k^i}}- \mathbb{P}_{s,a,h}\right)V_{h+1}^{\star}\right| \leq \eta_i^NH.$$
        Then according to Azuma-Hoeffding inequality and (b) of \Cref{etasum}, for any $p\in(0,1)$, with probability at least $1-\frac{p}{SAT}$, it holds for given $N_h^k(s,a) = N \in \mathbb{N}_{+}$ that:
        $$\left|\sum_{i=1}^N \eta_i^N\left(\mathbbm{1}_{s_{h+1}^{k^i}}- \mathbb{P}_{s,a,h}\right)V_{h+1}^{\star}\right|\leq 2\sqrt{\frac{H^3\iota}{N}}.$$
        For any $(s,a,h,k) \in \mathcal{S} \times \mathcal{A} \times [H] \times [K]$, we have $N_h^k(s,a) \in [\frac{T}{H}]$. Considering all the possible combinations $(s,a,h,N) \in \mathcal{S} \times \mathcal{A} \times [H] \times [\frac{T}{H}]$, with probability at least $1-p$, it holds simultaneously for all $(s,a,h,k) \in \mathcal{S} \times \mathcal{A} \times [H] \times [K]$ that:
        \begin{equation*}
           \left|\sum_{i=1}^{N_h^k}\eta_i^{N_h^k}\left(\mathbbm{1}_{s_{h+1}^{k^i}}-\mathbb{P}_{s,a,h}\right)V_{h+1}^{\star}\right| \leq 2\sqrt{\frac{H^3\iota}{N_h^k(s,a)}}.
        \end{equation*} 
(b) For $\mathcal{G}_{2}$, the sequence 
    $$\left\{ \left(1+\frac{1}{H}\right)^{h-1}\left(\mathbb{P}_{s_h^{k}, a_h^{k},h}-\mathbbm{1}_{s_{h+1}^{k}}\right)\left(V_{h+1}^{\star} - V_{h+1}^{\pi^k}\right)\right\}_{k,h}$$
    can be reordered to a martingale sequence based on the ``episode first, step second” rule. The absolute values of the sequence are bounded by $27H$. According to Azuma-Hoeffding inequality, for any $ p \in(0,1)$, with probability at least $1- p$, it holds that:
    $$\sum_{h=1}^H\sum_{k=1}^K \left(1+\frac{1}{H}\right)^{h-1}\left(\mathbb{P}_{s_h^{k}, a_h^{k},h}-\mathbbm{1}_{s_{h+1}^{k}}\right)\left(V_{h+1}^{\star} - V_{h+1}^{\pi^k}\right) \leq 27\sqrt{2H^2T\iota}.$$
\end{proof}

\begin{lemma}
\label{ulq}
    For all $(s,a,h,k) \in \mathcal{S} \times \mathcal{A} \times [H] \times [K]$, when event $\mathcal{G}_1$ in \Cref{eventul} happens, the upper and lower confidence bounds in \Cref{ambul} are valid:
\[
\overline{V}_h^k(s) \geq V_h^{\star}(s) \geq \underline{V}_h^k(s) \quad \text{and} \quad \overline{Q}_h^k(s,a) \geq Q_h^{\star}(s,a) \geq \underline{Q}_h^k(s,a).
\]
\end{lemma}
\begin{proof}
We use mathematical induction on $k$ to prove this lemma. For $k = 1$, the lemma holds based on the initialization in line 2 of \Cref{ambul}. Assuming the conclusion holds for all $1,2,...,k-1$, we will prove the conclusion for $k+1$ at episode $k$.

If $(s,a,h) \in \mathcal{S} \times A \times[H] \setminus \{(s_h^k, a_h^k)\}_{h=1}^H$, then we have 
\[
\overline{V}_h^{k+1}(s) =\overline{V}_h^k(s) \geq V_h^{\star}(s) \geq \underline{V}_h^k(s) = \underline{V}_h^{k+1}(s).
\]
and
$$\overline{Q}_h^{k+1}(s,a) = \overline{Q}_h^k(s,a) \geq Q_h^{\star}(s,a) \geq \underline{Q}_h^k(s,a) = \underline{Q}_h^{k+1}(s,a).$$
For $(s_h^k,a_h^k,h)$, based on the update rule in line 12 and line 13 in \Cref{ambul}, we have
\begin{align}
\label{upper1}
    \oq_h^{k+1}(s_h^k,a_h^k) &= \eta_0^{N_h^{k+1}}H + \sum_{i = 1}^{N_h^{k+1}}\eta_i^{N_h^{k+1}}\left(r_h(s_h^k,a_h^k) + \ov_{h+1}^{k^i}(s_{h+1}^{k^i}) + b_{i}\right) \nonumber\\
    &\geq  \eta_0^{N_h^{k+1}}H + \sum_{i = 1}^{N_h^{k+1}}\eta_i^{N_h^{k+1}}\left(r_h(s_h^k,a_h^k) + \ov_{h+1}^{k^i}(s_{h+1}^{k^i})\right) +2\sqrt{\frac{H^3\iota}{N_h^{k+1}}}.
\end{align}
\begin{align}
\label{lower1}
    \uq_h^{k+1}(s_h^k,a_h^k) &= \sum_{i = 1}^{N_h^{k+1}}\eta_i^{N_h^{k+1}}\left(r_h(s_h^k,a_h^k) + \uv_{h+1}^{k^i}(s_{h+1}^{k^i}) - b_{i}\right). \nonumber\\
    &\leq  \sum_{i = 1}^{N_h^{k+1}}\eta_i^{N_h^{k+1}}\left(r_h(s_h^k,a_h^k) + \uv_{h+1}^{k^i}(s_{h+1}^{k^i}\right) -2\sqrt{\frac{H^3\iota}{N_h^{k+1}}}.
\end{align}
These two inequalities are because
$$\sum_{i = 1}^{N_h^{k+1}}\eta_i^{N_h^{k+1}} b_{i} = 2\sum_{i = 1}^{N_h^{k+1}}\eta_i^{N_h^{k+1}} \sqrt{\frac{H^3\iota}{i}} \geq 2\sqrt{\frac{H^3\iota}{N_h^{k+1}}}$$
by (c) of \Cref{etasum}. Furthermore, by the Bellman Optimality Equation, it holds that:
$$Q_h^{\star}(s_h^k,a_h^k) = r_h(s_h^k,a_h^k) + \mathbb{P}_{s_h^k,a_h^k,h}V_{h+1}^{\star}.$$
Combining with \Cref{upper1} and \Cref{lower1}, we can derive the following conclusion:
\begin{align*}
    &\left(\oq_h^{k+1}-Q_h^{\star}\right)(s_h^k,a_h^k) \nonumber\\
    &\geq \sum_{i = 1}^{N_h^{k+1}}\eta_i^{N_h^{k+1}}\left(\ov_{h+1}^{k^i}(s_{h+1}^{k^i}) - \mathbb{P}_{s_h^k,a_h^k,h}V_{h+1}^{\star}\right) + 2\sqrt{\frac{H^3\iota}{N_h^{k+1}}} \nonumber\\
    &= \sum_{i = 1}^{N_h^{k+1}}\eta_i^{N_h^{k+1}}\big(\ov_{h+1}^{k^i} - V_{h+1}^{\star}\big)(s_{h+1}^{k^i}) + \sum_{i = 1}^{N_h^{k+1}}\eta_i^{N_h^{k+1}}
\big(\mathbbm{1}_{s_{h+1}^{k^i}} - \mathbb{P}_{s_h^k,a_h^k,h}\big)V_{h+1}^{\star}+ 2\sqrt{\frac{H^3\iota}{N_h^{k+1}}}\geq 0.
\end{align*}
The last inequality is because $\ov_{h+1}^{k^i}(s_{h+1}^{k^i}) \geq V_{h+1}^{\star}(s_{h+1}^{k^i})$ for $k^i \leq k$ and the event $\mathcal{G}_1$. Similarly,
\begin{align*}
    &\left(\uq_h^{k+1}-Q_h^{\star}\right)(s_h^k,a_h^k) \nonumber\\
    &\leq \sum_{i = 1}^{N_h^{k+1}}\eta_i^{N_h^{k+1}}\left(\uv_{h+1}^{k^i}(s_{h+1}^{k^i}) - \mathbb{P}_{s_h^k,a_h^k,h}V_{h+1}^{\star}\right) - 2\sqrt{\frac{H^3\iota}{N_h^{k+1}}} \nonumber\\
    &= \sum_{i = 1}^{N_h^{k+1}}\eta_i^{N_h^{k+1}}\big(\uv_{h+1}^{k^i} - V_{h+1}^{\star}\big)(s_{h+1}^{k^i}) + \sum_{i = 1}^{N_h^{k+1}}\eta_i^{N_h^{k+1}}
\big(\mathbbm{1}_{s_{h+1}^{k^i}} - \mathbb{P}_{s_h^k,a_h^k,h}\big)V_{h+1}^{\star}- 2\sqrt{\frac{H^3\iota}{N_h^{k+1}}}\leq 0.
\end{align*}
The last inequality is because $\uv_{h+1}^{k^i}(s_{h+1}^{k^i}) \leq V_{h+1}^{\star}(s_{h+1}^{k^i})$ for $k^i \leq k$ and the event $\mathcal{G}_1$. Now we have proved that $\overline{Q}_h^{k+1}(s,a) \geq Q_h^{\star}(s,a) \geq \underline{Q}_h^{k+1}(s,a).$ Therefore, by noting that
$$\overline{V}_h^{k+1}(s) =  \min\Big\{ H, \max_{a \in A_h^k(s)} \oq_h^{k+1}(s, a)\Big\} \geq \max_{a \in A_h^k(s)} Q_h^{\star}(s, a) = V_h^{\star}(s)$$
and
$$\underline{V}_h^{k+1}(s) =  \max\Big\{ 0, \max_{a \in A_h^k(s)} \uq_h^{k+1}(s, a)\Big\} \leq \max_{a } Q_h^{\star}(s, a) = V_h^{\star}(s),$$
we prove the conclusion for $k+1$ and thus complete the proof.
\end{proof}

\begin{lemma}
    \label{diffv}
    When event $\mathcal{G}_1$ in \Cref{eventul} happens, for any $(h,k) \in [H] \times [K]$, we have that:
    $$\ov_h^k(s_h^k) - \uv_h^k(s_h^k) \leq \oq_h^k(s_h^k,a_h^k) - \uq_h^k(s_h^k,a_h^k).$$
\end{lemma}
\begin{proof}
If $|A_h^k(s_h^k)| = 1$, based on the definition of $A_h^k(s_h^k)$, we have
$$\overline{V}_h^{k}(s_h^k) \leq  \max_{a \in A_h^{k-1}(s_h^k)} \oq_h^{k}(s_h^k, a) = \oq_h^k(s_h^k,a_h^k).$$
and
$$\uv_h^{k}(s_h^k) \geq \max_{a \in A_h^{k-1}(s_h^k)} \uq_h^{k}(s_h^k, a) = \uq_h^k(s_h^k,a_h^k).$$
Therefore, we prove the conclusion. If $|A_h^k(s_h^k)| > 1$, define:
$$\hat{a} = \argmax_{a \in A_h^{k-1}(s_h^k)} \oq_h^{k}(s_h^k, a),\quad \tilde{a} = \argmax_{a \in A_h^{k-1}(s_h^k)} \uq_h^{k}(s_h^k, a)$$
Then we have
\begin{align*}
    \overline{V}_h^{k}(s_h^k) - \uv_h^{k}(s_h^k)&\leq \oq_h^{k}(s_h^k, \hat{a})-\uq_h^{k}(s_h^k, \tilde{a})\\
    &= \left(\oq_h^{k}(s_h^k, \hat{a})-\uq_h^{k}(s_h^k, \hat{a})\right) + \left(\uq_h^{k}(s_h^k, \hat{a})-\uq_h^{k}(s_h^k, \tilde{a})\right) \\
    &\leq \oq_h^k(s_h^k,a_h^k) - \uq_h^k(s_h^k,a_h^k).
\end{align*}
The last inequality is because
$$a_h^k = \argmax_{a \in A_h^k(s)} \overline{Q}_h^{k}(s_h^k,a) - \underline{Q}_h^{k}(s_h^k,a)$$
when $|A_h^k(s_h^k)| > 1$ and $\uq_h^{k}(s_h^k, \hat{a})\leq \uq_h^{k}(s_h^k, \tilde{a})$ based on the definition of $\tilde{a}$.
\end{proof}

\begin{lemma}
    \label{diffq}
    When event $\mathcal{G}_1$ in \Cref{eventul} happens, for any $(h,k) \in [H] \times [K]$, we have that:
    $$\oq_h^k(s_h^k,a_h^k) - \uq_h^k(s_h^k,a_h^k) \geq \frac{\Delta_h(s_h^k,a_h^k)}{2}.$$
\end{lemma}
\begin{proof}
If $|A_h^k(s_h^k)| = 1$, based on the definition of $A_h^k(s_h^k)$, we have
$$V_h^{\star}(s_h^k) \leq \overline{V}_h^{k}(s_h^k) \leq  \max_{a \in A_h^{k-1}(s_h^k)} \oq_h^{k}(s_h^k, a) = \oq_h^k(s_h^k,a_h^k).$$
Combining the result with $\uq_h^k(s_h^k,a_h^k) \leq Q_h^{\star}(s_h^k,a_h^k)$ by \Cref{ulq}, it holds that:
$$\oq_h^k(s_h^k,a_h^k) - \uq_h^k(s_h^k,a_h^k) \geq V_h^{\star}(s_h^k) -  Q_h^{\star}(s_h^k,a_h^k)= \Delta_h(s_h^k,a_h^k).$$
Therefore, we prove the conclusion. If $|A_h^k(s_h^k)| > 1$, define:
$$\hat{a} = \argmax_{a \in A_h^{k-1}(s_h^k)} \oq_h^{k}(s_h^k, a).$$
Then the conclusion follows from the following analysis:
\begin{align}
    &\Delta_h(s_h^k,a_h^k) \nonumber\\
    &= V_h^{\star}(s_h^k) - Q_h^{\star}(s_h^k,a_h^k) \nonumber\\
    &\leq \ov_h^k(s_h^k) - \uq_h^k(s_h^k,a_h^k) \label{diffq1} \\
    & = \oq_h^{k}(s_h^k, \hat{a}) -\uq_h^k(s_h^k,a_h^k) \nonumber\\
    & = \left(\oq_h^{k}(s_h^k, \hat{a})-\uq_h^{k}(s_h^k, \hat{a})\right) + \left(\uq_h^{k}(s_h^k, \hat{a})-\oq_h^{k}(s_h^k, a_h^k)\right)+\left(\oq_h^{k}(s_h^k, a_h^k)-\uq_h^{k}(s_h^k, a_h^k)\right) \label{diffq2}\\
    &\leq 2\left(\oq_h^k(s_h^k,a_h^k) - \uq_h^k(s_h^k,a_h^k)\right) \nonumber.
\end{align}
Here, \Cref{diffq1} is because of $\ov_h^k(s_h^k) \geq V_h^{\star}(s_h^k)$ and $\uq_h^k(s_h^k,a_h^k)\leq Q_h^{\star}(s_h^k,a_h^k)$ by event $\mathcal{G}_1$ of \Cref{ulq}. \Cref{diffq2} is because
$$a_h^k = \argmax_{a \in A_h^k(s)} \overline{Q}_h^{k}(s_h^k,a) - \underline{Q}_h^{k}(s_h^k,a)$$
when $|A_h^k(s_h^k)| > 1$ and $\uq_h^{k}(s_h^k, \hat{a})\leq \uv_h^k(s_h^k) \leq \oq_h^{k}(s_h^k, a_h^k)$ since $a_h^k \in A_h^k(s_h^k)$.
\end{proof}

\subsection{\texorpdfstring{Proof of \Cref{regretulworstcase}}{Proof of Theorem 3.2}}
\label{ulwcproof}
In this section, we bound the worst-case regret under the event $\mathcal{G}_1 \cap\mathcal{G}_2$ in \Cref{eventul}.

For $h \in [H+1]$, denote:
\[
\delta_h^k = \left(\ov_h^k - V_h^{\star}\right)(s_h^k), \quad \zeta_h^k = \left(\ov_h^k - V_h^{\pi^k}\right)(s_h^k).
\]
Here, $\delta_{H+1}^k = \zeta_{H+1}^k = 0$. Because $V_h^{\star}(s)=\sup _\pi V_h^\pi(s)$, we have $\delta_h^k \leq \zeta_h^k$ for any $h \in [H+1]$. In addition, as $\ov_h^k(s)\geq V_h^{\star}(s)$ for all $(s,h,k)\in \mathcal{S}\times [H]\times [K]$ by \Cref{ulq}, we have:
\[
\text{Regret}(T) = \sum_{k=1}^K \left(V_1^\star(s_1^k) - V_1^{\pi^{k}}(s_1^k)\right) \leq \sum_{k=1}^K \left(\ov_1^k(s_1^k) - V_1^{\pi^{k}} (s_1^k)\right) = \sum_{k=1}^K \zeta_1^k.
\]
Thus, we only need to bound $\sum_{k=1}^K \zeta_1^k$. Noting that
\begin{align}
\label{recurbegin-single}
    \sum_{k=1}^K \zeta_h^k &\leq \sum_{k=1}^K (\oq_h^k - Q_h^{\pi^k})(s_h^k, a_h^k) \nonumber\\
    &= \sum_{k=1}^K (\oq_h^k - Q_h^{\star})(s_h^k, a_h^k) + \sum_{k=1}^K (Q_h^{\star} - Q_h^{\pi^k})(s_h^k, a_h^k) \nonumber\\
    &= \sum_{k=1}^K (\oq_h^k - Q_h^{\star})(s_h^k, a_h^k) + \sum_{k=1}^K \mathbb{P}_{s_h^k, a_h^k,h}\left(V_{h+1}^{\star} - V_{h+1}^{\pi^k}\right).
\end{align}
In the last inequality, we use the Bellman Equation (\Cref{eq_Bellman}):
\[
Q_h^{\star}(s, a) = r_h(s, a) + \mathbb{P}_{s,a,h} V_{h+1}^{\star}, \quad Q_h^{\pi^k}(s, a) = r_h(s, a) + \mathbb{P}_{s,a,h} V_{h+1}^{\pi^k}.
\]
Next, we will bound $\sum_{k=1}^K (\oq_h^k - Q_h^{\star})(s_h^k, a_h^k)$. Using Bellman Optimality Equation, we know
\begin{align}
\label{R-stardiff-single}
(\oq_h^k - Q_h^{\star})(s_h^k, a_h^k) 
&\leq \eta_0^{N_h^k}H + \sum_{i = 1}^{N_h^{k}}\eta_i^{N_h^{k}}\left(\ov_{h+1}^{k^i}(s_{h+1}^{k^i}) - \mathbb{P}_{s_h^k,a_h^k,h}V_{h+1}^{\star}\right)+\sum_{i=1}^{N_h^k} \eta_i^{N_h^k}b_{i} \nonumber\\
&\leq \eta_0^{N_h^k}H + \sum_{i = 1}^{N_h^{k}}\eta_i^{N_h^{k}}\left(\ov_{h+1}^{k^i}(s_{h+1}^{k^i}) - \mathbb{P}_{s_h^k,a_h^k,h}V_{h+1}^{\star}\right)+4\sqrt{\frac{H^3\iota}{N_h^k}} \nonumber\\
&\leq \eta_0^{N_h^k}H + \sum_{i = 1}^{N_h^{k}}\eta_i^{N_h^{k}}\left(\ov_{h+1}^{k^i} - V_{h+1}^{\star}\right)(s_{h+1}^{k^i})+6\sqrt{\frac{H^3\iota}{N_h^k}}.
\end{align}
The first inequality uses:
\begin{align*}
\sum_{i=1}^{N_h^k} \eta_i^{N_h^k}b_{i}
&=2\sum_{i=1}^{N_h^k}\eta_i^{N_h^k}\sqrt{\frac{H^3\iota}{i}} \leq 4\sqrt{\frac{H^3\iota}{N_h^k}}.
\end{align*}
by \Cref{etasum}. The last inequality is by the event $\mathcal{G}_1$ in \Cref{eventul}. By summing \Cref{R-stardiff-single} over $k \in [K]$, we reach
\begin{align}
\label{recur1begin-single}
&\sum_{k=1}^K(\oq_h^{k} - Q_h^{\star})(s_h^k, a_h^k)\nonumber\\
&\leq \sum_{k=1}^K \eta_0^{N_h^k}H+ \sum_{k =1, N_h^k > 0}^K\sum_{i=1}^{N_h^k} \eta_i^{N_h^k}  (\ov_{h+1}^{k^i}-V_{h+1}^{\star})(s_{h+1}^{k^i}) + \sum_{k =1, N_h^k > 0}^K6\sqrt{\frac{H^3\iota}{N_h^k}}.
\end{align}
For the first term of \Cref{recur1begin-single}, we have:
\begin{equation}
\label{recur11}
\sum_{k=1}^K \eta_0^{N_h^k}H = H \sum_{s,a} \sum_{k=1}^K \mathbb{I}[N_h^k = 0, (s_h^k, a_h^k) = (s,a)] \leq HSA.
\end{equation}

For the second term of \Cref{recur1begin-single}, by \Cref{wN}, it holds that:
\begin{equation}
\label{recur12}
\sum_{k =1, N_h^k >0}^K 6\sqrt{\frac{H^3\iota}{N_h^k}} \leq 12\sqrt{H^2SAT\iota}.
\end{equation}
For the last term of \Cref{recur1begin-single}, similar to proof of Equation (4.7) in \cite{jin2018q}, it holds that:
\begin{equation}
\label{recur13}
\sum_{k =1, N_h^k > 0}^K\sum_{i=1}^{N_h^k} \eta_i^{N_h^k}  (\ov_{h+1}^{k^i}-V_{h+1}^{\star})(s_{h+1}^{k^i}) \leq \left(1+\frac{1}{H}\right)\sum_{k=1}^K\delta_{h+1}^k.
\end{equation}
Taking the above results \Cref{recur11}, \Cref{recur12} and \Cref{recur13} together with \Cref{recur1begin-single}, back to \Cref{recurbegin-single} to reach
\begin{align*}
    &\sum_{k=1}^K \zeta_h^k 
    \leq \left(1+\frac{1}{H}\right)\sum_{k=1}^K\delta_{h+1}^k + 12\sqrt{H^2SAT\iota} + HSA + \sum_{k=1}^K \mathbb{P}_{s_h^k, a_h^k,h}\left(V_{h+1}^{\star} - V_{h+1}^{\pi^k}\right) \nonumber\\
    &\leq \left(1+\frac{1}{H}\right)\sum_{k=1}^K \zeta_{h+1}^k+ 12\sqrt{H^2SAT\iota} + HSA + \sum_{k=1}^K\left(\mathbb{P}_{s_h^{k}, a_h^{k},h}-\mathbbm{1}_{s_{h+1}^{k}}\right)\left(V_{h+1}^{\star} - V_{h+1}^{\pi^k}\right).
\end{align*}
By recursion on $h$, since $\zeta_{H+1}^k = 0$, we can get the following conclusion:
\begin{align*}
    &\mbox{Regret}(T) \leq \sum_{k=1}^K \zeta_1^k \\
    &\leq O\left(\sqrt{H^4SAT\iota} + H^2SA+\sum_{h=1}^H\sum_{k=1}^K\left(1+\frac{1}{H}\right)^{h-1}\left(\mathbb{P}_{s_h^{k}, a_h^{k},h}-\mathbbm{1}_{s_{h+1}^{k}}\right)\left(V_{h+1}^{\star} - V_{h+1}^{\pi^k}\right)\right)\\
    &\leq  O\left(\sqrt{H^4SAT\iota} + H^2SA\right).
\end{align*}
The last inequality is because of the event $\mathcal{G}_2$ in \Cref{eventul}. We note that when $T \geq \sqrt{H^4SAT\iota}$, we have $\sqrt{H^4SAT\iota} \geq H^2SA$, and when $T \leq \sqrt{H^4SAT\iota}$, we have $\sum_{k=1}^K \delta_1^k \leq HK = T \leq \sqrt{H^4SAT\iota}$. Therefore, we can remove the $H^2SA$ term in the regret upper bound.

To summarize, with probability at least $1-2p$, we have $\mbox{Regret}(T) \leq O(\sqrt{H^4SAT\iota})$. Rescaling $p$ to $p/2$ finishes the proof.

\subsection{\texorpdfstring{Proof of \Cref{regretul}}{Proof of Theorem 3.3}}
\label{ulgapproof}
In this section, we derive the fine-grained gap-dependent regret bound for ULCB-Hoeffding following a similar line of reasoning as in UCB-Hoeffding. Let  $p = \frac{1}{T}$, then the event $\mathcal{G}_1$ holds with probability at least $1-\frac{1}{T}$ and $\iota \leq O(\log(SAT))$. Therefore, by \Cref{expectedr}, we have
\begin{align*}
    &\mathbb{E}(\textnormal{Regret}(T))\\
    & = \mathbb{E}\left(\sum_{h= 1}^H\sum_{s,a}\Delta_h(s,a)N_h^{K+1}(s,a)\right)\\
 & = \mathbb{E} \left(\sum_{h= 1}^H\sum_{s,a}\Delta_h(s,a)N_h^{K+1}(s,a) \bigg| \mathcal{G}_1\right)\mathbb{P}(\mathcal{G}_1) + \mathbb{E} \left(\sum_{h= 1}^H\sum_{s,a}\Delta_h(s,a)N_h^{K+1}(s,a) \bigg|  \mathcal{G}_1^c\right)\mathbb{P}(\mathcal{G}_1^c)\\
    & \leq O\Bigg( \sum_{h = 1}^H\sum_{\Delta_{h}(s,a)>0}\frac{H^5\log(SAT)}{\Delta_{h}(s,a)}+  \sum_{h= 1}^H\frac{H^3\left(\sum_{t = h + 1}^H\sqrt{|Z_{\textnormal{opt},t}|}\right)^2\log(SAT)}{\Delta_{\textnormal{min}, h}} + SAH^3\Bigg).
\end{align*}
The last inequality is because under the event $\mathcal{G}_1$, by \Cref{weightvisit2}, we have
\begin{align*}
 &\sum_{h= 1}^H\sum_{s,a}\Delta_h(s,a)N_h^{K+1}(s,a)\\
& \leq O\left( \sum_{h = 1}^H\sum_{\Delta_{h}(s,a)>0}\frac{H^5\log(SAT)}{\Delta_{h}(s,a)}+  \sum_{h= 1}^H\frac{H^3\left(\sum_{t = h + 1}^H\sqrt{|Z_{\textnormal{opt},t}|}\right)^2\log(SAT)}{\Delta_{\textnormal{min}, h}} + SAH^3\right).
\end{align*}
and under the event $\mathcal{G}_1^c$, 
$$\sum_{h= 1}^H\sum_{s,a}\Delta_h(s,a)N_h^{K+1}(s,a) \leq HT.$$
Now we only need to prove \Cref{weightvisit2}. Using the same fine-grained analytical framework, we first bound the cumulative weighted estimation error for each state-action pair $(s,a)$ at step $h$.
\begin{lemma}
\label{singlerecurQ2}
For the ULCB-Hoeffding algorithm (\Cref{ambul}), under the event $\mathcal{G}_1$ in \Cref{eventul}, for any non-negative weight sequence $\{\omega_{h}^{k}\}_{k}$ at step $h$, it holds simultaneously for any $(s,a) \in \mathcal{S} \times \mathcal{A} $ and subsequent step $h' \in [h,H]$ that:
\begin{align}
\label{seprecur2}
         \sum_{k = 1}^K \omega_{h}(k,h',s,a) &\left(\oq_{h'}^k - \uq_{h'}^k\right)(s_{h'}^{k},a_{h'}^{k}) \leq \sum_{k' = 1}^K \tilde{\omega}_{h}(k',h'+1, s, a) \Big(\oq_{{h'}+1}^{k'} - \uq_{{h'}+1}^{k'}\Big)(s,a)_{{h'}+1}^{k'}    \nonumber\\
         & \quad + \|\omega_h(\cdot,h')\|_{\infty}H + 16\sqrt{H^3\|\omega_h(\cdot,{h'})\|_{\infty}\|\omega_h(\cdot,{h'},s,a)\|_{1}\iota}.
\end{align}
and
\begin{align}
\label{sumrecur2}
         \sum_{k = 1}^K \omega_{h}(k,h') &\left(\oq_{h'}^k - \uq_{h'}^k\right)(s_{h'}^{k},a_{h'}^{k}) \leq \sum_{k' = 1}^K \omega_{h}(k',h'+1) \left(\oq_{{h'}+1}^{k'} - \uq_{{h'}+1}^{k'}\right)(s_{{h'}+1}^{k'},a_{{h'}+1}^{k'})    \nonumber\\
         & \quad + \|\omega_h(\cdot,h')\|_{\infty}SAH + 16\sum_{s,a}\sqrt{H^3\|\omega_h(\cdot,{h'})\|_{\infty}\|\omega_h(\cdot,{h'},s,a)\|_{1}\iota}.
\end{align}
\end{lemma}

\begin{proof}
    Under the event $\mathcal{G}_1$ in \Cref{eventul}, by \Cref{upper1} and \Cref{lower1}, we have
    \begin{align}
\label{upper2}
    \oq_{h'}^{k}(s_{h'}^k,a_{h'}^k) &= \eta_0^{N_{h'}^{k}}H + \sum_{i = 1}^{N_{h'}^{k}}\eta_i^{N_{h'}^{k}}\left(r_{h'}(s_{h'}^k,a_{h'}^k) + \ov_{h'+1}^{k^i}(s_{h'+1}^{k^i}) + b_{i}\right) \nonumber\\
    &\leq  \eta_0^{N_{h'}^{k}}H + \sum_{i = 1}^{N_{h'}^{k}}\eta_i^{N_{h'}^{k}}\left(r_{h'}(s_{h'}^k,a_{h'}^k) + \ov_{h'+1}^{k^i}(s_{h'+1}^{k^i})\right) +4\sqrt{\frac{H^3\iota}{N_{h'}^{k}}},
\end{align}
and
\begin{align}
\label{lower2}
    \uq_{h'}^k(s_{h'}^k,a_{h'}^k) &= \sum_{i = 1}^{N_{h'}^{k}}\eta_i^{N_{h'}^{k}}\left(r_{h'}(s_{h'}^k,a_{h'}^k) + \uv_{h'+1}^{k^i}(s_{h'+1}^{k^i}) - b_{i}\right). \nonumber\\
    &\geq  \sum_{i = 1}^{N_{h'}^{k}}\eta_i^{N_{h'}^{k}}\left(r_{h'}(s_{h'}^k,a_{h'}^k) + \uv_{h'+1}^{k^i}(s_{h'+1}^{k^i}\right) -4\sqrt{\frac{H^3\iota}{N_{h'}^{k}}}.
\end{align}
These two inequalities are because
$$\sum_{i = 1}^{N_{h'}^{k}}\eta_i^{N_{h'}^{k}} b_{i} = 2\sum_{i = 1}^{N_{h'}^{k}}\eta_i^{N_{h'}^{k}} \sqrt{\frac{H^3\iota}{i}} \leq 4\sqrt{\frac{H^3\iota}{N_{h'}^{k}}}$$
by (c) of \Cref{etasum}. Therefore, by taking the difference between \Cref{upper2} and \Cref{lower2}, we reach
    \begin{align}
    \label{weightbegin2}
    &\sum_{k = 1}^K \omega_{h}(k,h',s,a) \left(\oq_{h'}^k - \uq_{h'}^k\right)(s_{h'}^{k}, a_{h'}^{k}) \leq \sum_{k = 1}^K \omega_{h}(k,h',s,a)\eta_0^{N_{h'}^k}H
        \nonumber\\
        &\quad +  \sum_{k = 1}^K \omega_{h}(k,h',s,a) \sum_{i = 1}^{N_{h'}^k}\eta_i^{N_{h'}^k}\left(\ov_{h'+1}^{k^i} - \uv_{h'+1}^{k^i}\right)(s_{{h'}+1}^{k^i})  +  \sum_{k = 1}^K \omega_{h}(k,h',s,a)\beta_{N_{h'}^k}.
    \end{align}
Same as \Cref{recurfirst} and \Cref{recur3}, we have
\begin{align*}
\sum_{k = 1}^K \omega_{h}(k,h',s,a)\eta_0^{N_{h'}^k}H \leq \|\omega_h(\cdot,h')\|_{\infty}H.
\end{align*}
and 
\begin{align*}
    \sum_{k = 1}^K \omega_{h}(k,h',s,a)\beta_{N_{h'}^k} \leq  16\sqrt{H^3\|\omega_h(\cdot,{h'})\|_{\infty}\|\omega_h(\cdot,{h'},s,a)\|_{1}\iota}.
\end{align*}
For the second term in \Cref{weightbegin2}, similar to \Cref{recursecond}, we have
\begin{align*}
    &\sum_{k = 1}^K \omega_{h}(k,h',s,a) \sum_{i = 1}^{N_{h'}^k}\eta_i^{N_{h'}^k}\left(\ov_{{h'}+1}^{k^i} - \uv_{{h'}+1}^{k^i}\right)(s_{{h'}+1}^{k^i})\nonumber\\
    &= \sum_{k = 1}^K \omega_{h}(k,h')\mathbb{I}\left[(s_{h'}^{k}, a_{h'}^{k}) = (s,a)\right] \sum_{i = 1}^{N_{h'}^k}\eta_i^{N_{h'}^k}\left(\ov_{{h'}+1}^{k^i} - \uv_{{h'}+1}^{k^i}\right)(s_{{h'}+1}^{k^i}) \left(\sum_{k' = 1}^K \mathbb{I}[k^i = k']\right) \nonumber\\
    & = \sum_{k' = 1}^K \left(\ov_{{h'}+1}^{k'} - \uv_{{h'}+1}^{k'}\right)(s_{{h'}+1}^{k'})\left(\sum_{k = 1}^K\sum_{i = 1}^{N_{h'}^k}\omega_{h}(k,h')\mathbb{I}\left[(s_{h'}^{k}, a_{h'}^{k}) = (s,a)\right]\eta_i^{N_{h'}^k}\mathbb{I}[k^i = k'] \right) \nonumber\\
    &\leq \sum_{k' = 1}^K \big(\oq_{h'+1}^{k'} - \uq_{h'+1}^{k'}\big)(s_{{h'}+1}^{k'},a_{{h'}+1}^{k'})\left(\sum_{k = 1}^K\sum_{i = 1}^{N_{h'}^k}\omega_{h}(k,h')\eta_i^{N_{h'}^k}\mathbb{I}\left[k^i = k',(s_{h'}^k,a_{h'}^k) = (s,a)\right] \right) \nonumber\\
    & = \sum_{k' = 1}^K \tilde{\omega}_{h}(k',h'+1, s, a) \big(\oq_{h'+1}^{k'} - \uq_{h'+1}^{k'}\big)(s_{{h'}+1}^{k'},a_{{h'}+1}^{k'}).
\end{align*}
The inequality follows from \Cref{diffv}. Combining the upper bounds for each term in \Cref{weightbegin2}, we finish the proof of \Cref{seprecur2}. Summing this conclusion over all state-action pairs $(s,a)$ , and noting that $\sum_{s,a}\tilde{\omega}_{h}(k',h'+1, s, a) = \omega_{h}(k',h'+1)$, we prove \Cref{sumrecur2}.
\end{proof}
Building on the lemma above, we can establish the following result. The proof follows the same argument as in \Cref{recurQ}.
\begin{lemma}
\label{recurQ2}
For the ULCB-Hoeffding algorithm (\Cref{ambul}), under the event $\mathcal{G}_1$ in \Cref{eventul}, for any non-negative weight sequence $\{\omega_{h}^{k}\}_{k}$ at step $h$, it holds simultaneously for any $(s,a) \in \mathcal{S} \times \mathcal{A} $ and subsequent step $h' \in [h,H]$ that:
\begin{align*}
         &\sum_{k = 1}^K \omega_{h}(k,h') \left(\oq_{h'}^k - \uq_{h'}^k\right)(s_{h'}^{k},a_{h'}^{k}) \\
         &\leq \sum_{h_1 = h'}^H\|\omega_h(\cdot,h_1)\|_{\infty}SAH + 16\sum_{h_1 = h'}^H\sum_{s,a}\sqrt{H^3\|\omega_h(\cdot,{h_1})\|_{\infty}\|\omega_h(\cdot,{h_1},s,a)\|_{1}\iota}.
\end{align*}
\end{lemma}
The following lemma bounds the summation $\sum_{s,a} \Delta_h(s,a) N_h^{K+1}(s,a)$, which directly contributes to bounding the expected regret via \Cref{expectedr}. The proof largely mirrors that of \Cref{weightvisit}, with only minor differences; we therefore focus on the distinctions and omit the unchanged parts.
\begin{lemma}
\label{weightvisit2}
For the ULCB-Hoeffding algorithm (\Cref{ambul}), under the event $\mathcal{G}_1$ in \Cref{eventul}, it holds for any $h \in [H]$ and $c_2 = 82944$ that:
\begin{align*}
     \sum_{s,a} \frac{\Delta_h(s,a)N_h^{K+1}(s,a)}{c_2} &\leq SAH^2 + \sum_{h' = h}^H\sum_{\Delta_{h'}(s,a)>0}\frac{H^4\iota}{\Delta_{h'}(s,a)}+  \frac{H^3\left(\sum_{t = h + 1}^H\sqrt{|Z_{\textnormal{opt},t}|}\right)^2\iota}{\Delta_{\textnormal{min}, h}} \\
     &\quad + \sum_{h' = h + 1}^H  \frac{H^2\left(\sum_{t = h' + 1}^H\sqrt{|Z_{\textnormal{opt},t}|}\right)^2\iota}{\Delta_{\textnormal{min}, h'}}. 
\end{align*}
\end{lemma}
\begin{proof}
We use mathematical induction to prove this conclusion. For step $h $, let
\begin{align*}
    \omega_h^k &= \mathbb{I}\left[\oq_h^k(s_h^k,a_h^k) - \uq_h^k(s_h^k,a_h^k) \geq \frac{\Delta_h(s_h^k,a_h^k)}{2}, (s_h^k,a_h^k) \in Z_{\non,h}\right] \\
    & = \mathbb{I}\left[(s_h^k,a_h^k) \in Z_{\non,h}\right]\leq 1.
\end{align*}
The second equation is by \Cref{diffq}.
Based on the definition of $\omega_h^k$, for any $(s,a) \in Z_{\non,h}$, 
$$\|\omega_h(\cdot,h, s, a)\|_{1} = \sum_{k=1}^K\mathbb{I}\left[(s_h^k,a_h^k) =(s,a)\right] = N_h^{K+1}(s,a)$$ 
and $\|\omega_h(\cdot,h, s, a)\|_{1} = 0$ for $(s,a) \in Z_{\opt,h}$.
By \Cref{singlerecurQ2}, for any $(s,a) \in Z_{\non,h}$, it holds that,
\begin{align}
\label{upperdiff2}
&\sum_{k = 1}^K \omega_{h}^{k} \left(\oq_h^k - \uq_h^k\right)(s_h^{k}, a_h^{k}) \mathbb{I}[(s_h^{k}, a_h^{k}) = (s,a)] = \sum_{k = 1}^K \omega_h(k,h, s, a) \left(\oq_h^k - \uq_h^k\right)(s_h^{k}, a_h^{k}) \nonumber\\
&\leq  H + 16\sqrt{H^3N_{h}^{K+1}(s,a)\iota} + \sum_{k' = 1}^K \tilde{\omega}_{h}(k',h+1, s, a) \left(\oq_{h+1}^{k'} - \uq_{h+1}^{k'}\right)(s_{h+1}^{k'},a_{h+1}^{k'}).
\end{align}
Also note that for any $(s,a) \in Z_{\non,h}$ with $\Delta_h(s,a) > 0$, we have
\begin{align}
\label{lowerdiff2}
    &\sum_{k = 1}^K \omega_{h}^{k} \left(\oq_h^k - \uq_h^{\star}\right)(s_h^{k}, a_h^{k}) \mathbb{I}[(s_h^{k}, a_h^{k}) = (s,a)] \nonumber\\
    &\geq \frac{\Delta_h(s,a)}{2}\sum_{k = 1}^K \omega_{h}^{k}\mathbb{I}[(s_h^{k}, a_h^{k}) = (s,a)] = \frac{\Delta_h(s,a)N_{h}^{K+1}(s,a)}{2}.
\end{align}

Combining the results of \Cref{upperdiff2} and \Cref{lowerdiff2}, it holds for any $(s,a) \in Z_{\non,h}$ that,
\begin{align*}
         &\frac{\Delta_h(s,a)N_{h}^{K+1}(s,a)}{2} \nonumber\\
         &\leq  H + 16\sqrt{H^3N_{h}^{K+1}(s,a)\iota} + \sum_{k' = 1}^K \tilde{\omega}_{h}(k',h+1, s, a) (\oq_{h+1}^{k'} - \uq_{h+1}^{k'})(s_{h+1}^{k'},a_{h+1}^{k'}).
\end{align*}
Solving this inequality, we can derive the following conclusion for any $(s,a) \in Z_{\non,h}$:
$$\Delta_h(s,a)N_{h}^{K+1}(s,a) \leq \frac{1024H^3\iota}{\Delta_h(s,a)} + 4H + 4\sum_{k' = 1}^K \tilde{\omega}_{h}(k',h+1, s, a) (\oq_{h+1}^{k'} - \uq_{h+1}^{k'})(s_{h+1}^{k'},a_{h+1}^{k'}).$$
Since $\Delta_h(s,a) = 0$ for \((s, a) \notin Z_{\non,h}\) and $\oq_{h+1}^k(s,a) \geq \uq_{h+1}^k(s,a)$ for any $ (s,a,h,k)\in \mathcal{S} \times \mathcal{A} \times[H] \times[K]$, by summing the inequality above over all state-action pairs \((s, a) \in Z_{\non,h}\), we reach:
\begin{align}
\label{target2}
    &\sum_{s,a}\Delta_h(s,a)N_{h}^{K+1}(s,a) \nonumber \\
    &\leq \sum_{\Delta_h(s,a) > 0}\frac{1024H^3\iota}{\Delta_h(s,a)} + 4SAH + 4\sum_{k' = 1}^K \omega_{h}(k',h+1) (\oq_{h+1}^{k'} - \uq_{h+1}^{k'})(s_{h+1}^{k'},a_{h+1}^{k'}).
\end{align}
Let \( h = H \). Since \( Q_{H+1}^k(s,a) = Q_{H+1}^{\star}(s,a) = 0 \) for all \( (s,a,k) \in \mathcal{S} \times \mathcal{A} \times [K] \), the base case \( h = H \) follows immediately from \Cref{target2}.

Now, assume the lemma holds for steps \( h+1, \ldots, H \). Using the same inductive argument as in \Cref{weightvisit}, we prove the case for step \( h \). From \Cref{recurQ2}, we have:
\begin{align}
\label{totalrecur2}
    &\sum_{k' = 1}^K \omega_{h}(k',h+1) (\oq_{h+1}^{k'} - \uq_{h+1}^{k'})(s_{h+1}^{k'},a_{h+1}^{k'})\nonumber\\
    &\leq \sum_{h' = h + 1}^H \|\omega_h(\cdot,h')\|_{\infty}SAH + 16\sum_{h' = h+1}^H\sum_{s,a}\sqrt{H^3\|\omega_h(\cdot,{h'})\|_{\infty}\|\omega_h(\cdot,{h'},s,a)\|_{1}\iota}.
\end{align}
Similar to the proof of \Cref{individualbound}, we can derive that
\begin{align*}
    &16\sum_{h' = h+1}^H\sum_{s,a}\sqrt{H^3\|\omega_h(\cdot,{h'})\|_{\infty}\|\omega_h(\cdot,{h'},s,a)\|_{1}\iota} \nonumber\\
    &\leq 24\sqrt{c_2}\left(SAH^2 + 2\sum_{h' = h+1}^H\sum_{\Delta_{h'}(s,a)>0}\frac{H^4\iota}{\Delta_{h'}(s,a)} +  2\sum_{h' = h + 1}^H\frac{H^2\left(\sum_{t = h' + 1}^H\sqrt{|Z_{\textnormal{opt},t}|}\right)^2\iota}{\Delta_{\textnormal{min}, h'}}\right)\nonumber\\
    & \quad+ 16\sqrt{3}\left(\sum_{h' = h + 1}^H\sqrt{H^3|Z_{\opt, h'}|\iota}\right)\sqrt{\sum_{\Delta_h(s,a) > 0}N_h^{K+1}(s,a)}.
\end{align*}
By applying this inequality to \Cref{totalrecur2} and substituting the result into \Cref{target2}, and using the bound $\|\omega_h(\cdot,h')\|_{\infty} \leq 3$ from \Cref{total0}, we conclude that the following inequality holds:
\begin{align*}
    &\sum_{s,a}\Delta_h(s,a)N_{h}^{K+1}(s,a) \nonumber \\
    &\leq 192\sqrt{c_1}\left(SAH^2 +\sum_{h' = h}^H\sum_{\Delta_{h'}(s,a) > 0}\frac{H^4\iota}{\Delta_{h'}(s,a)} +  \sum_{h' = h + 1}^H\frac{H^2\left(\sum_{t = h' + 1}^H\sqrt{|Z_{\textnormal{opt},t}|}\right)^2\iota}{\Delta_{\textnormal{min}, h'}}\right) \nonumber\\
    &\quad  + 64\sqrt{3}\left(\sum_{h' = h + 1}^H\sqrt{H^3|Z_{\opt, h'}|\iota}\right)\sqrt{\sum_{\Delta_h(s,a) > 0}N_h^{K+1}(s,a)}.
\end{align*}
By applying the same method used to solve \Cref{targetin}, we can similarly establish that
\begin{align*}
    &\sum_{s,a}\Delta_h(s,a)N_{h}^{K+1}(s,a) \leq 18432\frac{H^3\iota\left(\sum_{t = h + 1}^H\sqrt{|Z_{\opt,t}|}\right)^2}{\Delta_{\textnormal{min},h}} \nonumber\\
    &\quad+ 288\sqrt{c_2}\left(SAH^2 +\sum_{h' = h}^H\sum_{\Delta_{h'}(s,a) > 0}\frac{H^4\iota}{\Delta_{h'}(s,a)} +  \sum_{h' = h + 1}^H\frac{H^2\left(\sum_{t = h' + 1}^H\sqrt{|Z_{\textnormal{opt},t}|}\right)^2\iota}{\Delta_{\textnormal{min}, h'}}\right).
\end{align*}
This establishes the result for step~$h$, thereby completing the proof.
\end{proof}

\newpage
\section{Proof of Fine-Grained Gap-Dependent Regret Bound for AMB}
\label{ambproof}
\subsection{Review of AMB Algorithm}
\label{errordetail}
We first review the AMB algorithm \citep{xu2021fine} in \Cref{oamb}. 
\begin{algorithm}[ht]
\caption{Adaptive Multi-step Bootstrap (AMB)}
\label{oamb}
\begin{algorithmic}[1]
\State \textbf{Input:} $p \in (0,1)$ (failure probability), $H, A, S, K \geq 1$
\State \textbf{Initialization:} For any $\forall (s, a, h) \in \mathcal{S} \times \mathcal{A} \times[H]$, initialize $\overline{Q}_h^1(s,a) \leftarrow H$, $\underline{Q}_h^1(s,a) \leftarrow 0$, $G_h^1 = \emptyset$, $A_h^1(s) \leftarrow \mathcal{A}$ and $\overline{V}_h^1(s) = \underline{V}_h^1(s) = 0$.
\For{$k = 1, 2, \ldots, K$}
    \State \textbf{Step 1: Collect data:}
    \State Rollout from a random initial state $s_1^k \sim \mu$ using policy $\pi_k = \{\pi_h^k\}_{h=1}^H$, defined as:
    \[
    \pi_h^k(s) \triangleq 
    \begin{cases}
        \arg\max_{a \in A_h^k(s)} \overline{Q}_{h}^{k}(s,a) - \underline{Q}_{h}^{k}(s,a), & \text{if } |A_h^k(s)| > 1 \\
        \text{the element in } A_h^k(s), & \text{if } |A_h^k(s)| = 1
    \end{cases}
    \]
    \State and obtain an episode $\left\{(s_h^k, a_h^k,r_h^k = r_h(s_h^k,a_h^k))\right\}_{h=1}^H$.
    
    \State \textbf{Step 2: Update $Q$-function:}
    \For{$h = H, H-1, \ldots, 1$}
        \If{$s_h^k \notin G_h^k$}
            \State Let $n = N_h^{k+1}(s, a)$ be the number of visits to $(s, a)$ at step $h$ in the first $k$ episodes.
            \State Let $h' = h'(k,h)$ be the first index after step $h$ in episode $k$ such that $s_{h'}^k \notin G_{h'}^k$. (If
            \Statex \hspace{4.2em} such a state does not exist, set $h' = H + 1$ and $\overline{V}_{H+1}^{k} = \uv_{H+1}^{k}(s) = 0$.)
            \State Compute $b'_n = 4\sqrt{H^3 \log(2SAT/ p)/n}$ and $\hat{Q}_h^{k,d}(s, a) = \sum_{h \leq i < h'} r_i^k$.
            \State $\overline{Q}_h^{k+1}(s_h^k, a_h^k) = \min \Big\{ H, (1 - \eta_n) \overline{Q}_{h}^{k}(s_h^k, a_h^k) + \eta_n \big( \hat{Q}_h^{k,d}(s_h^k, a_h^k) + \overline{V}_{h'}^{k}(s_{h'}^k) + b'_n\big) \Big\}.$
    \State $\underline{Q}_h^{k+1}(s_h^k, a_h^k) = \max \Big\{ 0, (1 - \eta_n) \underline{Q}_{h}^{k}(s_h^k, a_h^k) + \eta_n \big( \hat{Q}_h^{k,d}(s_h^k, a_h^k) + \underline{V}_{h'}^k(s_{h'}^k) - b'_n \big) \Big\}.$
    \State $\overline{V}_h^{k+1}(s_h^k) = \max_{a' \in A_h^k(s_h^k)} \overline{Q}_h^{k+1}(s_h^k, a').$
\State $\underline{V}_h^{k+1}(s_h^k) = \max_{a' \in A_h^k(s_h^k)} \underline{Q}_h^{k+1}(s_h^k, a')$.
        \EndIf
    \EndFor

    \For{$(s, a, h) \in \mathcal{S} \times \mathcal{A} \times [H] \setminus \{(s_h^k, a_h^k)|1\leq h \leq H, s_h^k \notin G_h^k\}_{h=1}^H$}
        \State \hspace{-0.232em}$\overline{Q}_h^{k+1}(s,a) = \overline{Q}_{h}^{k}(s,a)$, $\underline{Q}_h^{k+1}(s,a) = \underline{Q}_{h}^{k}(s,a)$, $\overline{V}_h^{k+1}(s) = \overline{V}_{h}^{k}(s)$, $\underline{V}_h^{k+1}(s) = \underline{V}_{h}^{k}(s)$.
    \EndFor
    \State \textbf{Step 3: Eliminate the sub-optimal actions:}
    \State $\forall s \in \mathcal{S}, h \in [H],$ set $A_h^{k+1}(s) = \left\{ a \in A_h^k(s) : \overline{Q}_h^k(s,a) \geq \underline{V}_h^k(s) \right\}$.
    \State Set $G_h^{k+1} = \{ s \in \mathcal{S} : |A_h^{k+1}(s)| = 1 \}.$
\EndFor
\end{algorithmic}
\end{algorithm}
\\
AMB maintains upper and lower bounds $\overline{Q}_h^k(s, a)$ and $\underline{Q}_h^k(s, a)$ for each state-action-step triple $(s, a, h)$ at the beginning of episode $k$. The policy $\pi^k$ is selected by maximizing the confidence interval length $\overline{Q} - \underline{Q}$. Based on these bounds, for each state $s$ and step $h$, AMB constructs a set of candidate optimal actions, denoted by $A_h^k(s)$, by eliminating any action $a$ whose upper bound is lower than the lower bound of some other action. If $|A_h^k(s)| = 1$, the optimal action is identified, denoted by $\pi_h^{\star}(s)$, and $s$ is referred to as a \textit{decided state}; otherwise, $s$ is called an \textit{undecided state}. Let $G_h^k = \{ s \mid |A_h^k(s)| = 1 \}$ denote the set of all decided states at step $h$ in episode $k$.

Let $\mathcal{F}_{h,k}$ denote the filtration generated by the trajectory up to and including step $h$ in episode $k$. In particular, $\mathcal{F}_{h,k}$ contains the policy $\pi^k$ and the realized state-action pair $(s_h^k, a_h^k)$. AMB constructs upper and lower bounds of the $Q$-function by decomposing the $Q$-function into two parts: the rewards accumulated within the decided states and those from the undecided states. Formally, starting from state $s_h^k$ at step $h$ and following the policy $\pi^k$, we observe the trajectory $\left\{(s_{h'}^k, a_{h'}^k, r_{h'}^k)\right\}_{h'=h}^H$. Let $h' = h'(k,h) > h$ denote the first index such that $s_{h'}^k \notin G_{h'}^k$. Then, the optimal $Q$-value function $Q_h^{\star}(s, a)$ can be decomposed as:
\begin{equation*}
Q_h^{k,d}(s, a) \triangleq \mathbb{E} \left[ \sum_{l=h}^{h'-1} r_l(s_l^k, \pi_l^{\star}(s_l^k)) \mid \mathcal{F}_{h,k}, (s_h^k,a_h^k) = (s,a) \right]
\end{equation*}
and
$$Q_h^{k,ud}(s, a) \triangleq \mathbb{E} \left[ V_{h'}^{\star}(s_{h'}^k) \mid \mathcal{F}_{h,k}, (s_h^k,a_h^k) = (s,a) \right],$$
where   $Q_h^{k,d}$ and $Q_h^{k,ud}$ represent the contributions from the decided and undecided parts, respectively. To estimate $Q_h^{k,d}(s_h, a_h)$, AMB uses the sum of empirical rewards in episode $k$: 
$$\hat{Q}_h^{k,d}(s, a) = \sum_{l=h}^{h'-1} r_l(s_l^k, a_l^k).$$ 
To estimate $Q_h^{k,ud}(s_h, a_h)$, AMB performs bootstrapping using the existing upper-bound $V$-estimate $\overline{V}_h^k(s_{h'}^k)$. The resulting update rules of the $Q$-estimates are:
\begin{equation}
    \label{opt}
    \overline{Q}_h^{k+1}(s_h^k, a_h^k) = \min \left\{ H, (1 - \eta_n) \overline{Q}_{h}^{k}(s_h^k, a_h^k) + \eta_n \left( \hat{Q}_h^{k,d}(s_h^k, a_h^k) + \overline{V}_{h'}^{k}(s_{h'}^k) + b'_n \right) \right\}.
\end{equation}
\begin{equation}
\label{lower}
    \underline{Q}_h^{k+1}(s_h^k, a_h^k) = \max \left\{ 0, (1 - \eta_n) \underline{Q}_{h}^{k}(s_h^k, a_h^k) + \eta_n \left( \hat{Q}_h^{k,d}(s_h^k, a_h^k) + \underline{V}_{h'}^k(s_{h'}^k) - b'_n \right) \right\}.
\end{equation}
The learning rate $\eta_n = \frac{H+1}{H+n}$, where $n = N_h^{k+1}(s_h^k, a_h^k)$ represents the number of visits to state-action pair $(s_h^k, a_h^k)$ at step $h$ within the first $k$ episodes. By unrolling the recursion in $h$, we obtain:
\begin{equation}
\label{problem1}
    \overline{Q}_h^k(s_h^k, a_h^k) \leq \min\bigg\{H,\eta_0^{N_h^{k}}H + \sum_{i = 1}^{N_h^{k}}\eta_i^{N_h^{k}}\left(\hat{Q}_h^{k^i,d}(s_h^k, a_h^k) + \overline{V}_{h'}^{k^i}(s_{h'}^{k^i}) + b'_{i}\right)\bigg\},
\end{equation}
\begin{equation}
\label{problem2}
    \overline{Q}_h^k(s_h^k, a_h^k) \geq \max\bigg\{0,\eta_0^{N_h^{k}}H + \sum_{i = 1}^{N_h^{k}}\eta_i^{N_h^{k}}\left(\hat{Q}_h^{k^i,d}(s_h^k, a_h^k) + \underline{V}_{h'}^{k^i}(s_{h'}^{k^i}) - b'_{i}\right)\bigg\}.
\end{equation}
To ensure the optimism of the $Q$-estimates $\overline{Q}$ and the pessimism of $\underline{Q}$, \citet{xu2021fine} adopt the equality forms of \Cref{problem1} and \Cref{problem2} in their Equation (A.5). However, \textbf{these equalities do not hold under the actual update rules} in \Cref{opt} and \Cref{lower}, due to the presence of truncations at $H$ and $0$. In fact, only the inequalities in \Cref{problem1} and \Cref{problem2} can be rigorously derived from the updates. This creates a fundamental inconsistency: to establish optimism and pessimism of $Q$-estimates, we require an upper bound on $\overline{Q}$ and a lower bound on $\underline{Q}$, which are the reverse of the inequalities implied by the truncated updates. Therefore, the truncations at $H$ and $0$ in the update rules \Cref{opt} and \Cref{lower} in the AMB algorithm are theoretically improper and should be removed to ensure analytical correctness.

Moreover, the bonus term $b'_n$ is derived by bounding the deviation between $\overline{Q}_h^k(s, a)$ and $Q_h^{\star}(s, a)$. This analysis relies on applying the Azuma--Hoeffding inequality to two martingale difference terms:
\[
\sum_{i = 1}^{N_h^{k}}\eta_i^{N_h^{k}}\left(\hat{Q}_h^{k^i,d}(s_h^k, a_h^k) - Q_h^{k^i,d}(s_h^k, a_h^k)\right) \quad \text{and} \quad \sum_{i = 1}^{N_h^{k}}\eta_i^{N_h^{k}}\left(V_{h'}^{\star}(s_{h'}^{k^i}) - Q_h^{k^i,ud}(s_h^k, a_h^k)\right),
\]
based on the following assumed decomposition:
\begin{equation}
    \label{condition}
    Q_h^{k, d}(s_h^k, a_h^k) + Q_h^{k, ud}(s_h^k, a_h^k) = Q_h^{\star}(s_h^k, a_h^k).
\end{equation}
This decomposition implies that the sum of the estimators $\hat{Q}_h^{k,d}(s, a)$ and $V_{h'}^{\star}(s_{h'}^{k^i})$ in multi-step bootstrapping forms an unbiased estimate of $Q_h^{\star}(s, a)$.

However, \citet{xu2021fine} incorrectly apply the Azuma--Hoeffding inequality by centering the estimators $\hat{Q}_h^{k,d}(s, a)$ and $\overline{V}_{h'}^{k}(s_{h'}^{k^i})$ around their \textbf{expectations} (see their Equation (4.2) and Lemma 4.1), rather than around their corresponding \textbf{conditional expectations} $Q_h^{k, d}(s, a)$ and $Q_h^{k, ud}(s, a)$. Moreover, the unbiasedness of multi-step bootstrapping implied by \Cref{condition} requires formal justification. These issues compromise the claimed optimism and pessimism properties of the $Q$-estimators, thereby invalidating the corresponding fine-grained regret guarantees.

To address these issues, we introduce the following key modifications:

\textbf{(a) Revising update rules.} We move the truncations at $H$ and $0$ in \Cref{opt} and \Cref{lower} to the corresponding $V$-estimates (lines 15–16 in \Cref{amb}), retaining only the multi-step bootstrapping updates. This allows us to recover the equalities in \Cref{problem1} and \Cref{problem2}. 

\textbf{(b) Proving unbiasedness of multi-step bootstrapping.} We rigorously prove \Cref{condition}, showing that $\hat{Q}_h^{k,d}(s, a)$ and $\overline{V}_{h'}^{k}(s_{h'}^{k})$ form an unbiased estimate of the optimal value function $Q^{\star}$.

\textbf{(c) Ensuring Martingale Difference Condition.} We ensure the validity of Azuma–Hoeffding inequality by centering the two estimators $\hat{Q}_h^{k,d}(s, a)$ and $\overline{V}_{h'}^{k}(s_{h'}^{k})$ in multi-step bootstrapping around their conditional expectations, $Q_h^{k, d}(s, a)$ and $Q_h^{k, ud}(s, a)$.

\textbf{(d) Tightening confidence bounds.} By jointly analyzing the concentration of the estimators $\hat{Q}_h^{k,d}(s, a)$ and $\overline{V}_{h'}^{k}(s_{h'}^{k})$, we reduce the bonus $b'_n$ by half, leading to better empirical performance.

We detail our Refined AMB algorithm in the following subsection.

\subsection{Refined AMB algorithm}
\label{refineamb}
We present the Refined AMB algorithm in \Cref{amb} and \Cref{alg:update}, which preserves the overall structure of \cite{xu2021fine}.

To recover valid upper and lower confidence bounds for the $Q$-estimators, we slightly modify the update rules by shifting the truncation from the $Q$-estimates to the corresponding $V$-estimates:
\begin{align*}
    \overline{Q}_h^k(s, a) &= (1 - \eta_n) \overline{Q}_{h}^{k-1}(s, a) + \eta_n \left( \hat{Q}_h^{k,d}(s, a) + \overline{V}_{h'}^k(s_{h'}^k) + b_n \right), \\
    \underline{Q}_h^k(s, a) &= (1 - \eta_n) \underline{Q}_{h}^{k-1}(s, a) + \eta_n \left( \hat{Q}_h^{k,d}(s, a) + \underline{V}_{h'}^k(s_{h'}^k) - b_n \right),\\
\overline{V}_h^{k+1}(s) &= \min \left\{ H, \max_{a' \in A_h^k(s)} \overline{Q}_h^{k+1}(s, a')\right\},\\
\underline{V}_h^{k+1}(s) &= \max \left\{ 0, \max_{a' \in A_h^k(s)} \underline{Q}_h^{k+1}(s, a') \right\}.
\end{align*}
Here, the refined bonus is defined as $b_n = b'_n / 2$, exactly half of the bonus used in the original AMB algorithm. These modifications enable us to establish the following theorem:
\begin{theorem}[Formal statement of \Cref{ulqmain}.]
\label{ulqamb}
With high probability (under the event $\mathcal{H}$ in \Cref{eventamb}), the following conclusions hold simultaneously for all $(s, a, h, k) \in \mathcal{S} \times \mathcal{A} \times [H] \times [K]$:
\begin{equation}
\label{optpess}
    \overline{V}_h^k(s) \geq V_h^{\star}(s) \geq \underline{V}_h^k(s) \quad \text{and} \quad \overline{Q}_h^k(s, a) \geq Q_h^{\star}(s, a) \geq \underline{Q}_h^k(s, a).
\end{equation}
Moreover, the following decomposition holds:
\begin{equation}
\label{opt2}
    Q_h^{k, d}(s, a) + Q_h^{k, ud}(s, a) = Q_h^{\star}(s, a).
\end{equation}
\end{theorem}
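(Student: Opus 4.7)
The plan is to establish \Cref{ulqamb} by joint strong induction on the episode index $k$, simultaneously carrying three invariants at every pair $(h,k)$: the optimism--pessimism property (\ref{optpess}), the decomposition identity (\ref{opt2}), and the auxiliary fact that $\pi_h^{\star}(s)\in A_h^k(s)$ for every $(s,h)$. The base case $k=1$ is handled directly by the initialisation $\overline{Q}_h^1=H$, $\underline{Q}_h^1=0$, $A_h^1(s)=\mathcal{A}$ and $G_h^1=\emptyset$, which force each $h'(1,h)=h+1$ and collapse (\ref{opt2}) to a single Bellman step. Before entering the induction I would fix the good event $\mathcal{H}$ on which, for every triple $(s,a,h)$ and every visit count $n\in[K]$,
\[
\left|\,\sum_{i=1}^{n}\eta_i^{n}\bigl(\hat{Q}_h^{k^i,d}(s,a)+V_{h'(k^i,h)}^{\star}(s_{h'(k^i,h)}^{k^i})-Q_h^{\star}(s,a)\bigr)\,\right|\;\leq\;\sum_{i=1}^{n}\eta_i^{n}b_i,
\]
where $k^i$ indexes the episode of the $i$-th visit to $(s,a)$ at step $h$; the refinement shifting truncation from $Q$ to $V$ lets us unroll the $Q$-update into an exact equality, and combining the two bootstrap estimators $\hat{Q}_h^{k^i,d}$ and $\overline{V}_{h'}^{k^i}$ into a \emph{single} centred increment means only one Azuma--Hoeffding application is needed, which is why the bonus constant in $b_n$ is halved relative to AMB, and a union bound over $(s,a,h,n)$ then yields $\mathbb{P}(\mathcal{H})\geq 1-p$.

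In the inductive step, assume all three invariants hold at every episode $\leq k$. By the refined (non-truncated) update for $\overline{Q}$, one obtains the exact identity
\[
\overline{Q}_h^{k+1}(s,a)=\eta_0^{n}H+\sum_{i=1}^{n}\eta_i^{n}\bigl(\hat{Q}_h^{k^i,d}(s,a)+\overline{V}_{h'}^{k^i}(s_{h'}^{k^i})+b_i\bigr),
\]
with $n=N_h^{k+1}(s,a)$. Adding and subtracting $V_{h'}^{\star}(s_{h'}^{k^i})$ inside the sum, invoking the inductive (\ref{opt2}) at each $k^i\leq k$ together with $\sum_{i=1}^{n}\eta_i^{n}=1-\eta_0^{n}$, and using the concentration on $\mathcal{H}$, the expression reduces to $Q_h^{\star}(s,a)$ plus a non-negative remainder $\sum_i\eta_i^{n}\bigl(\overline{V}_{h'}^{k^i}(s_{h'}^{k^i})-V_{h'}^{\star}(s_{h'}^{k^i})\bigr)$ (non-negative by the inductive optimism on $\overline{V}$) plus a leftover bonus that dominates the residual concentration error; this yields $\overline{Q}_h^{k+1}\geq Q_h^{\star}$. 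The auxiliary invariant $\pi_h^{\star}(s)\in A_h^k(s)$ then gives $\overline{V}_h^{k+1}(s)\geq\overline{Q}_h^{k+1}(s,\pi_h^{\star}(s))\geq V_h^{\star}(s)$, and the clip at $H$ is harmless since $V_h^{\star}\leq H$. The symmetric argument with negated bonus delivers the lower bounds.

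Next, the elimination rule $A_h^{k+1}(s)=\{a\in A_h^k(s):\overline{Q}_h^{k+1}(s,a)\geq\underline{V}_h^{k+1}(s)\}$ discards $a$ only when the just-established optimism/pessimism force $Q_h^{\star}(s,a)\leq\overline{Q}_h^{k+1}(s,a)<\underline{V}_h^{k+1}(s)\leq V_h^{\star}(s)$, i.e.\ $a$ is strictly suboptimal; consequently $\pi_h^{\star}(s)\in A_h^{k+1}(s)$ (restoring the auxiliary invariant at $k+1$) and every state in $G_h^{k+1}$ retains a unique optimal action, so $\pi_h^{k+1}(s)=\pi_h^{\star}(s)$ on $G_h^{k+1}$. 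Unrolling Bellman optimality across the decided stretch $h,\ldots,h'(k+1,h)-1$ then produces
\[
Q_h^{\star}(s,a)=\mathbb{E}\Bigl[\sum_{l=h}^{h'-1}r_l(s_l,\pi_l^{\star}(s_l))+V_{h'}^{\star}(s_{h'})\,\Big|\,\mathcal{F}_{h,k+1},(s_h,a_h)=(s,a)\Bigr]=Q_h^{k+1,d}(s,a)+Q_h^{k+1,ud}(s,a),
\]
which is (\ref{opt2}) at $k+1$, closing the induction.

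The step I expect to be the main obstacle is rigorously justifying the martingale-difference structure underlying $\mathcal{H}$. The stopping index $h'(k^i,h)$ is itself a random time driven by the intra-episode trajectory of episode $k^i$, so the centring must be performed with respect to $\mathcal{F}_{h,k^i}$ (which fixes $\pi^{k^i}$ and $G^{k^i}$) rather than any unconditional expectation; the tower property then reduces $\mathbb{E}\bigl[\hat{Q}_h^{k^i,d}(s,a)+V_{h'}^{\star}(s_{h'}^{k^i})\,\big|\,\mathcal{G}_{i-1}\bigr]$ to $Q_h^{k^i,d}(s,a)+Q_h^{k^i,ud}(s,a)$, which equals $Q_h^{\star}(s,a)$ \emph{only} through the inductive (\ref{opt2}) at $k^i$. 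Threading this conditional centring through the joint induction while keeping track of the random $h'$ is precisely where the original AMB analysis broke down, and is the technical heart of our refined argument.
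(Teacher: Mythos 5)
Your proposal follows essentially the same route as the paper's proof: a joint induction on $k$ interleaving the optimism/pessimism bounds with the decomposition identity, an exact unrolling of the non-truncated $Q$-update, a single Azuma--Hoeffding application to the combined bootstrap increment centred at its conditional expectation, and the decided/undecided split to propagate \Cref{opt2}; your explicit invariant $\pi_h^{\star}(s)\in A_h^k(s)$ is used implicitly in the paper when it writes $\max_{a\in A_h^k(s)}Q_h^{\star}(s,a)=V_h^{\star}(s)$. The only caveat is that your displayed definition of $\mathcal{H}$ centres the sum at $Q_h^{\star}(s,a)$, which presupposes \Cref{opt2}; as you yourself note in the final paragraph, the event must be stated with the conditional centres $Q_h^{k^i,d}+Q_h^{k^i,ud}$ (as the paper does in \Cref{eventamb}) and $Q_h^{\star}$ recovered only inside the induction.
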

The proof is provided in \Cref{31proof}, where the optimism and pessimism properties of the $Q$-estimators are formally established. By adapting the remaining arguments from \citet{xu2021fine} along with the simplifications in \Cref{equal}, we show that the Refined AMB algorithm achieves the following fine-grained gap-dependent expected regret upper bound:
\begin{equation*}
    O\left( \sum_{h = 1}^H\sum_{\Delta_{h}(s,a)>0}\frac{H^5\log(SAT)}{\Delta_{h}(s,a)}+ \frac{H^5|Z_{\textnormal{mul}}|\log(SAT)}{\Delta_{\textnormal{min}}}\right).
\end{equation*}
Here, for any $h \in [H]$, we have 
$|Z_{{\textnormal{opt},h}}(s)| = \left\{a \in \mathcal{A}|\Delta_h(s,a) = 0\right\}$
and $$|Z_{{\textnormal{mul}}}| = \left\{(s,a,h) \in \mathcal{S} \times \mathcal{A} \times [H]|\Delta_h(s,a) = 0, |Z_{{\textnormal{opt},h}}(s)| > 1\right\}.$$
\begin{algorithm}[ht]
\caption{Refined Adaptive Multi-step Bootstrap (Refined AMB)}
\label{amb}
\begin{algorithmic}[1]
\State \textbf{Input:} $p \in (0,1)$ (failure probability), $H, A, S, K \geq 1$
\State \textbf{Initialization:} For any $\forall (s, a, h) \in \mathcal{S} \times \mathcal{A} \times[H]$, initialize $\overline{Q}_h^1(s,a) \leftarrow H$, $\underline{Q}_h^1(s,a) \leftarrow 0$, $G_h^1 = \emptyset$, $A_h^1(s) \leftarrow \mathcal{A}$ and $\overline{V}_h^1(s) = \underline{V}_h^1(s) = 0$.
\For{$k = 1, 2, \ldots, K$}
    \State \textbf{Step 1: Collect data:}
    \State Rollout from a random initial state $s_1^k \sim \mu$ using policy $\pi_k = \{\pi_h^k\}_{h=1}^H$, defined as:
    \[
    \pi_h^k(s) \triangleq 
    \begin{cases}
        \arg\max_{a \in A_h^k(s)} \overline{Q}_{h}^{k}(s,a) - \underline{Q}_{h}^{k}(s,a), & \text{if } |A_h^k(s)| > 1 \\
        \text{the element in } A_h^k(s), & \text{if } |A_h^k(s)| = 1
    \end{cases}
    \]
    \State and obtain an episode $\left\{(s_h^k, a_h^k,r_h^k = r_h(s_h^k,a_h^k))\right\}_{h=1}^H$..
    
    \State \textbf{Step 2: Update Q-function:}
    \For{$h = H, H-1, \ldots, 1$}
        \If{$s_h^k \notin G_h^k$}
            \State \textsc{Update}$(s_h^k,a_h^k,k,h)$.
        \EndIf
    \EndFor

    \For{$(s, a, h) \in \mathcal{S} \times \mathcal{A} \times [H] \setminus \{(s_h^k, a_h^k)|1\leq h \leq H, s_h^k \notin G_h^k\}_{h=1}^H$}
        \State $\overline{Q}_h^{k+1}(s,a) = \overline{Q}_{h}^{k}(s,a)$, $\underline{Q}_h^{k+1}(s,a) = \underline{Q}_{h}^{k}(s,a)$, 
        \State $\overline{V}_h^{k+1}(s) = \overline{V}_{h}^{k}(s)$, $\underline{V}_h^{k+1}(s) = \underline{V}_{h}^{k}(s)$.
    \EndFor

    \State \textbf{Step 3: Eliminate the sub-optimal actions:}
    \State $\forall s \in \mathcal{S}, h \in [H],$ set $A_h^{k+1}(s) = \left\{ a \in A_h^k(s) : \overline{Q}_h^k(s,a) \geq \underline{V}_h^k(s) \right\}$
    \State $\forall s \in \mathcal{S}$, set $G_h^{k+1} = \left\{ s \in \mathcal{S} : |A_h^{k+1}(s)| = 1 \right\}.$
\EndFor
\end{algorithmic}
\end{algorithm}
\begin{algorithm}[ht]
\caption{\textsc{Update}$(s, a, k, h)$}
\label{alg:update}
\begin{algorithmic}[1]
\State Set $\overline{V}_{H+1}^{k} = \uv_{H+1}^{k}(s) = 0$.
\State $\forall n$, set $\eta_n = \frac{H+1}{H+n}$.
\State Let $n = N_h^{k+1}(s, a)$ be the number of visits to $(s, a)$ at step $h$ in the first $k$ episodes.
\State Let $h' = h'(h,k)$ be the first index after step $h$ in episode $k$ such that $s_{h'}^k \notin G_{h'}^k$. (If such a state does not exist, set $h' = H + 1$.)
\State Compute bonus: $b_n = 2\sqrt{H^3 \log(2SAT/ p)/n}$.
\State Compute partial return: $\hat{Q}_h^{k,d}(s, a) = \sum_{h \leq i < h'} r_i^k$.
\State $\overline{Q}_h^{k+1}(s, a) = (1 - \eta_n) \overline{Q}_{h}^{k}(s, a) + \eta_n \left( \hat{Q}_h^{k,d}(s, a) + \overline{V}_{h'}^k(s_{h'}^k) + b_n \right)$.
\State $\underline{Q}_h^{k+1}(s, a) = (1 - \eta_n) \underline{Q}_{h}^{k}(s, a) + \eta_n \left( \hat{Q}_h^{k,d}(s, a) + \underline{V}_{h'}^k(s_{h'}^k) - b_n \right)$.
\State $\overline{V}_h^{k+1}(s) = \min \left\{ H, \max_{a' \in A_h^k(s)} \overline{Q}_h^{k+1}(s, a')\right\}$.
\State $\underline{V}_h^{k+1}(s) = \max \left\{ 0, \max_{a' \in A_h^k(s)} \underline{Q}_h^{k+1}(s, a') \right\}$.
\end{algorithmic}
\end{algorithm}

\subsection{\texorpdfstring{Proof of \Cref{ulqamb}}{Proof of Theorem 4.1 (Theorem E.1)}}
\label{31proof}
We first prove some probability events to facilitate our proof.
\begin{lemma}
\label{eventamb}
 Let $\iota = \log(2SAT/p)$ for any failure probability $p \in (0,1)$. Then with probability at least $1-p$, the following event $\mathcal{H}$ holds:
    \begin{align*}
             \left|
\sum_{i = 1}^{N_h^{k}}\eta_i^{N_h^{k}}\left(\left(\hat{Q}_h^{k^i,d}-Q_h^{k^i,d}\right)(s, a)+ V_{h'}^{\star}(s_{h'}^{k^i}) -  Q_h^{k^i,ud}(s, a)\right)
\right|\leq 2\sqrt{\frac{H^3\iota}{N_h^k(s,a)}},\ \forall (s,a,h,k).
    \end{align*}
\end{lemma}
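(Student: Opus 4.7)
The plan is to apply the Azuma--Hoeffding inequality (\Cref{Hoeffding}) to a weighted martingale sum indexed by the visits to each state-action-step triple $(s,a,h)$, and then take a union bound over all tuples $(s,a,h,k)$.

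First I would fix $(s,a,h)$ and an integer $N \in [K]$, let $k^i$ denote the (random) episode at which $(s,a)$ is visited at step $h$ for the $i$-th time, and work with the filtration $\mathcal{G}_i := \mathcal{F}_{h, k^i}$, which contains the policy $\pi^{k^i}$ and the event $(s_h^{k^i}, a_h^{k^i}) = (s,a)$. Define the increment
\begin{equation*}
    Z_i \;:=\; \bigl(\hat{Q}_h^{k^i,d} - Q_h^{k^i,d}\bigr)(s,a) \;+\; \bigl(V_{h'}^{*}(s_{h'}^{k^i}) - Q_h^{k^i,ud}(s,a)\bigr).
\end{equation*}
Then I would verify the two prerequisites of \Cref{Hoeffding}. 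For the martingale difference property, the key observation is that by the very definitions of $Q_h^{k,d}(s,a)$ and $Q_h^{k,ud}(s,a)$ as \emph{conditional} expectations given $\mathcal{F}_{h,k}$ and $(s_h^k,a_h^k)=(s,a)$, one has $\mathbb{E}[\hat{Q}_h^{k^i,d}(s,a) \mid \mathcal{G}_i] = Q_h^{k^i,d}(s,a)$ and $\mathbb{E}[V_{h'}^{*}(s_{h'}^{k^i}) \mid \mathcal{G}_i] = Q_h^{k^i,ud}(s,a)$, so $\mathbb{E}[Z_i \mid \mathcal{G}_i] = 0$ and $\{\eta_i^N Z_i\}$ is a martingale difference sequence with respect to $\{\mathcal{G}_i\}$. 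For boundedness, I would argue that $\hat{Q}_h^{k^i,d}(s,a) + V_{h'}^{*}(s_{h'}^{k^i})$ is a piece of the cumulative return from step $h$ through step $H$ and hence lies in $[0,H]$; its conditional expectation $Q_h^{k^i,d}(s,a) + Q_h^{k^i,ud}(s,a)$ (equal to $Q_h^{*}(s,a)\leq H$ by \Cref{opt2}) also lies in $[0,H]$, giving $|Z_i| \leq H$.

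With these ingredients, combining \Cref{Hoeffding} with the bound $\sum_{i=1}^N (\eta_i^N)^2 \leq 2H/N$ from \Cref{etasum}(b) yields
\begin{equation*}
    \mathbb{P}\!\left(\left|\sum_{i=1}^N \eta_i^N Z_i\right| \geq 2\sqrt{\frac{H^3 \iota}{N}}\right) \;\leq\; 2\exp\!\left(-\frac{4 H^3 \iota/N}{2 H^2 \cdot 2H/N}\right) \;=\; 2 e^{-\iota} \;=\; \frac{p}{SAT}.
\end{equation*}
A union bound over $(s,a,h) \in \mathcal{S} \times \mathcal{A} \times [H]$ and $N \in [K]$ (substituting $N = N_h^k(s,a)$ for each episode $k$, so that the collection is indexed by $SAHK = SAT$ events) then delivers the event $\mathcal{H}$ with probability at least $1-p$.

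The main obstacle is justifying the two conditional-expectation identities $\mathbb{E}[\hat{Q}_h^{k^i,d}(s,a) \mid \mathcal{G}_i] = Q_h^{k^i,d}(s,a)$ and $\mathbb{E}[V_{h'}^{*}(s_{h'}^{k^i}) \mid \mathcal{G}_i] = Q_h^{k^i,ud}(s,a)$, since the empirical return $\hat{Q}_h^{k^i,d}$ uses the actions $a_l^{k^i}$ actually played by the algorithm whereas $Q_h^{k^i,d}$ is defined via $\pi_l^*$, so the two expressions match only when, at every intermediate decided state $s_l^{k^i} \in G_l^{k^i}$ with $h < l < h'$, the unique remaining action in $A_l^{k^i}(s_l^{k^i})$ coincides with $\pi_l^*(s_l^{k^i})$. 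This latter property follows from action elimination only under the optimism/pessimism event \Cref{optpess}, which is itself what \Cref{ulqamb} aims to establish. I would therefore carry out a joint induction on the episode index $k$: the inductive hypothesis guarantees \Cref{opt2} and the martingale structure through episode $k-1$, after which the concentration step at episode $k$ feeds the update-rule analysis used to prove \Cref{optpess} at episode $k$, closing the induction and simultaneously validating the martingale structure needed for the next episode.
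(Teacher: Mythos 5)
Your core concentration argument is exactly the paper's: treat $\sum_{i=1}^N \eta_i^N Z_i$ as a martingale with increments bounded by $\eta_i^N H$, apply the Azuma--Hoeffding inequality of \Cref{Hoeffding} together with $\sum_{i=1}^N(\eta_i^N)^2\le 2H/N$ from \Cref{etasum}(b) to get failure probability $2e^{-\iota}=p/(SAT)$ for each fixed $(s,a,h,N)$, and union bound over the $SAT$ tuples. The arithmetic and the boundedness argument ($\hat{Q}_h^{k^i,d}+V_{h'}^*(s_{h'}^{k^i})\in[0,H]$ as a piece of the return, and likewise for its conditional expectation) all check out.

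Where you go beyond the paper is in the last paragraph. The paper's proof of this lemma simply asserts the martingale property and stops; it does not confront the fact that $\mathbb{E}[\hat{Q}_h^{k^i,d}(s,a)\mid\mathcal{G}_i]$ equals $Q_h^{k^i,d}(s,a)$ only when the unique surviving action at every intermediate decided state $s_l^{k^i}\in G_l^{k^i}$ coincides with $\pi_l^*(s_l^{k^i})$ --- a fact that is guaranteed only under the optimism/pessimism event \Cref{optpess}, which \Cref{ulqamb} is in the business of proving. (The term $V_{h'}^*(s_{h'}^{k^i})-Q_h^{k^i,ud}(s,a)$ is unconditionally a martingale difference by definition of $Q_h^{k,ud}$; the issue is only with the decided-part term.) The paper defers this identification to the induction on $k$ and $h$ inside the proof of \Cref{ulqamb} (Parts 1 and 2.2, where \Cref{opt2} is established), effectively using the lemma and the induction in the interleaved way you describe, but without making the dependence explicit in the lemma's own proof. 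Your proposal to run a joint induction --- concentration at episode $k$ feeding optimism at episode $k$, which in turn validates the martingale structure for episode $k+1$ --- is the logically clean way to organize this, and is arguably more rigorous than the paper's presentation, which proves the lemma as an unconditional probability statement. So: same concentration machinery, but you have correctly flagged and repaired a circularity that the paper's own write-up of this lemma leaves implicit.
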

\begin{proof}
The sequence 
        $$\left\{\sum_{i=1}^N \eta_i^N\left(\left(\hat{Q}_h^{k^i,d}-Q_h^{k^i,d}\right)(s, a)+ V_{h'}^{\star}(s_{h'}^{k^i}) -  Q_h^{k^i,ud}(s, a)\right)\right\}_{N\in \mathbb{N}^+}$$
        is a martingale sequence with 
        $$\left|\eta_i^N\left(\left(\hat{Q}_h^{k^i,d}-Q_h^{k^i,d}\right)(s, a)+ V_{h'}^{\star}(s_{h'}^{k^i}) -  Q_h^{k^i,ud}(s, a)\right)\right| \leq \eta_i^N H.$$
        Then according to Azuma-Hoeffding inequality and (b) of \Cref{etasum}, for any $p\in(0,1)$, with probability at least $1-\frac{p}{SAT}$, it holds for given $N_h^k(s,a) = N \in \mathbb{N}_{+}$ that:
        $$\left|\sum_{i=1}^N \eta_i^N\left(\left(\hat{Q}_h^{k^i,d}-Q_h^{k^i,d}\right)(s, a)+ V_{h'}^{\star}(s_{h'}^{k^i}) -  Q_h^{k^i,ud}(s, a)\right)\right|\leq 2\sqrt{\frac{H^3\iota}{N}}.$$
        For any $(s,a,h,k) \in \mathcal{S} \times \mathcal{A} \times [H] \times [K]$, we have $N_h^k(s,a) \in [\frac{T}{H}]$. Considering all the possible combinations $(s,a,h,N) \in \mathcal{S} \times \mathcal{A} \times [H] \times [\frac{T}{H}]$, with probability at least $1-p$, it holds simultaneously for all $(s,a,h,k) \in \mathcal{S} \times \mathcal{A} \times [H] \times [K]$ that:
        \begin{equation*}
           \left|\sum_{i=1}^{N_h^k}\eta_i^{N_h^k}\left(\left(\hat{Q}_h^{k^i,d}-Q_h^{k^i,d}\right)(s, a)+ V_{h'}^{\star}(s_{h'}^{k^i}) -  Q_h^{k^i,ud}(s, a)\right)\right| \leq 2\sqrt{\frac{H^3\iota}{N_h^k(s,a)}}.
        \end{equation*} 
\end{proof}
Now we use mathematical induction on $k$ to prove \Cref{ulqamb} under the event $\mathcal{H}$.
\begin{proof}
\textbf{Part 1: Proof for $k = 1$.}

For $k = 1$, the \Cref{optpess} holds based on the initialization in line 2 of \Cref{amb}. 

Now we prove \Cref{opt2} for $k = 1$ by induction on $h = H, ..., 1$. 

For $h = H$, we have $h'(1,H) = H+1$. \Cref{opt2} holds in this case since $Q_H^{\star}(s,a) = r_H(s,a) = Q_H^{1,d}(s,a)$ and $Q_H^{1,ud}(s,a) = 0$. Now assume that \Cref{opt2} holds for $H,...,h+1$. We will also show it holds for step $h$. 

First, we expand \( Q_h^{1,d}(s,a) \) as follows:
\begin{align}
    &Q_h^{1,d}(s,a) = \mathbb{E} \left[ \sum_{l=h}^{h'-1} r_l(s_l^1, \pi_l^{\star}(s_l^1)) \mid \mathcal{F}_{h,1}, (s_h^1,a_h^1) = (s,a) \right] \nonumber\\
    &= \left(\sum_{s' \notin G_{h+1}^1}+\sum_{s' \in G_{h+1}^1}\right) \mathbb{E}\left[ \sum_{l=h}^{h'-1} r_l(s_l^1, \pi_l^{\star}(s_l^1)) \mid \mathcal{F}_{h,1}, (s_h^1,a_h^1) = (s,a), s_{h+1}^1 = s' \right]  \nonumber\\
    & \quad \times\mathbb{P}\left(s_{h+1}^1 = s'|(s_h^1,a_h^1) = (s,a)\right) \label{discussd1}\\
    & = \sum_{s' \notin G_{h+1}^1}r_h(s,a)\mathbb{P}\left(s_{h+1}^1 = s'|(s_h^1,a_h^1) = (s,a)\right) \nonumber\\
    &\quad + \sum_{s' \in G_{h+1}^1}\left(r_h(s,a)+Q_{h+1}^{1,d}(s',\pi_{h+1}^{\star}(s'))\right)\mathbb{P}\left(s_{h+1}^1 = s'|(s_h^1,a_h^1) = (s,a)\right) \label{discussd} \\
    & = r_h(s,a) + \sum_{s' \in G_{h+1}^1}Q_{h+1}^{1,d}(s',\pi_{h+1}^{\star}(s'))\mathbb{P}\left(s_{h+1}^1 = s'|(s_h^1,a_h^1) = (s,a)\right). \label{d1}
\end{align}
The \Cref{discussd1} is obtained by applying the law of total expectation with respect to $s_{h+1}^1$, and leveraging the Markov property of the process. \Cref{discussd} is because:

If $s_{h+1}^1 \notin G_{h+1}^1$, then $h' = h'(k,h) = h+1$ and  
$$\mathbb{E} \left[ \sum_{l=h}^{h'-1} r(s_l^1, \pi_l^{\star}(s_l^1)) \mid \mathcal{F}_{h,1},(s_h^1,a_h^1) = (s,a), s_{h+1}^1 = s' \right] = r_h(s,a);$$
If $s_{h+1}^1 \in G_{h+1}^1$, then $h' = h'(k,h) = h'(k,h+1)$. In this case, since $\oq_{h+1}^1 \geq Q_{h+1}^{\star}\geq \oq_{h+1}^1$, $a_{h+1}^1 = \pi_h^1(s_{h+1}^1)$ is the unique optimal action $\pi_{h+1}^{\star}(s_{h+1}^1)$. Therefore we have
\begin{align*}
    &\mathbb{E} \left[ \sum_{l=h}^{h'-1} r(s_l^1, \pi_l^{\star}(s_l^1)) \mid \mathcal{F}_{h,1},(s_h^1,a_h^1) = (s,a), s_{h+1}^1 =s' \right] \\
    &= r_h(s,a) + \mathbb{E} \left[ \sum_{l=h+1}^{h'-1} r(s_l^1, \pi_l^{\star}(s_l^1)) \mid \mathcal{F}_{h+1,1}, (s_{h+1}^1,a_{h+1}^1) =(s',\pi_{h+1}^{\star}(s')) \right] \\
    &=r_h(s,a) + Q_{h+1}^{1,d}(s',\pi_{h+1}^{\star}(s')).
\end{align*}
Similarly, we also have
\begin{align}
    &Q_h^{1,ud}(s,a) 
    = \mathbb{E} \left[ V_{h'}^{\star}(s_{h'}^1) \mid \mathcal{F}_{h,1}, (s_h^1,a_h^1) = (s,a) \right] \nonumber\\
    &= \sum_{s' \notin G_{h+1}^1}\mathbb{E} \left[ V_{h'}^{\star}(s_{h'}^1) \mid \mathcal{F}_{h,1},(s_h^1,a_h^1) = (s,a), s_{h+1}^1 = s' \right]\mathbb{P}\left(s_{h+1}^1 = s'|(s_h^1,a_h^1) = (s,a)\right) \nonumber\\
    & \quad + \sum_{s' \in G_{h+1}^1}\mathbb{E} \left[ V_{h'}^{\star}(s_{h'}^1) \mid \mathcal{F}_{h,1},(s_h^1,a_h^1) = (s,a), s_{h+1}^1 =s' \right]\mathbb{P}\left(s_{h+1}^1 = s'|(s_h^1,a_h^1) = (s,a)\right) \nonumber\\
    & = \sum_{s' \notin G_{h+1}^1}V_{h+1}^{\star}(s')\mathbb{P}\left(s_{h+1}^1 = s'|(s_h^1,a_h^1) = (s,a)\right)  \nonumber\\
    &\quad + \sum_{s' \in G_{h+1}^1}Q_{h+1}^{1,ud}(s',\pi_{h+1}^{\star}(s'))\mathbb{P}\left(s_{h+1}^1 = s'|(s_h^1,a_h^1) = (s,a)\right). \label{discussud} 
\end{align}
Here \Cref{discussud} is because if $s_{h+1}^1 \notin G_{h+1}^1$, then $h' = h'(k,h) = h+1$ and  
$$\mathbb{E} \left[ V_{h'}^{\star}(s_{h'}^1) \mid \mathcal{F}_{h,1},(s_h^1,a_h^1) = (s,a), s_{h+1}^1 = s' \right] = V_{h+1}^{\star}(s');$$
If $s_{h+1}^1 \in G_{h+1}^1$, then $h' = h'(k,h) = h'(k,h+1)$ and 
$$\mathbb{E} \left[ V_{h'}^{\star}(s_{h'}^1) \mid \mathcal{F}_{h,1},(s_h^1,a_h^1) = (s,a), s_{h+1}^1 = s' \right] = Q_{h+1}^{1,ud}(s',\pi_{h+1}^{\star}(s')).$$
Combining the results of \Cref{d1} and \Cref{discussud}, we reach:
\begin{align}
    &Q_h^{1,d}(s,a) + Q_h^{1,ud}(s,a) \nonumber\\
    & = r_h(s,a) + \sum_{s' \notin G_{h+1}^1}V_{h+1}^{\star}(s')\mathbb{P}\left(s_{h+1}^1 = s'|(s_h^1,a_h^1) = (s,a)\right) \nonumber\\
    &\quad+ \sum_{s' \in G_{h+1}^1}\left(Q_{h+1}^{1,d}(s',\pi_{h+1}^{\star}(s'))+Q_{h+1}^{1,ud}(s',\pi_{h+1}^{\star}(s'))\right) \mathbb{P}\left(s_{h+1}^1 = s'|(s_h^1,a_h^1) = (s,a)\right) \nonumber \\
    & = r_h(s,a) + \sum_{s' \notin G_{h+1}^1}V_{h+1}^{\star}(s')\mathbb{P}\left(s_{h+1}^1 = s'|(s_h^1,a_h^1) = (s,a)\right) \nonumber\\
    &\quad + \sum_{s' \in G_{h+1}^1}V_{h+1}^{\star}(s')\mathbb{P}\left(s_{h+1}^1 = s'|(s_h^1,a_h^1) = (s,a)\right)\label{final1}\\
    & = r_h(s,a) + \sum_{s'}V_{h+1}^{\star}(s')\mathbb{P}\left(s_{h+1}^1 = s'|(s_h^1,a_h^1) = (s,a)\right) \nonumber\\
    & = Q_h^{\star}(s,a) \label{final12}
\end{align}
\Cref{final1} is because by induction, we have
$$Q_{h+1}^{1,d}(s',\pi_{h+1}^{\star}(s'))+Q_{h+1}^{1,ud}(s',\pi_{h+1}^{\star}(s') = Q_{h+1}^{\star}(s',\pi_{h+1}^{\star}(s')) = V_{h+1}^{\star}(s').$$ 
\Cref{final12} uses Bellman Optimality Equation in \Cref{eq_Bellman}.

\textbf{Part 2.1: Proof of \Cref{optpess} for $k+1$.}

Assuming that the conclusions \Cref{optpess} and \Cref{opt2} hold for all $1,2,...,k$, we will prove the conclusions for $k+1$.

If $(s, a, h) \in \mathcal{S} \times \mathcal{A} \times [H] \setminus \{(s_h^k, a_h^k)|1\leq h \leq H, s_h^k \notin G_h^k\}_{h=1}^H$, then we have 
\[
\overline{V}_h^{k+1}(s) =\overline{V}_h^k(s) \geq V_h^{\star}(s) \geq \underline{V}_h^k(s) = \underline{V}_h^{k+1}(s).
\]
and
$$\overline{Q}_h^{k+1}(s,a) = \overline{Q}_h^k(s,a) \geq Q_h^{\star}(s,a) \geq \underline{Q}_h^k(s,a) = \underline{Q}_h^{k+1}(s,a).$$
For $(s_h^k,a_h^k,h)$ with $s_h^k \notin G_h^k$, based on the update rule in line 6 and line 7 in \Cref{alg:update}, we have
\begin{align}
\label{upper1amb}
    \oq_h^{k+1}(s_h^k,a_h&^k)= \eta_0^{N_h^{k+1}}H + \sum_{i = 1}^{N_h^{k+1}}\eta_i^{N_h^{k+1}}\left(\hat{Q}_h^{k^i,d}(s, a) + \ov_{h'(k^i,h)}^{k^i}(s_{h'(k^i,h)}^{k^i}) + b_{i}\right) \nonumber\\
    &\geq  \eta_0^{N_h^{k+1}}H + \sum_{i = 1}^{N_h^{k+1}}\eta_i^{N_h^{k+1}}\left(\hat{Q}_h^{k^i,d}(s, a) + \ov_{h'(k^i,h)}^{k^i}(s_{h'(k^i,h)}^{k^i})\right) +2\sqrt{\frac{H^3\iota}{N_h^{k+1}}},
\end{align}
and
\begin{align}
\label{lower1amb}
    \uq_h^{k+1}(s_h^k,a_h^k) &= \sum_{i = 1}^{N_h^{k+1}}\eta_i^{N_h^{k+1}}\left(\hat{Q}_h^{k^i,d}(s, a) + \uv_{h'(k^i,h)}^{k^i}(s_{h'(k^i,h)}^{k^i}) - b_{i}\right). \nonumber\\
    &\leq  \sum_{i = 1}^{N_h^{k+1}}\eta_i^{N_h^{k+1}}\left(\hat{Q}_h^{k^i,d}(s, a) + \uv_{h'(k^i,h)}^{k^i}(s_{h'(k^i,h)}^{k^i})\right) -2\sqrt{\frac{H^3\iota}{N_h^{k+1}}}.
\end{align}
These two inequalities are because
$$\sum_{i = 1}^{N_h^{k+1}}\eta_i^{N_h^{k+1}} b_{i} = 2\sum_{i = 1}^{N_h^{k+1}}\eta_i^{N_h^{k+1}} \sqrt{\frac{H^3\iota}{i}} \geq 2\sqrt{\frac{H^3\iota}{N_h^{k+1}}}$$
by (c) of \Cref{etasum}. Furthermore, by \Cref{opt2} for $k^i \leq k$, it holds that:
$$Q_h^{\star}(s_h^k,a_h^k) =Q_h^{k^i,d}(s, a) + Q_h^{k^i,ud}(s, a).$$
Combining with \Cref{upper1amb} and \Cref{lower1amb}, we can derive the following conclusion:
\begin{align*}
    &\left(\oq_h^{k+1}-Q_h^{\star}\right)(s_h^k,a_h^k) \nonumber\\
    &\geq \sum_{i = 1}^{N_h^{k+1}}\eta_i^{N_h^{k+1}}\left(\hat{Q}_h^{k^i,d}(s, a) + \ov_{h'(k^i,h)}^{k^i}(s_{h'(k^i,h)}^{k^i}) - Q_h^{\star}(s_h^k,a_h^k)\right) +2\sqrt{\frac{H^3\iota}{N_h^{k+1}}} \nonumber\\
    &= \sum_{i = 1}^{N_h^{k+1}}\eta_i^{N_h^{k+1}}\left(\ov_{h'}^{k^i} - V_{h'}^{\star}\right)(s_{h'}^{k^i})\nonumber\\
    &\quad + \sum_{i = 1}^{N_h^{k}}\eta_i^{N_h^{k}}\left(\hat{Q}_h^{k^i,d}(s, a) - Q_h^{k^i,d}(s, a)+V_{h'}^{\star}(s_{h'}^{k^i}) - Q_h^{k^i,ud}(s, a)\right) + 2\sqrt{\frac{H^3\iota}{N_h^{k+1}}}\geq 0.
\end{align*}
The last inequality holds because $\ov_{h+1}^{k^i}(s_{h+1}^{k^i}) \geq V_{h+1}^{\star}(s_{h+1}^{k^i})$ for all $k^i \leq k$ and the event $\mathcal{H}$ in \Cref{eventamb}. Similarly, we can prove the pessimism of $\uq_h^{k+1}$:
\begin{align*}
    &\left(\uq_h^{k+1}-Q_h^{\star}\right)(s_h^k,a_h^k) \nonumber\\
    &\leq  \sum_{i = 1}^{N_h^{k+1}}\eta_i^{N_h^{k+1}}\left(\hat{Q}_h^{k^i,d}(s, a) + \uv_{h'(k^i,h)}^{k^i}(s_{h'(k^i,h)}^{k^i})-Q_h^{\star}(s_h^k,a_h^k)\right) +2\sqrt{\frac{H^3\iota}{N_h^{k+1}}} \nonumber\\
    &= \sum_{i = 1}^{N_h^{k+1}}\eta_i^{N_h^{k+1}}\left(\uv_{h'}^{k^i} - V_{h'}^{\star}\right)(s_{h'}^{k^i})\nonumber\\
    &\quad + \sum_{i = 1}^{N_h^{k}}\eta_i^{N_h^{k}}\left(\hat{Q}_h^{k^i,d}(s, a) - Q_h^{k^i,d}(s, a)+V_{h'}^{\star}(s_{h'}^{k^i}) - Q_h^{k^i,ud}(s, a)\right) - 2\sqrt{\frac{H^3\iota}{N_h^{k+1}}}\leq 0.
\end{align*}
The last inequality holds because $\uv_{h+1}^{k^i}(s_{h+1}^{k^i}) \leq V_{h+1}^{\star}(s_{h+1}^{k^i})$ for all $k^i \leq k$ and the event $\mathcal{H}$. With this, we have shown that $\overline{Q}_h^{k+1}(s,a) \geq Q_h^{\star}(s,a) \geq \underline{Q}_h^{k+1}(s,a).$ Therefore, by noting that
$$\overline{V}_h^{k+1}(s) =  \min\Big\{ H, \max_{a \in A_h^k(s)} \oq_h^{k+1}(s, a)\Big\} \geq \max_{a \in A_h^k(s)} Q_h^{\star}(s, a) = V_h^{\star}(s)$$
and
$$\underline{V}_h^{k+1}(s) =  \max\Big\{ 0, \max_{a \in A_h^k(s)} \uq_h^{k+1}(s, a)\Big\} \leq \max_{a } Q_h^{\star}(s, a) = V_h^{\star}(s),$$
we complete the proof of the \Cref{optpess} for $k+1$. 

\textbf{Part 2.2: Proof of \Cref{opt2} for $k+1$.}

Next we prove \Cref{opt2} for $k+1$ by induction on $h = H, ..., 1$. 

For $h = H$, we have $h'(k,H) = H+1$. \Cref{opt2} holds in this case since $Q_H^{\star}(s,a) = r_H(s,a) = Q_H^{1,d}(s,a)$ and $Q_H^{1,ud}(s,a) = 0$. Assume that the conclusion holds for $H,...,h+1$. For step $h$, similar to \Cref{d1} and \Cref{discussud} for $k = 1$, we obtain:
\begin{align*}
    &Q_h^{k+1,d}(s,a) =r_h(s,a) + \sum_{s' \in G_{h+1}^{k+1}}Q_{h+1}^{k+1,d}(s',\pi_{h+1}^{\star}(s'))\mathbb{P}\left(s_{h+1}^{k+1} = s'|(s_h^{k+1},a_h^{k+1}) = (s,a)\right)
\end{align*}
and
\begin{align*}
    Q_h^{k+1,ud}(s,a) &= \sum_{s' \notin G_{h+1}^{k+1}}V_{h+1}^{\star}(s')\mathbb{P}\left(s_{h+1}^{k+1} = s'|(s_h^{k+1},a_h^{k+1}) = (s,a)\right)  \\
    &\quad + \sum_{s' \in G_{h+1}^{k+1}}Q_{h+1}^{k+1,ud}(s',\pi_{h+1}^{\star}(s'))\mathbb{P}\left(s_{h+1}^{k+1} = s'|(s_h^{k+1},a_h^{k+1}) = (s,a)\right). 
\end{align*}
By combining these two equations, as in \Cref{final12}, we establish \Cref{opt2} at step \(h\) for \(k+1\), which completes the inductive process and thus proves \Cref{ulqamb}.
\end{proof}
This theorem successfully establishes the optimism and pessimism properties of the $Q$-estimators. Leveraging the remaining arguments in \citet{xu2021fine}, we can recover the same gap-dependent expected regret upper bound presented in \Cref{ambresult}.

\subsection{Result Simplification}
\label{equal}
By adapting the remaining arguments from \citet{xu2021fine}, we can recover the following bound for Refined AMB algorithm:
\begin{equation}
    \label{originambresult}
    O\left(\sum_{h=1}^H\sum_{\Delta_h(s,a)>0}\frac{H^5\log(SAT)}{\Delta_h(s,a)}+\frac{H^5|Z_{\textnormal{mul}}|\log(SAT)}{\Delta_{\textnormal{min}}}+SAH^2\right)
\end{equation}
Define  $Z_{\textnormal{sub}}=\{(s,a,h):\Delta_h(s,a)>0\}$ and recall that
$Z_{\textnormal{opt}}=\{(s,a,h):\Delta_h(s,a)=0\}$ and  
$Z_{\textnormal{mul}}=\{(s,a,h):\Delta_h(s,a)=0,\ |Z_{\textnormal{opt},h}(s)|>1\}$,  
where $Z_{\textnormal{opt},h}(s)=\{a:\Delta_h(s,a)=0\}$. Then we have
\begin{align}
|Z_{\textnormal{sub}}| + |Z_{\textnormal{mul}}|= |Z_{\textnormal{sub}}| + |Z_{\textnormal{opt}}| - (|Z_{\textnormal{opt}}| - |Z_{\textnormal{mul}}|) \nonumber\geq S(A-1)H \nonumber \geq \frac{SAH}{2}, \nonumber
\end{align}
because $|Z_{\textnormal{sub}}| + |Z_{\textnormal{opt}}| = HSA$ and
$$|Z_{\textnormal{opt}}| - |Z_{\textnormal{mul}}| = |Z_{\textnormal{opt}}\setminus Z_{\textnormal{mul}}|
= |\{(s,a,h):\Delta_h(s,a)=0,\ |Z_{\textnormal{opt},h}(s)|=1\}|
\le HS$$
since for each $(s,a,h) \in Z_{\textnormal{opt}}\setminus Z_{\textnormal{mul}}$, we have $|Z_{\textnormal{opt},h}(s)| = 1$, which implies that the optimal action $a$ is unique for each state–step pair $(s,h)$.
Therefore,
\begin{align}
&\sum_{h=1}^H\sum_{\Delta_h(s,a)>0}\frac{H^5\log(SAT)}{\Delta_h(s,a)}
+ \frac{H^5 |Z_{\textnormal{mul}}| \log(SAT)}{\Delta_{\textnormal{min}}} \nonumber\\
&\geq \sum_{h=1}^H\sum_{\Delta_h(s,a)>0} H^4 \log(SAT)
+ H^4 |Z_{\textnormal{mul}}| \log(SAT) \nonumber\\
&= H^4 (|Z_{\textnormal{sub}}| + |Z_{\textnormal{mul}}|)\log(SAT) \nonumber\\
&\geq \frac{SAH^5}{4}, \nonumber
\end{align}
where we used $0<\Delta_h(s,a),\Delta_{\textnormal{min}}\le H$ in the first inequality and $\log(SAT)\ge 1/2$ for $S,T\ge 1$ and $A\ge 2$ in the last inequality.

Thus, the Refined AMB result in \Cref{originambresult} can be equivalently written as
$$O\left(\sum_{h=1}^H\sum_{\Delta_h(s,a)>0}\frac{H^5\log(SAT)}{\Delta_h(s,a)}+\frac{H^5|Z_{\textnormal{mul}}|\log(SAT)}{\Delta_{\textnormal{min}}}\right).$$
\end{document}